\definecolor{codeAnnotation}{RGB}{106,153,85}
\definecolor{citeColor}{RGB}{255,128,0}
\newtheorem{theo}{Theorem}[section]
\newtheorem{assumption}{Assumption}[section]
\newtheorem{lemma}{Lemma}[section]
\newtheorem{corollary}{Corollary}[section]
\newtheorem{definition}{Definition}[section]
\newcommand{\xmark}{\ding{55}}
\begin{document}

\title{Decouple Graph Neural Networks: Train Multiple Simple GNNs Simultaneously Instead of One}

\author{Hongyuan Zhang, Yanan Zhu,  
and Xuelong Li$^*$, \IEEEmembership{~Fellow,~IEEE} \thanks{$^*$ Corresponding author}

\thanks{This work is supported by The National Natural Science Foundation of China (No. 61871470).}

\thanks{Hongyuan Zhang and Yanan Zhu are with the School of Artificial Intelligence, OPtics and ElectroNics (iOPEN), Northwestern Polytechnical University, Xi'an 710072, P.R. China. 
\noindent Hongyuan Zhang and Xuelong Li are also with the Institute of Artificial Intelligence (TeleAI), China Telecom Corp Ltd, 31 Jinrong Street, Beijing 100033, P. R. China. 
}

\thanks{E-mail: hyzhang98@gmail.com, xuelong\_li@ieee.org.}

\thanks{
    \copyright 2022 IEEE.  Personal use of this material is permitted.  Permission from IEEE must be obtained for all other uses, in any current or future media, including reprinting/republishing this material for advertising or promotional purposes, creating new collective works, for resale or redistribution to servers or lists, or reuse of any copyrighted component of this work in other works.
}
\thanks{The source codes are available at \url{https://github.com/hyzhang98/SGNN}.}

}

\markboth{IEEE TRANSACTIONS ON PATTERN ANALYSIS AND MACHINE INTELLIGENCE}{Zhang \MakeLowercase{\textit{et al.}}: 
Decouple Graph Neural Networks: Train Multiple Simple GNNs Simultaneously Instead of One}

\IEEEtitleabstractindextext{
\justifying  
\begin{abstract}
    Graph neural networks (GNN) suffer from severe inefficiency 
    due to the exponential growth of node dependency with the increase of layers. 
    It extremely limits the application of stochastic optimization algorithms 
    so that the training of GNN is usually time-consuming. 
    To address this problem, we propose to decouple a multi-layer GNN as multiple simple modules 
    for more efficient training, which is comprised of classical \textit{forward training} (\textit{FT})
    and designed \textit{backward training} (\textit{BT}). 
    Under the proposed framework, each module can be trained efficiently in FT by stochastic 
    algorithms without distortion of graph information owing to its simplicity. 
    To avoid the only unidirectional information delivery of FT and 
    sufficiently train shallow modules with the deeper ones, 
    we develop a backward training mechanism that makes the former 
    modules perceive the latter modules, 
    inspired by the classical backward propagation algorithm. 
    The backward training introduces the reversed information delivery into 
    the decoupled modules as well as the forward information delivery. 
    To investigate how the decoupling and greedy training affect the representational 
    capacity, 
    we theoretically prove that the error produced by linear modules will 
    not accumulate on unsupervised tasks in most cases. 
    The theoretical and experimental results show that the proposed 
    framework is highly efficient with reasonable performance, 
    which may deserve more investigation. 
\end{abstract}

\begin{IEEEkeywords}
    Graph Neural Network, Backward Training, Efficient Training.
\end{IEEEkeywords}

}

\maketitle

\section{Introduction}

In recent years, neural networks \cite{VGG,ResNet}, due to the impressive performance, have been 
extended to graph data, known as graph neural networks (GNNs) \cite{GNN}. 
As GNNs significantly improve the results of graph tasks, 
it has been extensively investigated from different aspects, such as 
graph convolution network (GCN) \cite{GCN,patchysan}, graph attention networks (GATs) \cite{GAT,GAT-2}, 
spatial-temporal GNN (STGNN) \cite{STGNN}, graph auto-encoder \cite{GAE,GALA},
graph contrastive learning \cite{MultiViewContrastive}, \textit{etc}. 

Except for the variants that originate from different perspectives, 
an important topic is motivated by the well-known inefficiency of GNN. 
In classical neural networks \cite{ResNet}, 
the optimization is usually based on stochastic algorithms with limited  
batch \cite{AdaGrad,Adam} since samples are independent of each other. 
However, the aggregation-like operations defined in \cite{GraphSAGE} 
result in the dependency of each node on its neighbors and the 
amount of dependent nodes for one node increases exponentially 
with the growth of layers, which results in the unexpected increases of batch size. 
Some works are proposed based on neighbor sampling \cite{GraphSAGE,FastGCN,StoGCN,GraphSAINT} 
and graph approximation \cite{Cluster-GCN} to limit the batch size, 
while some methods \cite{SGC,S2GC} attempt to directly apply high-order graph operation 
and sacrifice the most non-linearity. 
The training stability is a problem for neighbor sampling methods \cite{GraphSAGE,FastGCN,GraphSAINT} though VRGCN \cite{StoGCN} has attempted 
to control the variance via improving sampling. 
Note that the required nodes may still grow (slowly) with the increase of depth. 
Cluster-GCN \cite{Cluster-GCN} finds an approximate graph with plenty of 
connected components so that the batch size is strictly upper-bounded. 
The major challenge of these methods is the information missing
during sampling. 
The simplified methods \cite{SGC,S2GC} are efficient but the limited 
non-linearity may be the bottleneck of these methods. 
These methods may incorporate the idea of GIN \cite{GIN} 
to improve the capacity \cite{S2GC}.

To apply stochastic optimization while retaining the exact graph structure, 
we propose a framework, namely stacked graph neural network (SGNN), 
which decouples a multi-layer GNN as multiple simple GNN modules 
and then trains them simultaneously rather than connecting them 
with the increase of the depth. 
Inspired by the backward propagation algorithm, we find that the main 
difference between stacked networks \cite{SAE} and classical networks is 
\textit{no training information propagated from the latter modules to the former ones}. 
The lack of backward information delivery may be the main reason of the 
performance limitation of stacked models. 
The contributions are concluded as: 
\textbf{(1)}
We accordingly propose a backward training strategy to let the former modules 
receive the information from the final loss and latter modules, 
which leads to a cycled training framework to control bias and train shallow 
modules correctly. 
\textbf{(2)}
Under this framework, a multi-layer GNN can be decoupled into multiple simple 
GNNs, named as separable GNNs in this paper, so that every training step 
could use the stochastic optimization without any samplings or changes on graph. 
Therefore, SGNN could take both non-linearity and high efficiency into account.
\textbf{(3)}
We investigate how the decoupling and greedy training affect the representational 
capacity of the linear SGNN. 
It is proved that the error would not accumulate in most cases when the final 
objective is graph reconstruction. 

\begin{figure*}
    \centering
    \includegraphics[width=0.95\linewidth]{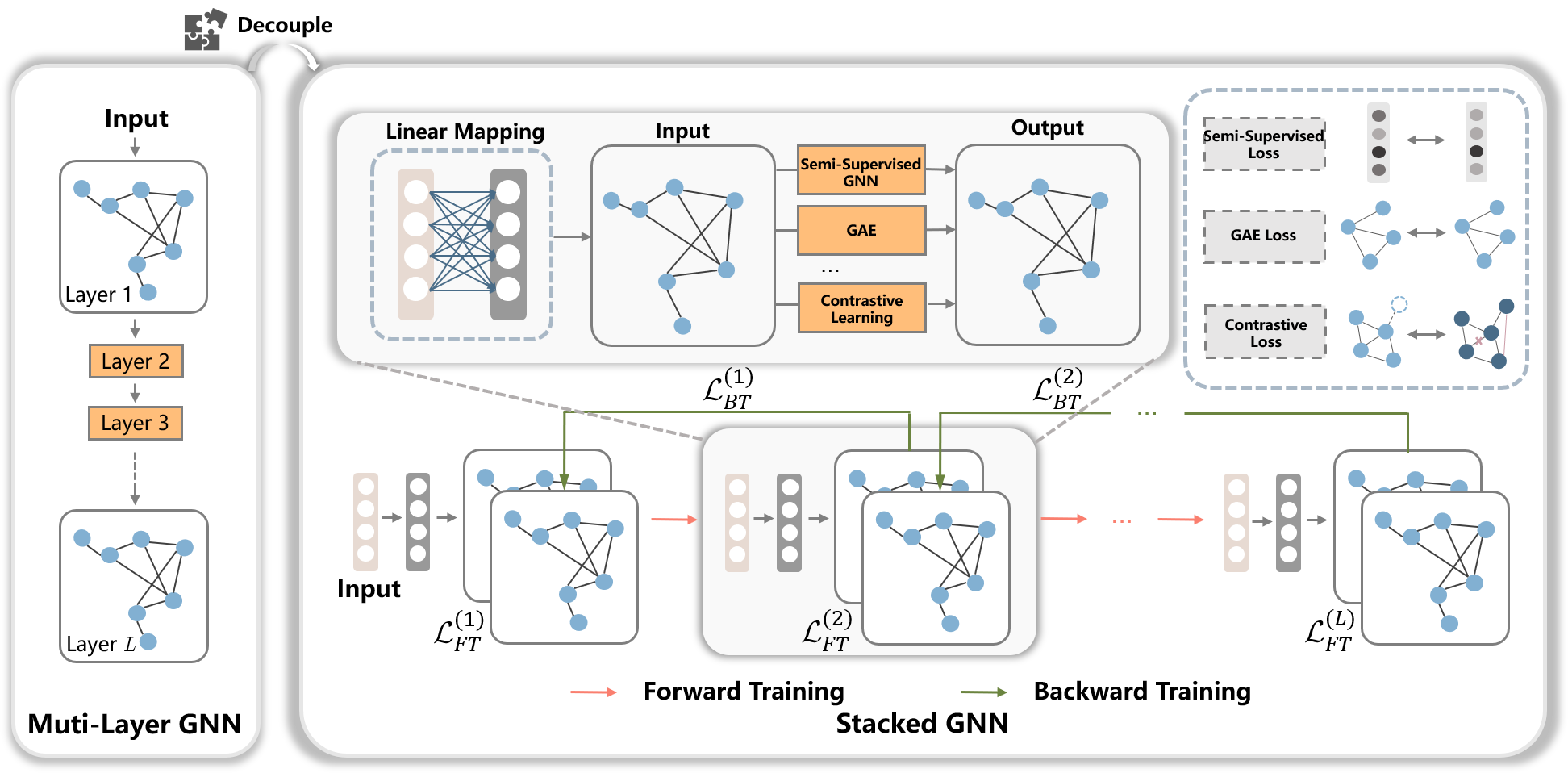}
    \caption{Illustration of a stacked graph neural network decoupled from an $L$-layer GNN. 
    To train each module individually, some loss (\textit{e.g.}, semi-supervised loss, unsupervised loss, contrastive loss) 
    is required and it is denoted by $\mathcal{L}_{FT}^{(t)}$. To let the shallow modules perceive the deeper 
    ones, $\mathcal{M}_t$ passes back the \textit{expected} input features to $\mathcal{M}_{t-1}$ 
    during the backward training. The divergence between the features output by $\mathcal{M}_t$ and the expected 
    features of $\mathcal{M}_{t-1}$ formulates the BT loss $\mathcal{L}_{BT}^{(t)}$. } 
    \label{figure_framework}
\end{figure*}

\section{Background}

\textbf{Graph Neural Networks:}
In the past years, graph neural networks \cite{spectralgcn,patchysan,ChebNet,GCN,GIN,GAT} have attracted more and more attention. 
GNNs are applied to not only graph tasks \cite{CitationDatasets} (\textit{e.g.}, recommend systems \cite{AddedRef-RecommendationSystem}) 
but also other applications \cite{AddedRef-Distillation,AddedRef-Quantization} (\textit{e.g.}, computer vision \cite{GAT-CV}). 
In particular, 
graph convolution network (GCN) \cite{GCN} 
has become an important baseline. 
By introducing self-attention techniques 
\cite{attention}, graph attention 
networks (GAT) \cite{GAT,GAT-2} are proposed and applied to other applications \cite{GAT-APP,GAT-CV}. 
As \cite{oversmooth} claimed that GNNs suffer from the over-smoothing problem, 
GALA \cite{GALA} develops the graph sharpening and ResGCN \cite{ResGCN} attempts to 
designs a deeper architecture.
The theoretical works \cite{oversmooth,Exponential,BenefitsOfDepth} 
have different views towards the depth of GNNs. 
Some works \cite{oversmooth,Exponential} claimed that the expressive power of GNN decreases with the increase 
of layers, while the others argue that the assumptions in \cite{Exponential} 
may not hold and deeper GNNs have stronger power \cite{BenefitsOfDepth}. 
Moreover, some works \cite{WL-test-GNN,GIN} investigate the expressive capability by 
showing the connection between Weisfeiler-Lehman test \cite{WL-test} and GNNs. 
Nevertheless, \textit{most of them neglect the inefficiency problem of GNNs}.

\textbf{Efficient Graph Neural Networks:}
To accelerate the optimization through batch gradient descent to GNN without 
too much deviation, 
several models \cite{GraphSAGE,FastGCN,StoGCN,GraphSAINT} propose to sample data points according to graph topology. 
These models propose different sampling strategies to obtain stable results. 
GraphSAGE \cite{GraphSAGE} produces a subgraph with limited neighbors for 
each node while FastGCN \cite{FastGCN} samples fixed nodes for each layer 
with the importance sampling. 
The variance of sampling is further controlled in \cite{StoGCN}. 
Cluster-GCN \cite{Cluster-GCN} aims to generate an approximate graph 
with plenty of connected components so that each component can be used 
as a batch per step. 
AnchorGAE \cite{AnchorGAE} proposes to accelerate the graph operation by introducing the anchors 
to convert the original graph into a bipartite one 
so that the complexity can be reduced to $\mathcal{O}(n)$ compared with the existing models \cite{AdaGAE,ProjectedClustering-TIP}. 
SGC \cite{SGC} simplifies GCN by setting all activations of middle layers 
as linear functions and SSGC \cite{S2GC} further improves it. 
In summary, \textit{SGNN proposed in this paper retains the non-linearity and requires 
no node sampling or sub-graph sampling.} 
L2-GCN \cite{L2-GCN} attempts to extend the idea of the classical stacked auto-encoder 
into the popular GCN while DGL-GNN \cite{DGL-GNN} further develops a parallel version. 
They both fail to train all GNN modules jointly 
but \textit{SGNN firstly offers a novel framework to train them 
like training layers in a conventional neural network.}

\textbf{Connections to Existing Models:} 
Stacked Auto-Encoder (\textit{SAE}) \cite{SAE} is a model applied to the pre-training of neural networks. 
It trains the current two-layer auto-encoder \cite{AE} and then feeds the latent 
features output by the middle layer to the next auto-encoder. 
The model is often used as a pre-training model instead of a formal model. 
MGAE \cite{MGAE} is an extension of SAE and its fundamental module is graph auto-encoder \cite{GAE}. 
The main difference compared with the proposed model is whether each module 
could be perceived by modules from both forward and backward directions. 
The stack paradigm is similar to the classical boosting models \cite{AdaBoost,GBDT,XGBoost} while 
some works \cite{BoostingNN,EnsembleNN} also investigated the boosting algorithm of neural networks. 
In recent years, some boosting GNN models \cite{GBDTGNN,AdaGNN} are also developed. 
The most boosting algorithms (\textit{e.g.}, \cite{BoostingNN,GBDTGNN}) aim to learn a prediction function gradually 
while the proposed SGNN aims to learn ideal embeddings gradually. 
Note that AdaGCN \cite{AdaGNN} is also trained gradually and the features 
are combined using AdaBoost \cite{AdaBoost}. 
More importantly, \textit{all these boosting methods for GNNs are only 
trained forward and the backward training is missing}. 
Deep neural interface \cite{DNI} proposes to decouple neural networks to 
asynchronously accelerate the computation of gradients. \textit{The decoupling 
is an acceleration trick to compute the gradients of $L$-layer networks}, 
while SGNN proposed in this paper explicitly separates an $L$-layer GNN into 
$L$ simple modules. In other words, \textit{the ultimate goal of SGNN is not to optimize 
an $L$-layer GNN}.

\section{Proposed Method} \label{section_method}
Motivated by SAE \cite{SAE} and the fact that the simplified models \cite{SGC,S2GC}
are highly efficient for GNN, 
we therefore rethink the substantial difference between the stacked networks 
and multi-layer GNNs. 
To sum up, we attempt to answer the following two questions in this paper:
\begin{itemize}
    \item [Q1:] \textit{How to decouple a complex GNN into multiple simple GNNs and train them jointly?}
    \item [Q2:] \textit{How does the decoupling affect the representational capacity and final performance?}
\end{itemize}
We will discuss the first question in this section and then elaborate on another one in Section \ref{section_theo}.

\subsection{Preliminary}
Each decoupled GNN model of the proposed model is named as a module and 
the $t$-th module is denoted by $\mathcal{M}_t$ for simplicity. 
The vector and matrix are denoted by lower-case and upper-case letters in bold, respectively. 
$\|\cdot\|$ represents the Frobenius norm. 
Given a graph, let $\bm A \in \mathbb R^{n \times n}$ be adjacency matrix 
and $\bm X \in \mathbb R^{n \times d}$ be node features. 
A typical GNN layer can be usually defined as 
\begin{equation} \label{eq_graph_conv}
    \bm H = f(\bm A, \bm X, \bm W) = \varphi (\bm P \bm X \bm W), 
\end{equation}
where $\bm W$ is projection coefficient and 
$\bm P = \phi(\bm A)$ is a function of $\bm A$. 
When we discuss each individual module, we assume that $\bm W \in \mathbb{R}^{d \times k}$
for simplicity. 
For example, GCN \cite{GCN} defines $\bm P$ as
$\bm P_{\rm GCN} = \bm D^{-\frac{1}{2}} (\bm A + \bm I) \bm D^{-\frac{1}{2}}$ and 
$\bm D$ is the degree matrix of $\bm A + \bm I$. 
When multiple layers are integrated, the learned representation given by multiple GNN layers can be written as 
\begin{equation}
    \begin{split}
    \bm H & = f_1 \circ f_2 \circ \cdots \circ f_L (\bm A, \bm X, \bm W_1, \ldots, \bm W_L) \\
    & = \varphi_L(\bm P \varphi_{L-1}(\cdots \varphi_1(\bm P \bm X \bm W_1) \cdots) \bm W_L), 
    \end{split}
\end{equation}
where $L$ is the amount of layers. 
Assume that the average number of neighbors is $c$. 
To compute $\bm H$, each sample will need $\mathcal O(c^L)$ extra samples. 
If the depth is large and the graph is connected, 
then all nodes have to be engaged to compute for one node. 
The uncontrolled batch size results in the time-consuming training. 
In vanilla GNNs, the computational complexity is 
$\mathcal O(K L \|\bm A\|_0 \sum _{i=0}^{m-1} d_i d_{i+1})$ on sparse graphs 
where $K$ is the number of iterations and $d_i$ is the dimension of $\bm W_i$. 
For large-scale datasets, both time and space complexity are too expensive.

\subsection{Stacked Graph Neural Networks}
Although the stacked networks usually have more parameters than multi-layer networks, 
which frequently indicates that the stacked networks may be more powerful, 
they only serve as a technique for pre-training. 
Specifically speaking, 
they simply transfer the representations learned by the current network to the 
next one but \textit{no feedback is passed back}. 
It causes the invisibility of the succeeding modules and the final objective. 
As a result of the unreliability of the former modules, 
the stacked model is conventionally used as an unsupervised pre-training model. 

Rethinking the learning process of a network, 
multiple layers are optimized simultaneously by gradient-based methods 
where the gradient is calculated by the well-known backward propagation algorithm \cite{BP}. 
The algorithm consists of forward propagation (FP) and backward propagation (BP). 
FP computes the required values for BP, which can be viewed as an information 
delivery process. 
Note that FP is similar to the training of the stacked networks. 
Specifically, transferring the output of the current module to the next one in the stacked network 
is like the computation of neurons layer by layer during FP.
Inspired by this, we aim to design a BP-like training strategy, namely \textit{backward training (BT)}, 
so that the former modules could be tuned according to the feedback. 
The core idea of our stacked graph neural network (SGNN) is shown in Figure \ref{figure_framework}.

\subsubsection{Separability: Crucial Concept for Efficiency}
Before introducing SGNN in detail, we formally introduce the key concept 
and core motivation of how to accelerate GNN via SGNN. 
\begin{definition} \label{def_separable}
If a GNN model can be formulated as 
$$f(\bm A, \bm X, \bm W) = f_1(f_0 (\bm A, \bm X), \bm W),$$ 
then it is a separable GNN. 
If it can be further formulated as 
$$f(\bm A, \bm X, \bm W) = f_1(f_0 (\bm A, \bm X), \bm W) = g_1 (\bm A, g_0(\bm X, \bm W)),$$ 
then it is a fully-separable GNN. 
\end{definition}
To keep simplicity, define the set of separable GNNs as 
\begin{equation}
    \begin{aligned}
    \mathcal{F}_k = \{& f: \mathbb R^{n \times n} \times \mathbb R^{n \times d} \times \mathbb R^{d \times k} \mapsto \mathbb R^{n \times k} \\ 
    & | f(\bm A, \bm X, \bm W) = f_1(f_0 (\bm A, \bm X), \bm W) \}
    \end{aligned}
\end{equation}
and the set of fully-separable GNNs as 
\begin{equation}
    \begin{aligned}
        \mathcal{F}_k^* = \{& f: \mathbb R^{n \times n} \times \mathbb R^{n \times d} \times \mathbb R^{d \times k} \mapsto \mathbb R | f(\bm A, \bm X, \bm W) \\ 
        & = f_1(f_0 (\bm A, \bm X), \bm W) = g_1 (\bm A, g_0(\bm X, \bm W)) \}.
    \end{aligned}
\end{equation}
Note that most single-layer GNN models are separable. 
For instance, SGC \cite{SGC} is fully-separable where $f_0(\bm A, \bm X) = \bm P^m \bm X$ and $f_1(f_0(\bm A, \bm X), \bm W) = \varphi(f_0(\bm A, \bm X) \cdot \bm W)$, 
while the single-layer GIN \cite{GNN} is separable but not fully-separable since 
$\bm P \cdot {\rm MLP}(\bm X) \neq {\rm MLP}(\bm{P X})$ usually holds. 
JKNet\cite{JKNet} consisting of one layer is also separable but not fully-separable. 
However, a single-layer GAT \cite{GAT} is not separable since the graph 
operation is relevant to $\bm W$. 

\textit{The separable property actually factorizes a GNN model to 2 parts, graph operation $f_0$
and neural operation $f_1$}. 
Since all dependencies among nodes in GNNs are caused by the graph operation,
one can compute $\bm X' = f_0 (\bm A, \bm X)$ once (like \textbf{preprocessing}) in separable GNNs 
and then the GNN is converted into a typical network. 
After computing $\bm X'$, the information contained in graph has been 
passed into $\bm X'$ and the succeeding sampling would not affect 
the topology of graph. 
Therefore, we can obtain a highly efficient GNN model that can be optimized by SGD, 
provided that each module is separable. 
On the other hand, \textit{the fully-separable condition is essential for the backward training 
to pass back the information over multiple modules}. 
Since most single-layer GNNs are separable but not fully-separable, 
we show how to revise separable GNNs to \textbf{introduce the fully-separability}. 

Then, we formally clarify the core idea of SGNN by showing how to handle \textbf{\textit{Q1}}.

\subsubsection{Forward Training (FT)}
\textit{The first challenge} is how to set the training objective for each 
module $\mathcal{M}_t$. 
It is crucial to apply SGNN to both supervised and unsupervised scenes. 
Suppose that we have a separable GNN module $\mathcal{M}$ and
let $\bm H = f(\bm A, \bm X, \bm W)$ be the features learned by the separable GNN module. 
For the unsupervised cases, if $\mathcal{M}_t$ is a GAE, then the loss of FT is formulated as 
\begin{equation} \label{loss_GAE}
    \mathcal{L}_{FT} = \ell (\bm A, \bm X, \bm W) = d(\bm A, \kappa (\bm H)),
\end{equation}
where $d(\cdot, \cdot)$ represents the metric function and $\kappa: \mathbb{R}^{n \times k} \mapsto \mathbb{R}^{n \times n}$ 
is a mapping function. 
For instance, a simple loss introduced by \cite{GAE} is 
$d(\bm A, \kappa(\bm H)) = KL(\bm A \| \sigma(\bm H \bm H^T))$ where
$\sigma(\cdot)$ is the sigmoid function, 
and $KL(\cdot \| \cdot)$ is the Kullback-Leibler divergence. 
The other options include but not limited to symmetric content reconstruction \cite{GALA} and 
graph contrastive learning \cite{MultiViewContrastive}. 
For modules with supervision information, a projection matrix $\bm R \in \mathbb{R}^{k \times c}$ 
is introduced to map the $k$-dimension embedding vector into soft labels with $c$ classes. 
For the node classification, the loss can be simply set as 
\begin{equation} \label{loss_GCN}
    \mathcal{L}_{FT} = KL ( \bm Y \| \textrm{softmax}(\bm H \bm R)) ,
\end{equation}
where $\bm Y \in \mathbb{R}^{n \times c}$ is the supervision information for supervised tasks. 
Note that the above loss is equivalent to the classical softmax regression if $\bm H$ is constant. 
The loss could also be link prediction, graph classification, \textit{etc}. 
Although base modules can utilize diverse losses, we only discuss the situation 
that all modules use the same kind of loss in this paper for simplicity. 

        \begin{algorithm}[h]
            \IncMargin{1em}
            \caption{Procedure of Stacked Graph Neural Networks Composed of $L$ Modules}
            \label{alg_procedure}
            \SetKwInOut{Input}{\textit{Input}}\SetKwInOut{Output}{\textit{Output}}
            \Input{Adjacency matrix $\bm A$, feature matrix $\bm X$, balance coefficient $\eta$, the number of epochs $K$, $L$ separable GNN modules $\{\mathcal{M}_t\}_{t=1}^L$, $\bm H_0 \leftarrow \bm X$.} 
	        \Output{Features output by $\mathcal{M}_L$.}
            \For{$i = 1, \ldots, K$}{
            \textcolor{codeAnnotation}{\textit{\# Forward Train Stacked Graph Neural Networks}} \\
            \For{$t = 1, 2, \ldots, L-1$}{
                Feed the current features to $\mathcal{M}_t$ and reset $\bm U_t$: $\bm X_{t} \leftarrow \bm H_{t-1}$, $\bm U_t \leftarrow \bm I$. \\
                \textcolor{codeAnnotation}{\textit{\# Train with only $\mathcal{L}_{FT}$ at the first forward training}} \\
                ${\bm X_t}' = f_0^{(t)} (\bm A, \bm X_t)$ ~~ \textcolor{codeAnnotation}{\textit{\# Preprocessing for mini-batch algorithms}}\\
                Compute loss $\mathcal{L}^{(t)} \leftarrow \mathcal{L}_{FT}^{(t)}$ if $i==1$ else $\mathcal{L}_{FT}^{(t)} + \eta \mathcal{L}_{BT}^{(t)}$\\
                Train $\mathcal{M}_t$ by optimizing $\min_{\bm W_{t}} \mathcal L^{(t)}$ based on mini-batch algorithms. \\
                Obtain the features: $\bm H_t \leftarrow f^{(t)}_1 ({\bm X_t}', \bm W_t)$. 
            }
            $\bm X_L \leftarrow \bm H_t$, $\bm U_L \leftarrow \bm I$, ${\bm X}_L' = f_0^{(L)}(\bm A, \bm X_L)$. \\ 
            Train $\mathcal{M}_L$ by optimizing $\min_{\bm W_{L}, \bm U_L} \mathcal L_{FT}^{(L)}$ based on mini-batch algorithms. \\
            \textcolor{codeAnnotation}{\textit{\# Backward Train Stacked Graph Neural Networks}} \\
            \For{$t = L-1, L-2, \ldots,1$}{
                Compute the expected output feature of $\mathcal{M}_t$: $\bm Z_{t+1} \leftarrow (g^*_{0})_{t+1} (\bm X_{t+1}, \bm U_{t+1})$. \\
                Train $\mathcal{M}_t$ by optimizing $\min_{\bm W_t, \bm U_{t}} \mathcal{L}_{FT}^{(t)} + \eta \mathcal{L}_{BT}^{(t)} $ based on mini-batch algorithms. 
            }
            }
            \DecMargin{1em}
        \end{algorithm}

\subsubsection{Backward Training (BT)}
\textit{The second challenge} is how to train multiple separable GNNs simultaneously 
in order to ensure performance. 
Roughly speaking, the gradients of all layers in classical multi-layer neural networks are computed exactly due to
the repeated delivery of information by FP and BP. 
\textit{BP lets the shallow layers perceive the deep ones through the feedback}.
\textit{In SGNN, the tail modules are invisible to the head ones in FT.} 
We accordingly design the \textit{backward training} (\textbf{BT}) for SGNN. 
To achieve the reverse information delivery, 
the core idea is to \textbf{introduce the fully-separability} via defining \textbf{expected features}. 
For a separable GNN layer serving as a module in SGNN  
\begin{equation} \label{eq_original_separable_GNN}
    \bm H = f(\bm A, \bm X, \bm W) = f_1(f_0(\bm A, \bm X), \bm W), \forall f \in \mathcal{F}_k , 
\end{equation}
where $\bm W \in \mathbb R^{d \times k}$, 
we aim at tuning its input so that the previous module can be aware of 
what kind of representations are required by the current module. However, $\bm X$ is not a learnable parameter. 
A direct scheme is to use a transformation ($\mathbb{R}^d \mapsto \mathbb{R}^d$) to tune the input of the module 
\begin{equation}
    \bm H = f_1(f_0(\bm A, \psi(\bm X \bm U)), \bm W)
\end{equation}
where $\bm U \in \mathbb R^{d \times d}$ is a learnable square matrix 
and $\psi: \mathbb{R}^d \mapsto \mathbb{R}^d$ is a non-parametric function. 
Clearly, $f_0(\bm A, \bm X, \bm U)$ can be regarded as a parametric GNN. 
To retain the high efficiency owing to $f_0$, we therefore further constrain 
the revised $f_0 (\bm A, \psi(\bm X \bm U))$ as a fully-separable GNN, \textit{i.e.}, 
\begin{equation}
    \begin{aligned}
    f_0(\bm A, \bm X \bm U) = f^* (\bm A, \bm X, \bm U) & = f_1^*(f_0^*(\bm A, \bm X), \bm U) \\
    & = g_1^*(\bm A, g_0^*(\bm X, \bm U)), 
    \end{aligned}
\end{equation}
where $f^* \in \mathcal{F}_d^*$. 
Note that if $\bm U$ is fixed as $\bm I$ (or other constant matrices), 
then the modified layer is equivalent to original separable GNN in Eq. (\ref{eq_original_separable_GNN}). 
To sum up, a separable GNN layer $f$ is modified by introducing the fully-separability, which is shown as 
\begin{equation}
    \begin{split}
    & \forall f \in \mathcal{F}_k, \bm H = f(\bm A, \bm X, \bm W) = f_1(f_0(\bm A, \bm X), \bm W) \\ 
    \Rightarrow & \forall f^* \in \mathcal{F}_d^*, \bm H = F(\bm A, \bm X, \bm U, \bm W) = f_1 (f^{*} (\bm A, \bm X, \bm U), \bm W) ,
    \end{split}
\end{equation}
where $F$ represents a function from $\mathbb{R}^{n \times n} \times \mathbb{R}^{n \times d} \times \mathbb{R}^{d \times d} \times \mathbb{R}^{d \times k}$ 
to $\mathbb{R}^{n \times k}$. 
Denote $\bm Z = g_{0}^* (\bm X, \bm U)$ 
and $\bm Z$ is the \textit{expected features}. 
Specifically, let $\bm Z_{t}$ be the learned expected features 
during the backward training of $\mathcal{M}_t$ 
and it serves as the expected input of $\mathcal{M}_t$ from $\mathcal{M}_{t-1}$. 
In the forward training, the delivery of information is based on the learned features $\bm H_t$, 
and $\bm Z_t$ plays the similar role in the backward training. 
The loss of backward training attempts to shrink the difference between 
the output feature $\bm H_t$ of $\mathcal{M}_t$ and expected input $\bm Z_{t+1}$ of $\mathcal{M}_{t+1}$, 
\begin{equation}
    \mathcal L_{BT}^{(t)} = d(\bm H_t, \bm Z_{t+1}) = d(F_{t}(\bm A, \bm X_t, \bm U_t, \bm W_t), \bm Z_{t+1}). 
\end{equation}
Note that $\mathcal L_{BT}$ is only activated after the first forward training 
leading to the final loss of $\mathcal{M}_t$ as 
\begin{equation}
    \mathcal{L}^{(t)} = \mathcal{L}_{FT}^{(t)} + \eta \mathcal{L}_{BT}^{(t)}, 
\end{equation}
and it is updated during each backward training. 
The introduction of $\bm Z$ will not limit the application of stochastic 
optimization since the expected features can also be sampled at each iteration 
without restrictions. 
The procedure is summarized as Algorithm \ref{alg_procedure}. 

Remark that $\bm U_t$ remains as the identity matrix during FT. 
This setting leads to each forward computation across $L$ base modules being equivalent 
to a forward propagation $L$-layer GNN. 
In other words, an SGNN with $L$ modules can be regarded as a decomposition of 
an $L$-layer GNN. 
One may concern that why not to learn $\bm U_t$ and $\bm W_t$ together in FT. In this case, 
we prefer to use $\bm U_t$ only for learning the expected features of 
$\mathcal{M}_{t+1}$ and the capability improvement from the co-learning $\bm U_t$ in 
FT could also be implemented by $\bm W$, which is equivalent to use GIN \cite{GIN} as base modules. 

\begin{table*}
    \centering
    \renewcommand\arraystretch{1.1}
    \caption{Comparison of different efficient GNN models. 
    ``\# Activations'' represents the number of activation functions 
    that can be used for each model at most. Preprocessing complexity is the required computational complexity before 
    applying mini-batch gradient descent. For time complexity, we report the computational complexity per update. 
    For simplicity, we assume that the original features and hidden features are all $d$-dimension. 
    For sampling-based methods, $r$ represents the number of sampled nodes. 
    $\|\bm A_{\mathcal{B}}\|_0$ represents the average node number of 
    the partitioned sub-graphs. }
    \label{table_complexity}
    \begin{tabular}{l c c c c c c}
        \hline \toprule
         & GraphSAGE & FastGCN  & Cluster-GCN & SGC & S$^2$GC & SGNN\\
        \hline
        \hline
        Separability & \xmark & \xmark & \xmark & \checkmark & \checkmark & \checkmark \\
        Sampling &  \checkmark & \checkmark & \checkmark & \xmark & \xmark & \xmark \\
        \# Activations & $L$& $L$ & $L$ & 1 & 1 & $L$ \\
        Preprocessing complexity & 0 & $\mathcal{O}(\|\bm A\|_0)$ & $\mathcal{O}(n)$ & $\mathcal{O}(L \|\bm A\|_0 d)$ & $\mathcal{O}(L(L+1)\|\bm A\|_0 d+ n d)$ & $\mathcal{O}(L K \|\bm A\|_0 d)$ \\
        Time complexity per update & $\mathcal{O}(r^L m d^2)$ & $\mathcal{O}(r L m d^2)$ & $\mathcal{O}(\|\bm A_{\mathcal{B}}\|_0 L d^2)$ & $\mathcal{O}(m d^2)$ & $\mathcal{O}(m d^2)$ & $\mathcal{O}(m d^2 )$\\

        \bottomrule\hline 
    \end{tabular}
\end{table*}

\subsection{Complexity} \label{appendix_complexity}
As each base module is assumed as a separable GNN, both FT and BT of $\mathcal{M}_t$ can 
be divided into two steps, the preprocessing step for graph operation and the training step 
for parameters learning. 
Denote the output dimension of $\mathcal{M}_t$
as $d_t$ and the dimension of original content feature as $d_0 = d$.
The preprocessing to compute ${\bm X}_t' = f_0(\bm A, \bm X_t)$ requires 
$\mathcal{O}(\|\bm A\|_0 d_t)$ cost. 
Suppose that each module is trained $E$ iterations and the batch size is set as $m$. 
Then the computation cost of the training step is $\mathcal{O}(E m d_{t-1} d_t)$. 
Note that only the GNN mapping is considered and the computation of the loss 
is ignored. Overall, the computational complexity of an SGNN with $L$ modules 
is approximately 
$\mathcal{O}(K\|\bm A\|_0 \sum_{t=0}^{L-1} d_t + E m \sum_{t=1}^{L} d_{t-1} d_{t})$.
Remark that the graph is only used once during every epoch and no sampling 
is processed on the graph such that the graph structure is completely retained 
which is unavailable in the existing fast GNNs. 
The coefficient of $\|\bm A\|_0$ is only $K \sum_{t=1}^{L-1} d_t$. 
The space complexity is only $\mathcal{O}(\|\bm A\|_0 + m d_{t-1} d_t)$. 
Therefore, the growth of graph scale will not affect the efficiency of SGNN. 
Due to the efficiency, most experiments of SGNN can be conducted on a PC with an NVIDIA 1660 (6GB) and 16GB RAM. 
To better clarify the advantages of the proposed SGNN, 
we summarize the characteristics of diverse efficient GNNs in Table \ref{table_complexity}.

\begin{figure*}[t]
    \centering
    \subcaptionbox{Cora (clustering) \label{figure_time_cora} }{
        \includegraphics[width=0.22\linewidth]{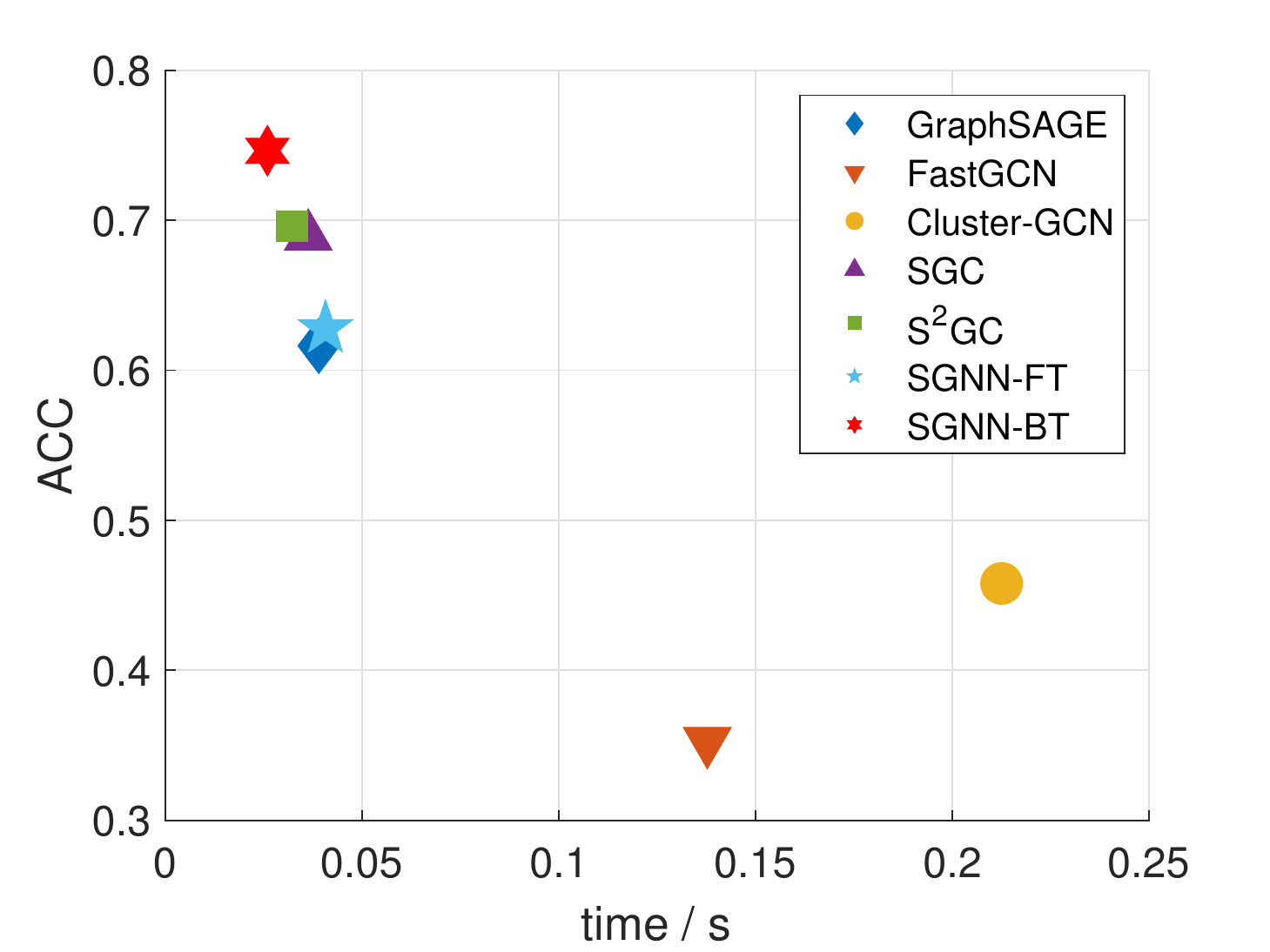}
    }
    \subcaptionbox{Citeseer (clustering) \label{figure_time_citeseer} }{
        \includegraphics[width=0.22\linewidth]{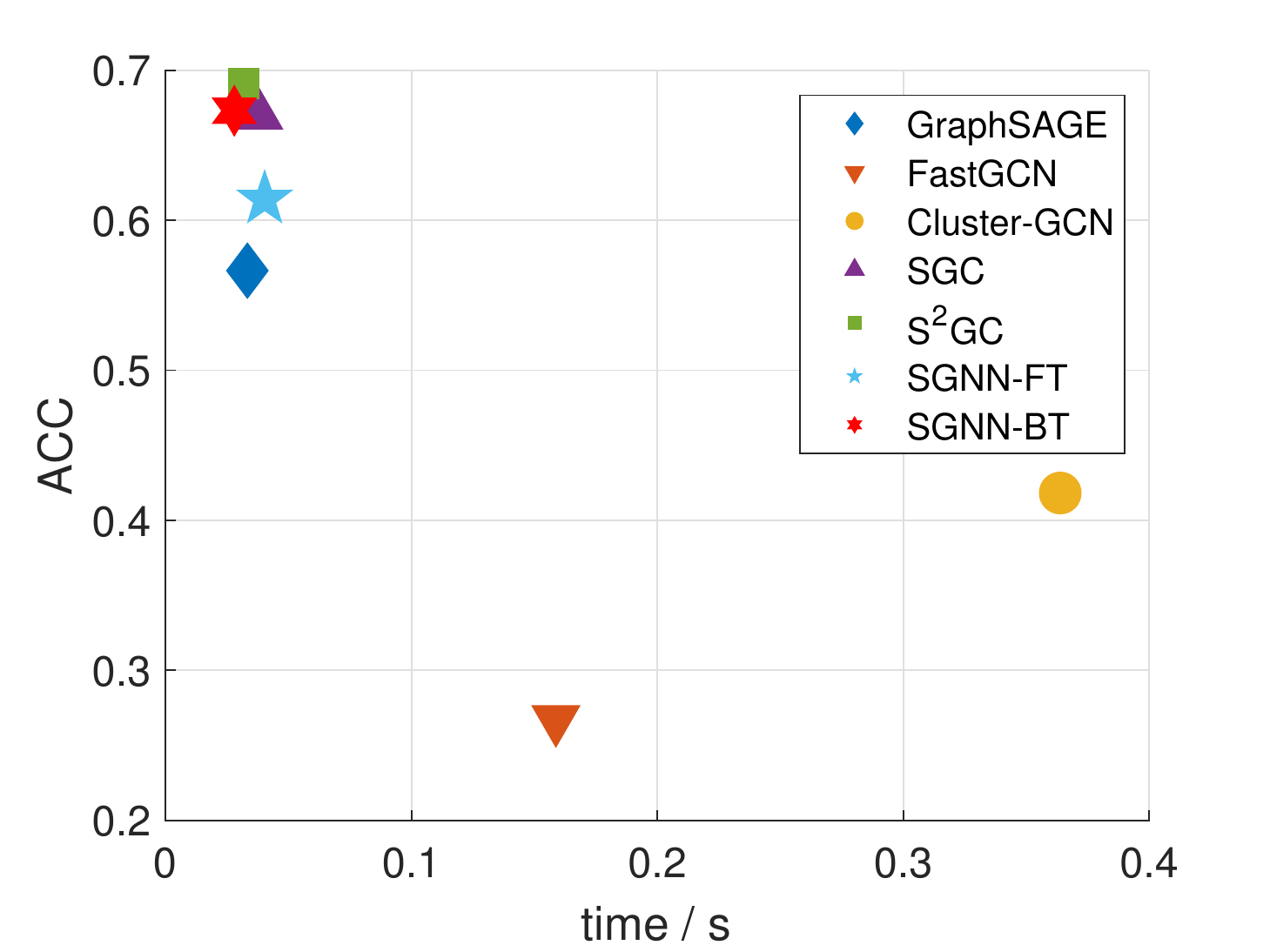}
    }
    \subcaptionbox{Pubmed (clustering) \label{figure_time_pubmed} }{
        \includegraphics[width=0.22\linewidth]{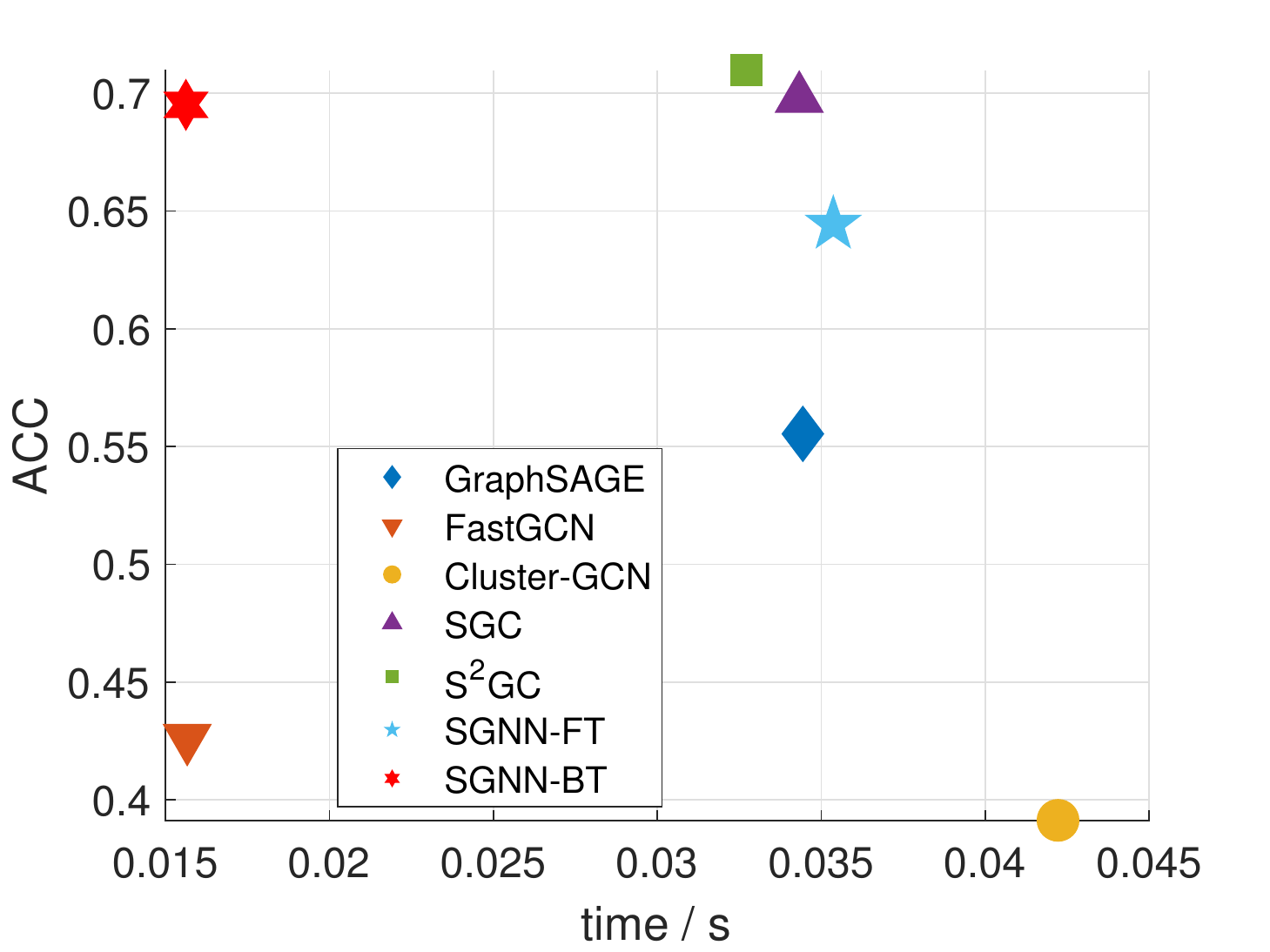}
    }
    \subcaptionbox{Reddit (clustering) \label{figure_time_reddit} }{
        \includegraphics[width=0.22\linewidth]{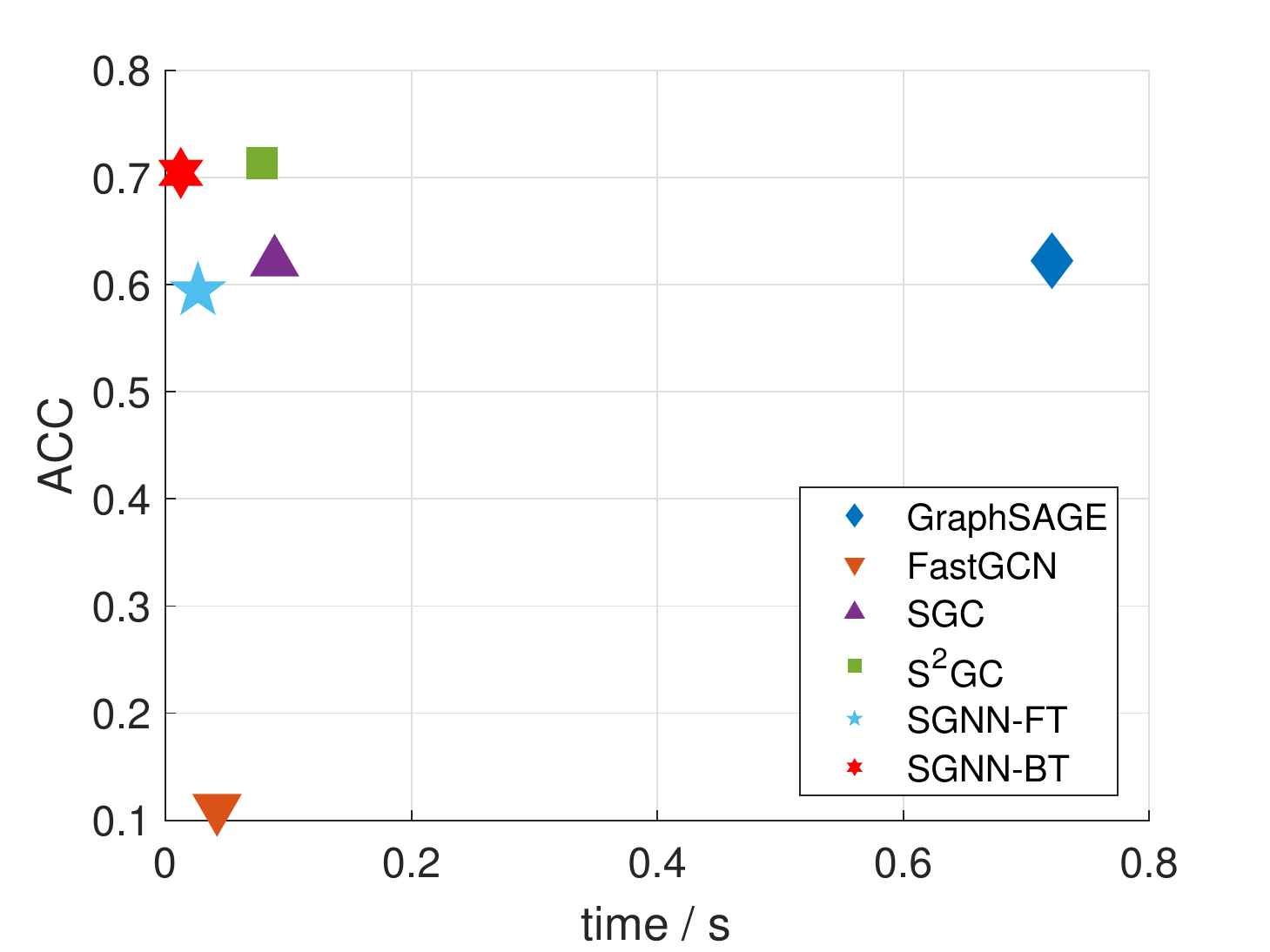}
    }

    \subcaptionbox{Cora (classification) \label{figure_time_cora_classification} }{
        \includegraphics[width=0.22\linewidth]{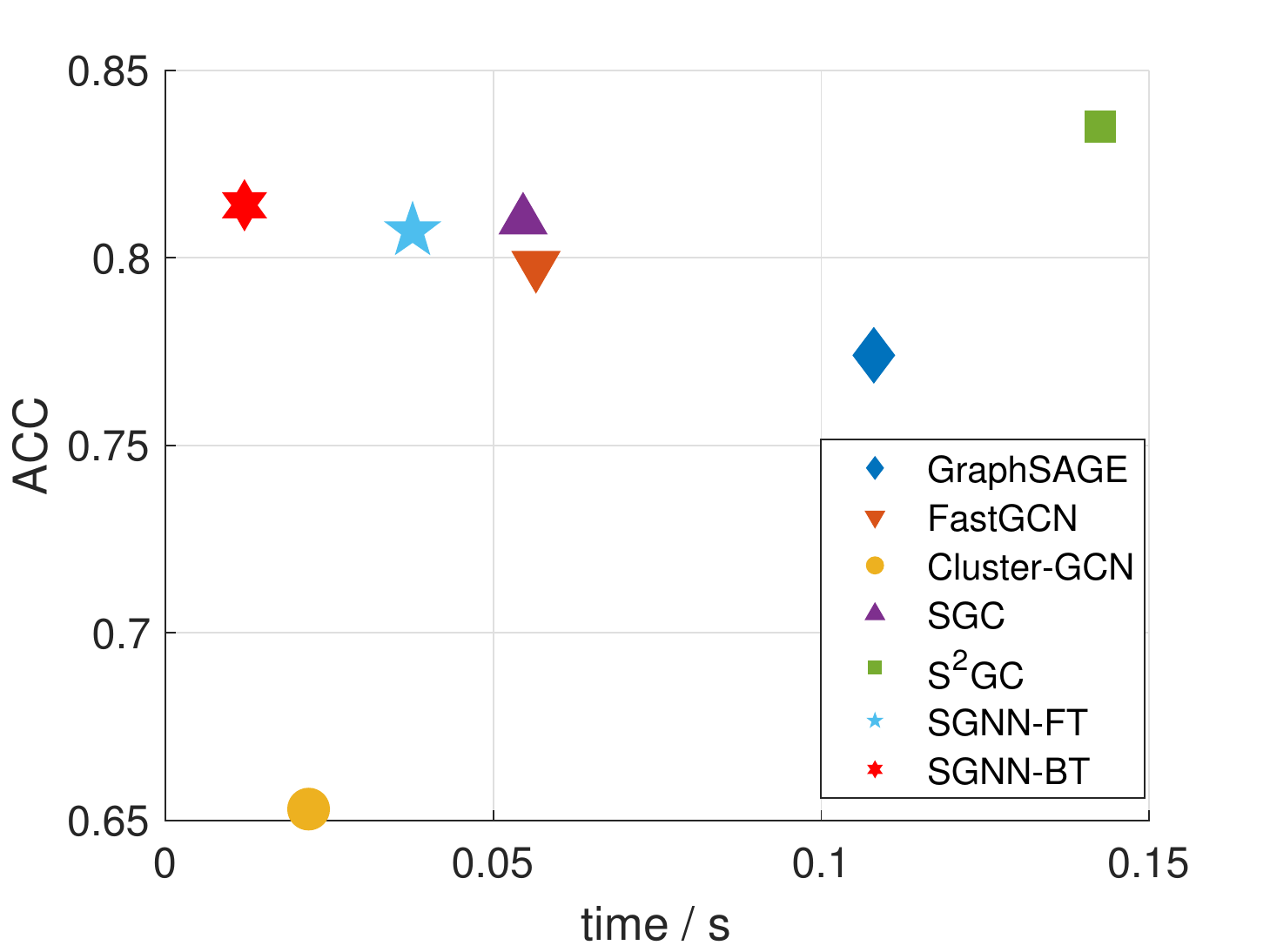}
    }
    \subcaptionbox{Citeseer (classification) \label{figure_time_citeseer_classification} }{
        \includegraphics[width=0.22\linewidth]{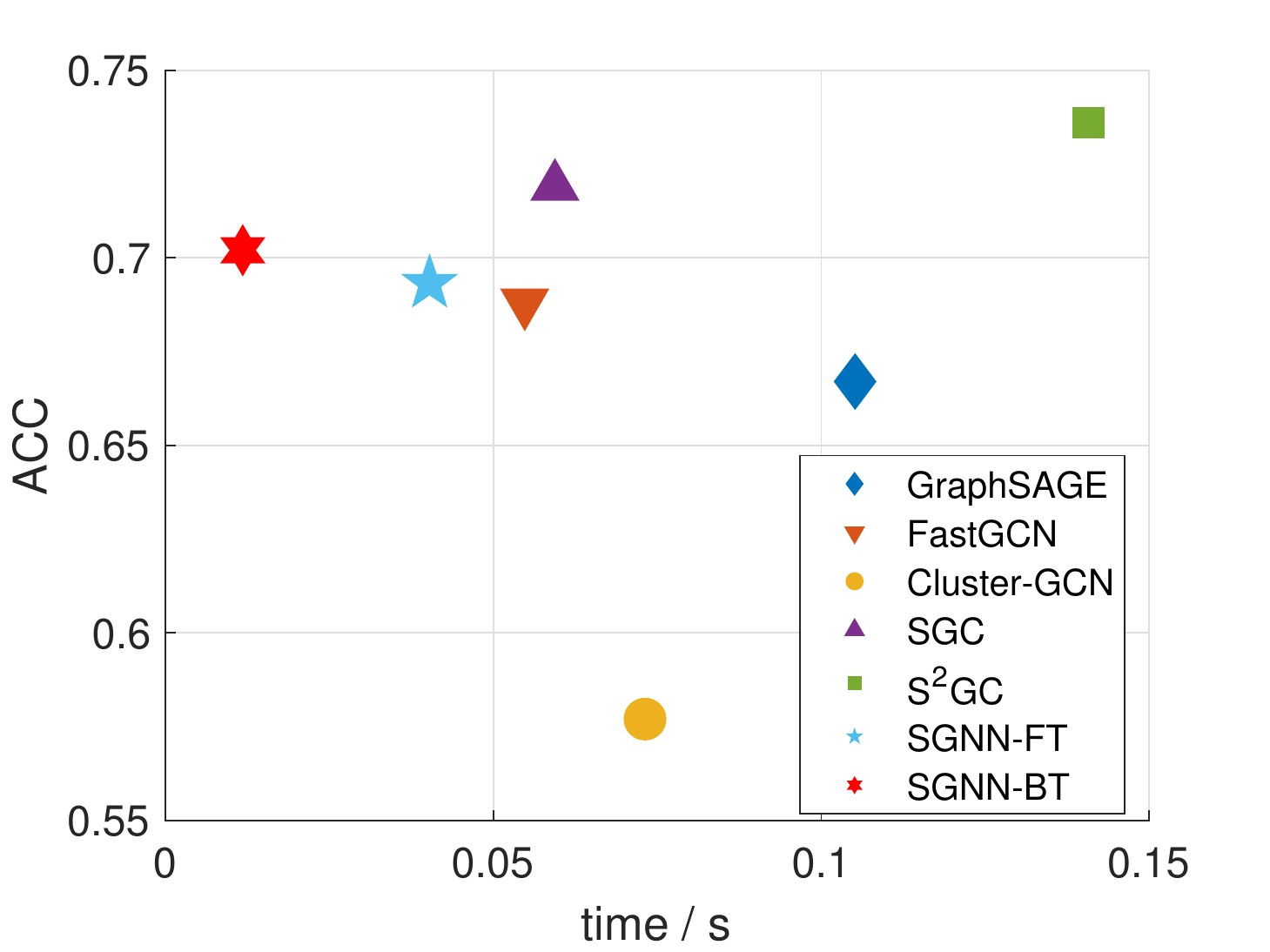}
    }
    \subcaptionbox{Pubmed (classification) \label{figure_time_pubmed_classification} }{
        \includegraphics[width=0.22\linewidth]{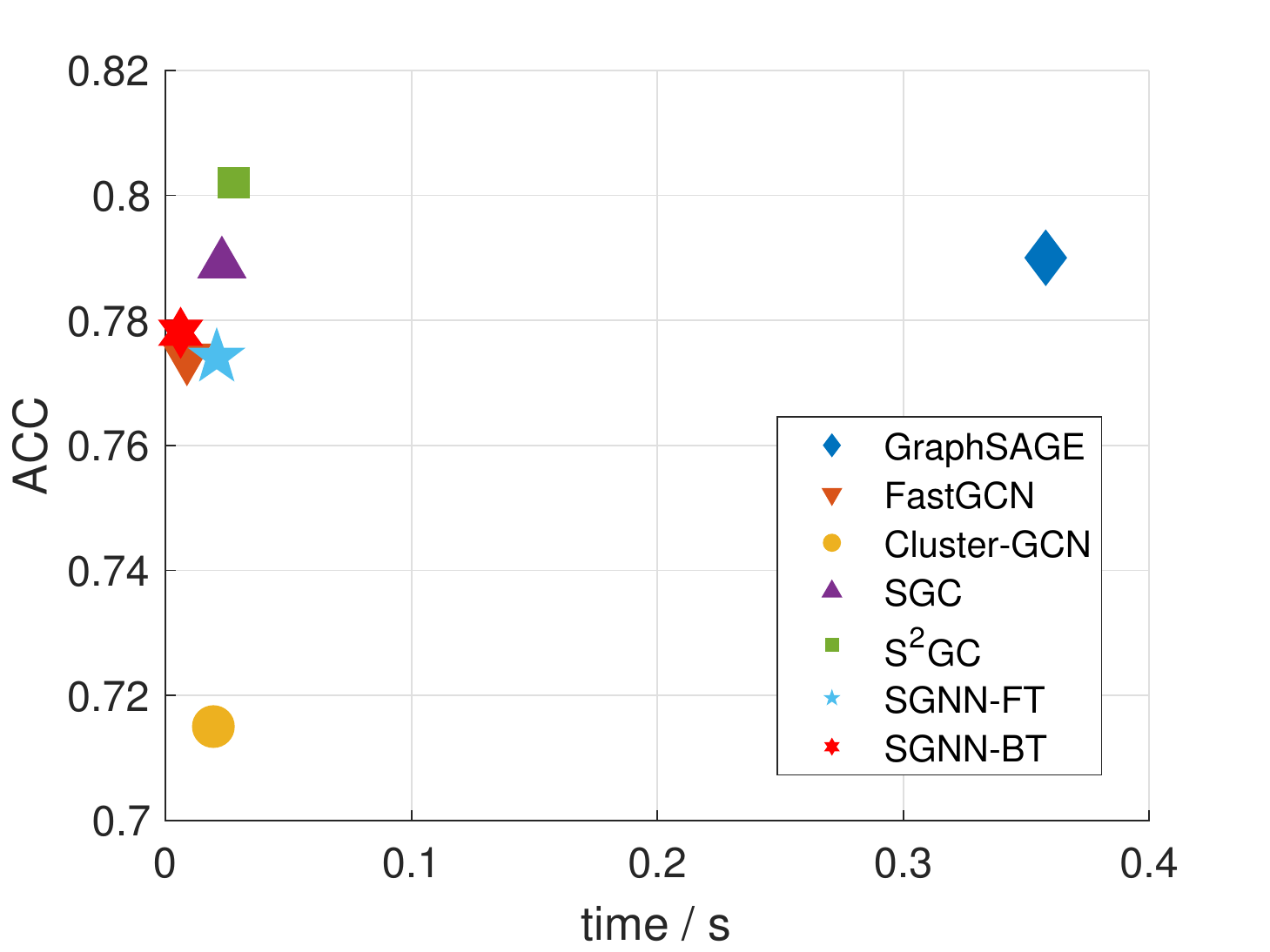}
    }
    \subcaptionbox{Reddit (classification) \label{figure_time_reddit_classification} }{
        \includegraphics[width=0.22\linewidth]{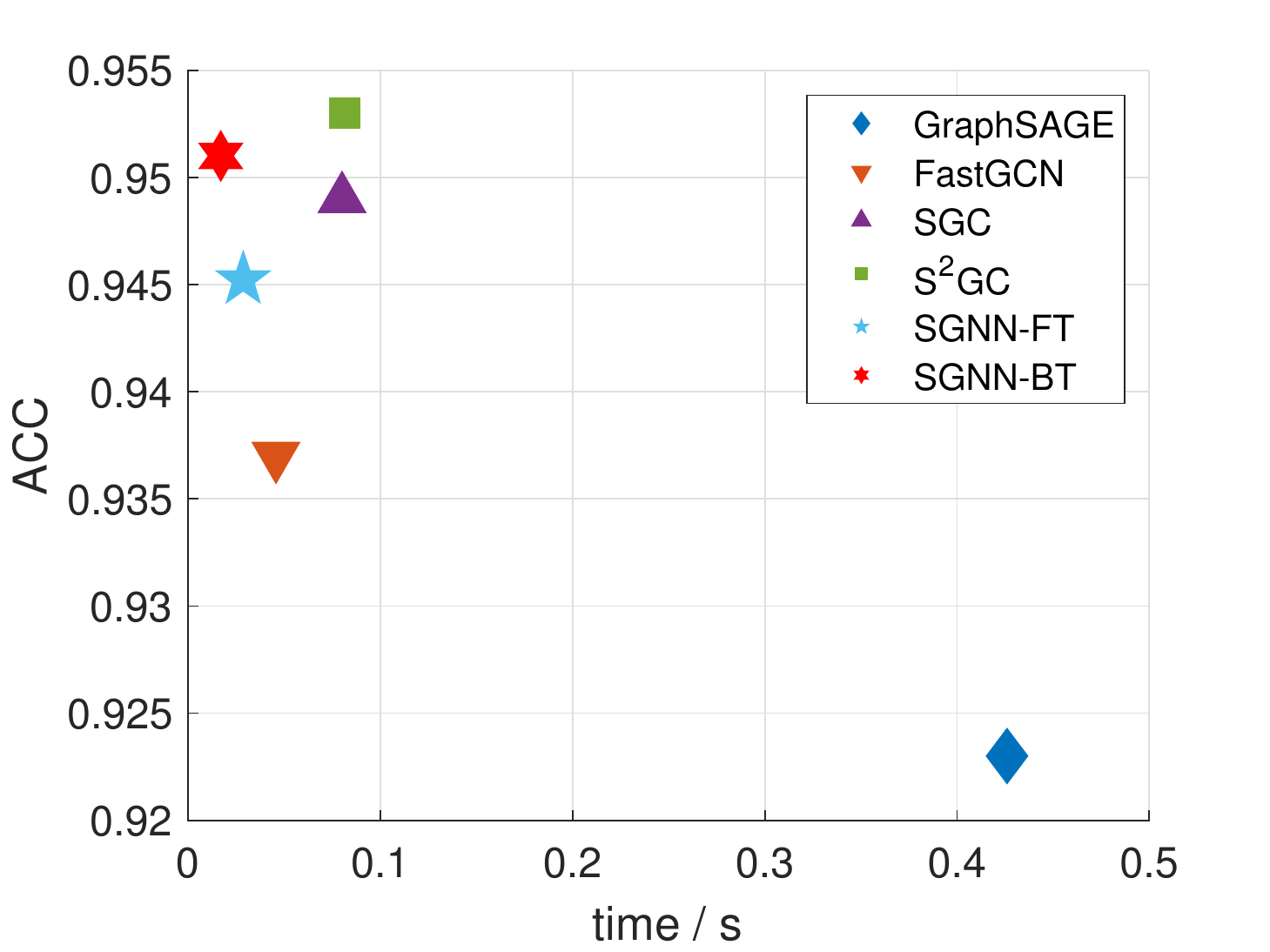}
    }
    \caption{Performance and training efficiency of several scalable GNNs. 
    The efficiency metric is computed by ``Consuming Time / \# Iterations''.  
    The consuming time begins from loading data into RAM. 
    The first line shows the result of node clustering 
    and the second line is the result of node classification. 
    Note that SGC is slower than SGNN since SGNN updates parameters more times 
    than SGC per graph preprocessing owing to the design of BT. 
    }
    \label{figure_time}
\end{figure*}

\section{Theoretical Analysis} \label{section_theo}

To answer \textbf{\textit{Q2}} raised in the beginning of Section \ref{section_method}, 
we discuss the impact of the decoupling in this section. 
Intuitively speaking, if an $L$-layer GNN achieves satisfactory results, 
then there exists $\{\bm W_t\}_{t=1}^L$ such that an SGNN with $L$ modules 
could achieve the same results. 
However, each $\mathcal{M}_t$ is trained greedily according to the forward 
training loss $\mathcal{L}_{FT}$, while middle layers of a multi-layer GNN 
are trained according to the same objective. 
The major concern is whether the embedding learned by a greedy 
strategy leads to an irreversible deviation in the forward training. 

In this section, we investigate the possible side effects 
on a specific SGNN comprised of 
unsupervised modules defined in Eq. (\ref{loss_GAE}) with linear activations. 
The conclusion is not apparent since simply setting $\bm W$ as 
an identity matrix does not prove it for GNN due to the existence of $\bm P$. 
Remark that a basic premise is that the previous module has achieved a reasonable result. 

Given a linear separable-GNN module $\mathcal{M}$ which is defined as $\bm H = \bm{P X W} \in \mathbb{R}^{n \times k}$, 
suppose that the forward training uses the reconstruction error $\|\bm P - \bm H \bm H^T\|$ as $\mathcal{L}_{FT}$. 
We first introduce the matrix angle to better understand whether the preconditions 
of Theorem \ref{theo_GAE} are practicable.

\begin{definition}
    Given two matrices $\bm B_1, \bm B_2 \in \mathbb{R}^{n \times n}$, we define the matrix angle as 
    $
    \theta(\bm B_1, \bm B_2) = \arccos \langle \bm B_1, \bm B_2 \rangle / (\|\bm B_1\| \cdot \|\bm B_2\|)
    $. 
\end{definition}
Before elaborating on theorems, we introduce the following assumption, 
which separates the discussions into two cases. 
\begin{assumption} \label{assumption_commute}
    $\bm X \bm X^T$ does not share the same eigenspace with $\bm P - \bm X \bm X^T$. 
\end{assumption}
Note that the above assumption is weak and frequently holds in most cases. 
For simplicity, $\bm U_r \in \mathbb R^{n \times r}$ ($r \in \mathbb{N}_+$) is the eigenvectors associated with $r$ leading eigenvalues. 
Under this assumption, 
we find that the error of $\mathcal{M}$ is upper-bounded by $\varepsilon$. 
\begin{theo} \label{theo_GAE}
    Let $\delta = 1- \cos(\theta_*/2)$ and $\theta_* = \theta ((\bm P - \bm X \bm X^T) \bm Q, \bm Q (\bm P - \bm X \bm X^T))$ 
    where $\bm Q = \bm U_o \bm U_o^T - \bm I / 2$ and $o = \min({\rm rank}(\bm X), k)$. 
    Under Assumption \ref{assumption_commute}, 
    if $\|\bm P - \bm H \bm H^T\| = \varepsilon \leq \mathcal{O}(\delta)$ and $\sigma_* \leq \mathcal{O}(\sqrt{\delta})$ 
    where $\sigma_*$ is the $(o+1)$-th largest singular value of $\bm X$, 
    then there exists $\bm W \in \mathbb{R}^{d \times k}$
    so that $\|\bm P - \bm H \bm H^T\| \leq \varepsilon$. 
    In other words, if $\varepsilon$ is small enough, then $\bm H \bm H^T$ could be 
    a better approximation than $\bm X \bm X^T$.
\end{theo}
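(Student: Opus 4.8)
\emph{Interpretation.} I read the first occurrence of $\|\bm P-\bm H\bm H^T\|$ in the hypothesis as $\|\bm P-\bm X\bm X^T\|$: the quantity $\varepsilon$ is the reconstruction error already achieved by the \emph{input} features $\bm X$ of the module (which for $\mathcal M_t$ is the output of $\mathcal M_{t-1}$), and the assertion is that a linear GAE module can always match or beat it, so decoupling does not let the error grow. Throughout I use that $\bm P$ is symmetric (as for $\bm P_{\rm GCN}$), write $\bm E:=\bm P-\bm X\bm X^T$ with $\|\bm E\|=\varepsilon$, and set $\rho={\rm rank}(\bm X)$, $o=\min(\rho,k)$. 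The plan is to produce one explicit $\bm W$ and bound the residual. First I would reparametrize the feasible outputs: with a thin SVD $\bm X=\bm U_\rho\bm\Sigma_\rho\bm V_\rho^T$, the map $\bm W\mapsto\bm{XW}$ has range $\{\bm U_\rho\bm B:\bm B\in\mathbb R^{\rho\times k}\}$, so every attainable $\bm H\bm H^T$ equals $\bm P\bm U_\rho\bm C\bm U_\rho^T\bm P$ with $\bm C\succeq\bm 0$ and ${\rm rank}(\bm C)\le o$; it therefore suffices to find one admissible $\bm C$ with $\|\bm P-\bm P\bm U_\rho\bm C\bm U_\rho^T\bm P\|\le\varepsilon$. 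Expanding the square, this target is equivalent to the inner‑product inequality
\[
\big\langle\bm E,\ \bm H\bm H^T-\bm X\bm X^T\big\rangle\ \ge\ \tfrac12\,\big\|\bm H\bm H^T-\bm X\bm X^T\big\|^2 ,
\]
i.e.\ the correction $\bm\Delta:=\bm H\bm H^T-\bm X\bm X^T$ must be sufficiently aligned with the error direction $\bm E$.

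Next I would construct $\bm C$ so that $\bm\Delta$ moves $\bm X\bm X^T$ toward $\bm P$ as far as the constraints (PSD, rank $\le o$, column space in ${\rm col}(\bm X)$, and the $\bm P$‑sandwich form) permit. Concretely, take $\bm C=(\bm U_\rho^T\bm P\bm U_\rho)^{+}\,\bm U_\rho^T\bm M\,\bm U_\rho\,(\bm U_\rho^T\bm P\bm U_\rho)^{+}$ with $\bm M$ the top‑$o$ truncation $\bm X_o\bm X_o^T:=\bm U_o\bm\Sigma_o^2\bm U_o^T$ of $\bm X\bm X^T$ plus a small multiple of the top‑$o$‑compressed $\bm E$, truncating the rank if necessary; the outer pseudoinverse ``undoes'' the two copies of $\bm P$. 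This yields the decomposition
\[
\bm P-\bm H\bm H^T=\underbrace{(\bm P-\bm X\bm X^T)}_{\|\cdot\|=\varepsilon}+\underbrace{(\bm X\bm X^T-\bm X_o\bm X_o^T)}_{\text{rank truncation}}+\underbrace{\big(\bm X_o\bm X_o^T-\bm P\bm U_\rho\bm C\bm U_\rho^T\bm P\big)}_{\text{commutation defect}} .
\]
The truncation term equals $\sum_{i>o}\sigma_i^2\bm u_i\bm u_i^T$, so its norm is at most $\sqrt{\rho-o}\,\sigma_*^2$ (and is $0$ when $\rho\le k$, since then $o=\rho$), which the hypothesis $\sigma_*\le\mathcal{O}(\sqrt\delta)$ forces to be $\mathcal{O}(\delta)$. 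When $\bm P$ and $\bm X\bm X^T$ commute the commutation defect vanishes identically and $\bm\Delta$ becomes (a positive multiple of the top block of) $\bm E$, so the displayed inequality holds with a comfortable margin; the non‑commuting perturbation merely tilts $\bm\Delta$ away from $\bm E$.

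The hard part will be bounding that tilt and then closing the estimate at exactly $\varepsilon$ rather than $\varepsilon+\mathcal{O}(\varepsilon)$. Writing $\bm P=\bm X\bm X^T+\bm E$ and $\bm Q=\bm U_o\bm U_o^T-\tfrac12\bm I$ — so that $\bm Q^2=\tfrac14\bm I$, i.e.\ $2\bm Q$ is the reflection across the top‑$o$ subspace — one expands the commutation defect and $\bm\Delta$ and identifies the misaligned part with the commutator‑type quantity $\|\bm E\bm Q\|\,\|\bm Q\bm E\|-\langle\bm E\bm Q,\bm Q\bm E\rangle=(1-\cos\theta_*)\,\|\bm E\bm Q\|\,\|\bm Q\bm E\|$, a half‑angle manipulation converting the factor $1-\cos\theta_*$ into $\delta=1-\cos(\theta_*/2)$. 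Assumption~\ref{assumption_commute} is precisely what guarantees $\delta>0$ (equivalently, $\bm E$ does not commute with the spectral projector of $\bm X\bm X^T$), so the hypotheses $\varepsilon\le\mathcal{O}(\delta)$ and $\sigma_*\le\mathcal{O}(\sqrt\delta)$ are not vacuous and make, respectively, the second‑order correction term ($\mathcal{O}(\varepsilon^2/\delta)$) and the truncation term ($\mathcal{O}(\delta)$) small against the first‑order gain $\langle\bm E,\bm\Delta\rangle$, which is of order $\varepsilon$; calibrating the implicit constants then secures the inner‑product inequality and hence $\|\bm P-\bm H\bm H^T\|\le\varepsilon$. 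I expect this calibration — tracking how $\theta_*$, the reflection $\bm Q$, and the pseudoinverse of $\bm U_\rho^T\bm P\bm U_\rho$ interact so the lower‑order terms are genuinely dominated — to be the technical core. Finally, the strict‑improvement remark follows because when $\rho>k$ the matrix $\bm X\bm X^T$ is not itself of the admissible form and already pays the tail $\sigma_*$, whereas the constructed $\bm H\bm H^T$ captures the $o$ leading directions of $\bm P$; thus for $\varepsilon$ small the inequality is strict, which is the sense in which $\bm H\bm H^T$ is a better approximation than $\bm X\bm X^T$.
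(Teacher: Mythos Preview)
Your reading of the hypothesis as $\|\bm P-\bm X\bm X^T\|=\varepsilon$ is correct, and you have correctly isolated the structural ingredients: the reflection-like matrix $\bm Q=\bm U_o\bm U_o^T-\tfrac12\bm I$ with $\bm Q^2=\tfrac14\bm I$, the half-angle identity, and the role of Assumption~\ref{assumption_commute} in forcing $\delta>0$. But your construction is substantially more complicated than what is needed, and the inner-product route does not mesh with your own decomposition. With $\bm M=\bm X_o\bm X_o^T$ and commuting $\bm P,\bm X\bm X^T$, your $\bm\Delta$ is minus the truncation tail, not ``a positive multiple of the top block of $\bm E$''; the alignment you need would have to come entirely from the vague ``small multiple of the compressed $\bm E$'' that you never pin down. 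The paper avoids all of this by \emph{not} inverting the $\bm P$-sandwich: it takes the $\bm E$-independent choice $\bm V_o^T\bm W_0=\bm\Sigma_o^{-2}$ (so $\bm X\bm W_0=\bm U_o\bm\Sigma_o^{-1}$), expands $\bm H\bm H^T=\bm P\bm U_o\bm\Sigma_o^{-2}\bm U_o^T\bm P$ with $\bm P=\bm X\bm X^T+\bm E$, and bounds $\|\bm P-\bm H\bm H^T\|$ by the triangle inequality into three pieces: $\|\bm E\bm U_o\bm U_o^T+\bm U_o\bm U_o^T\bm E-\bm E\|$, $\|\bm E\bm U_o\bm\Sigma_o^{-2}\bm U_o^T\bm E\|$, and $\|\bm U_e\bm\Sigma_e^2\bm U_e^T\|$.

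The whole theorem then reduces to one clean computation on the first piece. Writing it as $\|\bm E\bm Q+\bm Q\bm E\|$ and using $\|\bm E\bm Q\|=\|\bm Q\bm E\|=\tfrac12\varepsilon$ (immediate from $\bm Q^2=\tfrac14\bm I$), one gets $\|\bm E\bm Q+\bm Q\bm E\|^2=\tfrac{\varepsilon^2}{2}(1+\cos\theta_*)$, hence $\|\bm E\bm Q+\bm Q\bm E\|=\varepsilon\cos(\theta_*/2)=(1-\delta)\varepsilon$. The second piece is at most $\sqrt{o}\,\varepsilon^2/\sigma_o^2$ (so the quadratic term scales with $1/\sigma_o^2$, not $1/\delta$ as you wrote) and the third is at most $\sqrt{n-o}\,\sigma_*^2$; the hypotheses $\varepsilon\le\mathcal O(\delta)$ and $\sigma_*\le\mathcal O(\sqrt\delta)$ make each of these at most $\tfrac{\delta}{2}\varepsilon$, closing the bound at exactly $\varepsilon$. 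No pseudoinverses of $\bm U_\rho^T\bm P\bm U_\rho$, no inner-product inequality, no calibration step.
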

Specially, if ${\rm rank}(\bm X) \leq k$ or $k=d$, $\sigma_* = 0$ so that $\sigma_* \leq \mathcal{O}(\sqrt{\delta})$ holds. 
From the above theorem, we claim that 
\textit{the error through $\mathcal{M}$ will not accumulate (i.e., bound by $\varepsilon$)} 
provided that the input $\bm X$, the output of the previous modules, is well-trained. 
We further provide an upper-bound of error if Assumption \ref{assumption_commute} 
does not hold. 
The following theorem shows the increasing speed of error is at most linear 
with the tail singular-values. 
\begin{theo} \label{theo_assumption_fail}
    If Assumption \ref{assumption_commute} does not hold, 
    then there exists $\bm W \in \mathbb{R}^{d \times k}$ so that $\|\bm P - \bm H \bm H^T\| \leq \varepsilon + \mathcal{O}(\sigma_*^2)$. 
\end{theo}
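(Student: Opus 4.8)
The plan is to turn the failure of Assumption~\ref{assumption_commute} into a \emph{simultaneous diagonalization}. Since $\bm X\bm X^T$ and $\bm P-\bm X\bm X^T$ are symmetric and share the same eigenspace, they — and hence $\bm P$ — admit a common orthonormal eigenbasis $\{\bm v_i\}_{i=1}^n$. Write $\rho=\mathrm{rank}(\bm X)$, $\bm X\bm X^T\bm v_i=\lambda_i\bm v_i$ and $\bm P\bm v_i=p_i\bm v_i$, ordered so that $\lambda_1\ge\cdots\ge\lambda_\rho>0=\lambda_{\rho+1}=\cdots=\lambda_n$. Then the singular values of $\bm X$ are $\sqrt{\lambda_1},\dots,\sqrt{\lambda_\rho}$, so $\sigma_*^2=\lambda_{o+1}$; the eigenvalues of $\bm P-\bm X\bm X^T$ are $\mu_i:=p_i-\lambda_i$ with $\sum_i\mu_i^2=\|\bm P-\bm X\bm X^T\|^2=\varepsilon^2$, where $\varepsilon$ is the reconstruction error of the incoming features $\bm X$; and $\bm X=\sum_{i\le\rho}\sqrt{\lambda_i}\,\bm v_i\bm r_i^T$ for some orthonormal $\{\bm r_i\}_{i\le\rho}\subset\mathbb R^d$, so that $\bm P\bm X=\sum_{i\le\rho}p_i\sqrt{\lambda_i}\,\bm v_i\bm r_i^T$.

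Next I would restrict the search to the family $\bm W=\sum_{i\in S}\bm r_i\bm a_i^T\in\mathbb R^{d\times k}$, where $S$ collects the $\min\bigl(k,\,|\{i\le\rho:p_i>0\}|\bigr)$ largest values of $\lambda_i$ among indices with $p_i>0$, and $\{\bm a_i\}_{i\in S}\subset\mathbb R^k$ are chosen mutually orthogonal (feasible since $|S|\le k$) with $\|\bm a_i\|^2=1/(\lambda_i p_i)$ (well defined precisely because $\lambda_i>0$ and $p_i>0$ for $i\in S$). A one-line computation then gives $\bm H=\bm P\bm X\bm W=\sum_{i\in S}p_i\sqrt{\lambda_i}\,\bm v_i\bm a_i^T$ and hence $\bm H\bm H^T=\sum_{i\in S}p_i\,\bm v_i\bm v_i^T$, so that $\|\bm P-\bm H\bm H^T\|^2=\sum_{i\notin S}p_i^2$. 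The theorem reduces to bounding $\sum_{i\notin S}p_i^2$ by $(\varepsilon+\mathcal O(\sigma_*^2))^2$.

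To finish, I would split the indices $i\notin S$ into two groups. If $\lambda_i=0$, or if $\lambda_i>0$ but $p_i\le0$, then $p_i^2\le\mu_i^2$ (in the second case $|\mu_i|=\lambda_i+|p_i|\ge|p_i|$); these contribute at most $\sum_i\mu_i^2=\varepsilon^2$. The remaining indices lie in $\{i\le\rho:p_i>0\}\setminus S$, which is nonempty only when $\rho>k$, i.e. $o=k$; by construction each such $i$ has $\lambda_i$-rank exceeding $k$ inside $\{i\le\rho:p_i>0\}$, and since the $(k+1)$-st largest element of a subset never exceeds the $(k+1)$-st largest element $\lambda_{k+1}=\sigma_*^2$ of the full set $\{1,\dots,\rho\}$, we get $0<p_i=\lambda_i+\mu_i\le\sigma_*^2+|\mu_i|$, hence $p_i^2\le\sigma_*^4+2\sigma_*^2|\mu_i|+\mu_i^2$. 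Summing both groups and using that there are at most $n$ terms, $\sum_i|\mu_i|\le\sqrt n\,\varepsilon$, and $\sum_i\mu_i^2=\varepsilon^2$, we obtain $\sum_{i\notin S}p_i^2\le\varepsilon^2+2\sqrt n\,\varepsilon\,\sigma_*^2+n\sigma_*^4=(\varepsilon+\sqrt n\,\sigma_*^2)^2$, i.e. $\|\bm P-\bm H\bm H^T\|\le\varepsilon+\sqrt n\,\sigma_*^2=\varepsilon+\mathcal O(\sigma_*^2)$. If $\mathrm{rank}(\bm X)\le k$ or $k=d$, the second group is empty and $\sigma_*=0$, recovering $\|\bm P-\bm H\bm H^T\|\le\varepsilon$, consistent with the non-accumulation statement.

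I expect the only genuine obstacle to be bookkeeping rather than anything deep: committing to a truly common eigenbasis of $\bm X\bm X^T$ and $\bm P-\bm X\bm X^T$ when eigenvalues are repeated, and verifying that the proposed $\bm W$ is admissible --- in particular that the normalization $\|\bm a_i\|^2=1/(\lambda_i p_i)$ is legal, which is exactly why $S$ must exclude indices with $p_i\le0$. Everything else is the order-statistic comparison $\lambda_i\le\sigma_*^2$ on the dropped coordinates together with a Cauchy--Schwarz estimate for $\sum_i|\mu_i|$.
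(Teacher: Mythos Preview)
Your argument is correct and follows the same high-level strategy as the paper: exploit the simultaneous diagonalizability of $\bm X\bm X^T$ and $\bm E=\bm P-\bm X\bm X^T$ (which is exactly what the failure of Assumption~\ref{assumption_commute} gives, via Lemma~\ref{lemma_commute}), build $\bm W$ from right singular vectors of $\bm X$ so that $\bm H\bm H^T$ reproduces selected eigencomponents of $\bm P$, and bound the residual on the remaining directions by $\varepsilon+\mathcal O(\sigma_*^2)$.

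The execution differs in one meaningful respect. The paper simply keeps the top-$k$ singular directions of $\bm X$ and sets $\bm V_k^T\bm W=(\bm\Sigma_k^3+\bm\Lambda_k\bm\Sigma_k)^\dag(\bm\Sigma_k^2+\bm\Lambda_k)^{1/2}$; the residual is then $\|\bm U_e(\bm\Sigma_e^2+\bm\Lambda_e)\bm U_e^T\|\le\|\bm\Lambda_e\|+\|\bm\Sigma_e^2\|\le\varepsilon+\sqrt{n-k}\,\sigma_*^2$, but this silently requires the diagonal entries $p_i=\sigma_i^2+\mu_i$ to be nonnegative on those $k$ indices so that the square root is real. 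Your construction sidesteps this: by restricting $S$ to indices with $p_i>0$ you make the normalization $\|\bm a_i\|^2=1/(\lambda_ip_i)$ legitimate without any positivity hypothesis on $\bm P$, and you then account for the excluded $p_i\le0$ indices separately via $p_i^2\le\mu_i^2$. The price is a slightly looser constant ($\sqrt{n}$ rather than $\sqrt{n-k}$) and an extra Cauchy--Schwarz step; what you gain is a self-contained argument that does not implicitly assume $\bm P$ is positive semidefinite. The order-statistic comparison you use to force $\lambda_i\le\sigma_*^2$ on $T\setminus S$ is precisely the role played by the paper's fixed top-$k$ block.
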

\begin{corollary}
    Given an SGNN with $L$ linear modules $\{\mathcal{M}_t\}_{t=1}^L$ with $\mathcal L_{FT}^{(t)} = \|\bm P - \bm H_t \bm H_t^T\|$, 
    if $\bm P - \bm H_1 \bm H_1^T$ and $\bm H_1 \bm H_1^T$ share the same eigenspace, 
    then $\mathcal{L}^{(L)}_{FT} \leq \mathcal{L}^{(1)}_{FT} + \sum_{t=1}^{L-1} \mathcal{O}(\sigma_*^2(\bm H_{t}))$. 
\end{corollary}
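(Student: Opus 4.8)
The plan is to read the corollary as an inductive telescoping of Theorem~\ref{theo_assumption_fail}, applied one module at a time. The crucial observation is that, since the modules are trained greedily and sequentially, when $\mathcal{M}_{t+1}$ is trained the matrix $\bm H_t$ is already fixed, and $\mathcal{M}_{t+1}$ is precisely a linear separable-GNN module of the form $\bm H_{t+1} = \bm P \bm H_t \bm W_{t+1}$ with reconstruction target $\bm P$ — i.e.\ exactly the object of Theorems~\ref{theo_GAE}--\ref{theo_assumption_fail} with $\bm X$ replaced by $\bm H_t$ and $\bm H$ replaced by $\bm H_{t+1}$. Hence, provided Theorem~\ref{theo_assumption_fail} is applicable at step $t$, there is a choice of $\bm W_{t+1}$ with $\|\bm P - \bm H_{t+1}\bm H_{t+1}^T\| \le \|\bm P - \bm H_t\bm H_t^T\| + \mathcal{O}(\sigma_*^2(\bm H_t))$, and since $\mathcal{M}_{t+1}$ is trained to minimize $\mathcal{L}_{FT}^{(t+1)} = \|\bm P - \bm H_{t+1}\bm H_{t+1}^T\|$ we get $\mathcal{L}_{FT}^{(t+1)} \le \mathcal{L}_{FT}^{(t)} + \mathcal{O}(\sigma_*^2(\bm H_t))$. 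Summing this inequality for $t = 1,\dots,L-1$ telescopes to $\mathcal{L}^{(L)}_{FT} \le \mathcal{L}^{(1)}_{FT} + \sum_{t=1}^{L-1}\mathcal{O}(\sigma_*^2(\bm H_t))$, which is the claim.

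To make this rigorous I would first pin down the base case: the hypothesis that $\bm P - \bm H_1\bm H_1^T$ and $\bm H_1\bm H_1^T$ share the same eigenspace is exactly the negation of Assumption~\ref{assumption_commute} for the module $\mathcal{M}_2$ (whose input content matrix is $\bm H_1$), which is what licenses invoking Theorem~\ref{theo_assumption_fail} — rather than Theorem~\ref{theo_GAE} — at the first step. The remaining ingredient is a \emph{propagation lemma}: if $\bm P - \bm H_t\bm H_t^T$ and $\bm H_t\bm H_t^T$ share the same eigenspace, then the minimizer $\bm H_{t+1}$ of $\mathcal{L}_{FT}^{(t+1)}$ again has $\bm P - \bm H_{t+1}\bm H_{t+1}^T$ and $\bm H_{t+1}\bm H_{t+1}^T$ sharing the same eigenspace. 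Granting this lemma, the negated-assumption hypothesis holds at every module, Theorem~\ref{theo_assumption_fail} is applicable at each step, and the telescoping above goes through.

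The main obstacle is the propagation lemma. When $\bm H_t\bm H_t^T$ commutes with $\bm P - \bm H_t\bm H_t^T$, the three symmetric matrices $\bm P$, $\bm H_t\bm H_t^T$ and $\bm P - \bm H_t\bm H_t^T$ are simultaneously diagonalizable, so I would work in their common eigenbasis. Writing the SVD $\bm H_t = \bm U\bm\Sigma\bm V^T$, the columns of $\bm U$ are eigenvectors of both $\bm H_t\bm H_t^T$ and $\bm P$, hence $\bm P\bm H_t = \bm U\bm\Lambda\bm\Sigma\bm V^T$ with $\bm\Lambda$ diagonal; choosing $\bm W_{t+1}$ so that $\bm H_{t+1}$ retains the leading $o = \min(\operatorname{rank}(\bm H_t), k)$ eigendirections makes $\bm H_{t+1}\bm H_{t+1}^T$ an exact spectral truncation of $\bm P$, which therefore commutes with $\bm P$ and with $\bm P - \bm H_{t+1}\bm H_{t+1}^T$, while the discarded directions contribute only the $\mathcal{O}(\sigma_*^2(\bm H_t))$ gap. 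The delicate point is that one must argue the \emph{reconstruction-optimal} $\bm H_{t+1}$ (not merely the constructed one) inherits this commuting structure — which it does because, in the commuting case, the minimizer of a squared reconstruction error under a rank-type constraint is the corresponding spectral truncation. If maintaining this exact alignment proves awkward, a clean fallback is to state the corollary under the slightly stronger hypothesis that the shared-eigenspace condition holds at every $\mathcal{M}_t$, after which only the reduction and telescoping of the first paragraph are needed.
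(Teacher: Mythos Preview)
Your telescoping-of-Theorem~\ref{theo_assumption_fail} strategy is exactly what the paper has in mind: the corollary is stated immediately after Theorem~\ref{theo_assumption_fail} and the paper gives no separate proof for it in the appendix, treating it as a direct iterative consequence. So at the level of ``apply Theorem~\ref{theo_assumption_fail} with $\bm X \leftarrow \bm H_t$, $\bm H \leftarrow \bm H_{t+1}$, get $\mathcal L_{FT}^{(t+1)} \le \mathcal L_{FT}^{(t)} + \mathcal O(\sigma_*^2(\bm H_t))$, and sum over $t$,'' your proposal and the paper's (implicit) argument coincide.

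Where you go further than the paper is in flagging the propagation issue: the corollary assumes the shared-eigenspace condition only at $t=1$, yet Theorem~\ref{theo_assumption_fail} needs it at every step. The paper does not address this. Your sketch of a propagation lemma is on the right track and can be made precise using the paper's own construction: in the proof of Theorem~\ref{theo_assumption_fail}, the exhibited $\bm W$ yields $\bm H\bm H^T = \bm U_k \hat{\bm I}(\bm\Sigma_k^2+\bm\Lambda_k)\hat{\bm I}\,\bm U_k^T$, which is diagonal in the common eigenbasis $\bm U$ of $\bm P$ and $\bm H_t\bm H_t^T$; hence $\bm H_{t+1}\bm H_{t+1}^T$ and $\bm P - \bm H_{t+1}\bm H_{t+1}^T$ again commute. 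That closes the induction for the \emph{constructed} sequence of $\bm W_t$'s, which already suffices for the inequality because $\mathcal L_{FT}^{(t+1)}$ is a minimum over $\bm W_{t+1}$. Your worry that the \emph{argmin} $\bm H_{t+1}$ might break the commuting structure is therefore not needed for the bound itself --- you only need one admissible chain achieving it --- though it would matter if the $\sigma_*(\bm H_t)$ in the statement are meant to be those of the trained features rather than the constructed ones, a point the paper leaves ambiguous.
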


Based on Theorem \ref{theo_assumption_fail}, we conclude that the residual would not accumulate rapidly 
when Assumption \ref{assumption_commute} does not hold. 
All proofs are put in appendix.

\begin{table}[b]
    \centering
    \renewcommand\arraystretch{1.1}
    \setlength{\tabcolsep}{1.5mm}
    \caption{Data Information}
    \label{table_datasets}
    \begin{tabular}{l c c c c c }
        \hline

        \hline
        Dataset & Nodes  & Edges & Classes & Features & Train / Val / Test Nodes  \\
        \hline
        \hline
        Cora & 2,708 & 5,429 & 7 & 1,433 & 140 / 500 / 1,000 \\
        Citeseer & 3,327 & 4,732 & 6 & 3,703 & 120 / 500 / 1,000 \\
        Pubmed & 19,717 & 44,338 & 3 & 500 & 60 / 500 / 1,000 \\
        Reddit & 233K & 11.6M & 41 & 602 & 152K / 24K / 55K \\
        \hline

        \hline
    \end{tabular}
\end{table}

\begin{table*}[t]
    \centering
    \large
    \setlength{\tabcolsep}{2.5mm}
    \caption{Node clustering results on 4 datasets: The boldface and underline results indicate the optimal and sub-optimal results respectively. 
    ``N/A'' means the Out-Of-Memory (OOM) exception. }
    \label{table_clustering}
    \begin{tabular}{l c c c c c c c c}
        \hline
        
        \hline
        \multirow{2}{*}{Datasets} & \multicolumn{2}{|c|}{Cora} & \multicolumn{2}{c|}{Citeseer} & \multicolumn{2}{c|}{PubMed} & \multicolumn{2}{c}{Reddit}\\
        & \multicolumn{1}{|c}{ACC} & NMI & \multicolumn{1}{|c}{ACC} & NMI & \multicolumn{1}{|c}{ACC} & NMI & \multicolumn{1}{|c}{ACC} & NMI \\
        \hline
        \hline
        K-Means & 0.4922 & 0.3210 & 0.5401 & 0.3054 & 0.5952 & 0.2780 & 0.1927 & 0.2349 \\
        ARGA & 0.6400 & 0.4490 & 0.5730 & 0.3500 & 0.6807 & 0.2757 & N/A & N/A \\
        MGAE & 0.6806 & 0.4892 & 0.6691 & 0.4158 & 0.5932 & 0.2957 & N/A & N/A \\
        GraphSAGE & 0.6163 & 0.4826 & 0.5664 & 0.3425 & 0.5554 & 0.0943 & 0.6225 & 0.7291 \\
        FastGAE & 0.3527 & 0.1553 & 0.2672 & 0.1178 & 0.4262 & 0.0442 & 0.1115 & 0.0715 \\
        ClusterGAE & 0.4579 & 0.2261 & 0.4182 & 0.1767 & 0.3913 & 0.0001 & N/A & N/A \\
        \hline
        GAE & 0.5960 & 0.4290 & 0.4080 & 0.1760 & 0.6861 & 0.2957 & N/A & N/A\\
        AGC (SGC) & 0.6892 & 0.5368 & 0.6700 & 0.4113 & 0.6978 & 0.3159 & 0.5833 & 0.6894 \\
        SGNN-FT & 0.6278 & 0.5075 & 0.6141 & 0.3776 & 0.6444 & 0.2312 & 0.5943 & 0.7156 \\
        SGNN-BT & \textbf{0.7463} & \textbf{0.5546} & \underline{0.6730} & \underline{0.4159} & 0.6951 & \textbf{0.3337} & \textbf{0.7042} & \textbf{0.7601} \\
        \hline
        S$^2$GC & 0.6960 & \underline{0.5471} & \textbf{0.6911} & \textbf{0.4287} & \textbf{0.7098} & \underline{0.3321} & 0.7011 & 0.7509 \\
        GAE-S$^2$GC & 0.6976 & 0.5317 & 0.6435 & 0.3969 & 0.6528 & 0.2452 & 0.6272 & 0.7158 \\
        SGNN-S$^2$GC & \underline{0.7223} & 0.5404 & \underline{0.6822} & \underline{0.4243} & \underline{0.7084} & 0.3302 & \underline{0.7023} & \underline{0.7575} \\
        \hline

        \hline
        
    \end{tabular}
\end{table*}

\begin{table*}[h]
    \centering
    \large
    \setlength{\tabcolsep}{2.5mm}
    \captionof{table}{Investigation about the impact of $L$ on SGNN and GAE regarding node clustering.}
    \label{table_depth}
    \begin{tabular}{c | c  c c c c c c c c c c}
        \hline
        
        \hline
        \multirow{3}{*}{$L$} & \multicolumn{3}{c}{Cora} & \multicolumn{3}{|c|}{Citeseer} & \multicolumn{3}{c}{Pubmed}\\
        & SGNN & GAE & FastGAE & \multicolumn{1}{|c}{SGNN} & GAE & \multicolumn{1}{c|}{FastGAE} & SGNN & GAE & FastGAE \\

        \hline
        \hline

        2 & 0.75 & 0.60 & 0.35 & 0.67 & 0.41 & 0.27 & 0.70 & 0.69 & 0.43 \\
        3 & 0.66 & 0.63 & 0.33 & 0.65 & 0.58 & 0.25 & 0.64 & 0.64 & 0.42 \\
        4 & 0.68 & 0.65 & 0.33 & 0.59 & 0.58 & 0.24 & 0.64 & 0.60 & 0.41 \\
        5 & 0.69 & 0.62 & 0.33 & 0.53 & 0.45 & 0.24 & 0.64 & 0.60 & 0.41 \\
        6 & 0.69 & 0.53 & 0.32 & 0.44 & 0.32 & 0.24 & 0.64 & 0.48 & 0.41 \\
        7 & 0.68 & 0.52 & 0.32 & 0.44 & 0.31 & 0.24 & 0.64 & 0.46 & N/A \\
        \hline

        \hline
        
    \end{tabular}
\end{table*}

\section{Experiments}
In this section, we conduct experiments to investigate: (1) \textit{whether the performance 
of SGNN could approach the performance of the original $L$-layer GNN in a highly-efficient way}; 
(2) \textit{what the impact of the non-linearity and flexibility brought by the decoupling is}. 

To sufficiently answer the above 2 problems, 
both node clustering and semi-supervised node classification are used. 
It should be emphasized that SGNN can be used for both transductive and inductive tasks. 
We mainly conduct experiments on datasets for transductive learning since 
the inefficiency problem is more likely to appear in transductive learning scenes. 
There are 4 common datasets, 
including Cora, Citeseer, Pubmed \cite{CitationDatasets}, and Reddit \cite{GraphSAGE}, 
used for our experiments. 
Cora and Citeseer contain thousands of nodes. 
Pubmed has nearly 20 thousands nodes and Reddit contains more than 200 thousands nodes, 
which are middle-scale and large-scale datasets respectively.
The details of four common datasets are shown in Table \ref{table_datasets}. 
Experiments on two OGB datasets can be found in Section \ref{section_OGB}.

\subsection{Node Clustering}

\subsubsection{Experimental Settings}
We first testify the effectiveness of SGNN on the node clustering. 
We compare our method against 10 methods, including a baseline clustering model K-means, 
three GCN models without considering training efficiency (GAE \cite{GAE}, ARGA \cite{AGAE}, MGAE \cite{MGAE}), 
and six fast GCN models with GAE-loss (GraphSAGE \cite{GraphSAGE}, FastGAE \cite{FastGCN}, ClusterGAE \cite{Cluster-GCN}, 
AGC \cite{AGC} (an unsupervised extension of SGC \cite{SGC}), S$^2$GC \cite{S2GC}, and GAE-S$^2$GC). 

The used codes are based on the publicly released implementation. 
To ensure fairness, all multi-layer GNN models consist of two layers and 
SGNN is comprised of two modules. 
For models that could be trained by stochastic algorithms, the size of mini-batch 
is set as 128. 
The learning rate is set as 0.001 and the number of epochs is set as 100. 
We set the size first layer as 128 and the second layer size as 64. 
$\bm U$ is initialized as an identity matrix, and $\eta$ is set as $10^3$ by default. 
The number of backward training is set as 5 or 10. 
To investigate the effectiveness of backward training, 
we report the experimental results with sufficient training for each module, 
which is denoted by SGNN-FT, while SGNN with the proposed backward training 
is represented by SGNN-BT. 

To study the performance of SGNN with different base models, we choose 
a fully-separable GNN, S$^2$GC, as the base model and this method is marked as 
SGNN-S$^2$GC. 
Note that SGNN-S$^2$GC is also trained with the proposed BT strategy, which is same as SGNN-BT. 
The original S$^2$GC does not apply any activation functions while 
we add a activation functions, which are same as the SGNN-FT, to the embeddings of each S$^2$GC base models. 
As S$^2$GC does not use the GAE framework, we also added a competitor, namely GAE-S$^2$GC, 
which uses S$^2$GC as the encoder, to ensure fairness.
For SGNN-S$^2$GC and GAE-S$^2$GC, all extra hyper-parameters are simply set according 
to the original paper of S$^2$GC for both node clustering and node classification. 
We do not tune the hyper-parameters of S$^2$GC manually. 

All methods are run five times and the average scores are recorded. 
The result are summarized in Table \ref{table_clustering}.

\begin{table*}[t]
    \begin{minipage}[t]{0.6\textwidth}
        \centering
        \large
        \renewcommand\arraystretch{1.0}
        \setlength{\tabcolsep}{2.5mm}
        \caption{Test accuracy (\%) of supervised SGNN averaged over 10 runs on 3 citation datasets}
        \label{table_classification_small}
        \begin{tabular}{l c c c c c c c c}
            \hline
            
            \hline
            \multirow{1}{*}{Datasets} & \multicolumn{1}{c}{Cora} & \multicolumn{1}{c}{Citeseer} & \multicolumn{1}{c}{Pubmed} \\
            \hline
            \hline
            GAT & 83.0 $\pm$ 0.7 & 72.5 $\pm$ 0.7 & 79.0 $\pm$ 0.3 \\
            DGI & 82.3 $\pm$ 0.6 & 71.8 $\pm$ 0.7 & 76.8 $\pm$ 0.6 \\
            GCNII & \textbf{85.5 $\pm$ 0.5} & \underline{73.4 $\pm$ 0.6} & \textbf{80.2 $\pm$ 0.4} \\
            FastGCN$^\dag$ & 79.8 $\pm$ 0.3 & 68.8 $\pm$ 0.6 & 77.4 $\pm$ 0.3 \\
            GraphSAGE$^\dag$ & 77.4 $\pm$ 0.6 & 66.7 $\pm$ 0.3 & 79.0 $\pm$ 0.3 \\
            Cluster-GCN$^\dag$ & 65.3 $\pm$ 3.9 & 57.7 $\pm$ 2.3 & 71.5 $\pm$ 1.9 \\
            \hline
            GCN & 81.4 $\pm$ 0.4 & 70.9 $\pm$ 0.5 & 79.0 $\pm$ 0.4 \\
            SGC & 81.0 $\pm$ 0.0 & 71.9 $\pm$ 0.1 & 78.9 $\pm$ 0.0 \\
            SGNN-FT & 81.0 $\pm$ 0.6 & 69.3 $\pm$ 0.4 & 77.4 $\pm$ 0.3 \\
            SGNN-BT & 81.4 $\pm$ 0.5 & 70.2 $\pm$ 0.4 & 77.8 $\pm$ 0.3 \\
            \hline
            S$^2$GC & 83.5 $\pm$ 0.0 & \textbf{73.6 $\pm$ 0.1} & \textbf{80.2 $\pm$ 0.0} & \\
            SGNN-S$^2$GC & \underline{83.8 $\pm$ 0.2} & 73.2 $\pm$ 0.3 & \textbf{80.2 $\pm$ 0.4} \\ 
            \hline

            \hline

        \end{tabular}
    \end{minipage}
    \hspace{1mm}
    \begin{minipage}[t]{0.38\textwidth}
        \large
        

        \centering
        \renewcommand\arraystretch{1.0}
        \setlength{\tabcolsep}{5mm}
        \caption{Test accuracy (\%) averaged over 5 runs on Reddit}
        \label{table_classification_reddit}
        \begin{tabular}{l c c }
            \hline
            
            \hline
            \multirow{1}{*}{Methods}  & \multicolumn{1}{c}{Reddit}\\
            \hline
            \hline
            GAT & N/A \\
            DGI & 94.0 \\
            SAGE-mean & 95.0 \\
            SAGE-GCN & 93.0 \\
            FastGCN & 93.7 \\
            Cluster-GCN & \textbf{96.6} \\
            \hline
            GCN & N/A \\
            SGC & 94.9 \\
            SGNN-FT & 94.54 $\pm$ 0.01 \\
            SGNN-BT & 95.10 $\pm$ 0.02 \\
            \hline 
            S$^2$GC & \underline{95.3} \\
            SGNN-S$^2$GC & 95.28 $\pm$ 0.03 \\
            \hline
    
            \hline
            
        \end{tabular}
    \end{minipage}
\end{table*}

\begin{figure*}[t]
    \small
    \centering
    \setlength{\fboxrule}{0.1pt}
    \setlength{\fboxsep}{1pt}
    \subcaptionbox{GCN on Cora}{
        \includegraphics[width=0.23\linewidth]{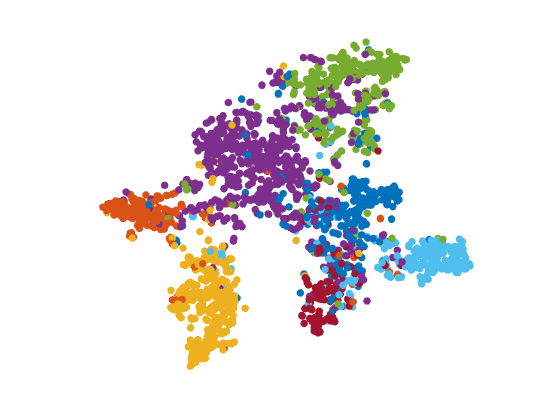}
    }
    \subcaptionbox{SGNN on Cora}{
        \includegraphics[width=0.23\linewidth]{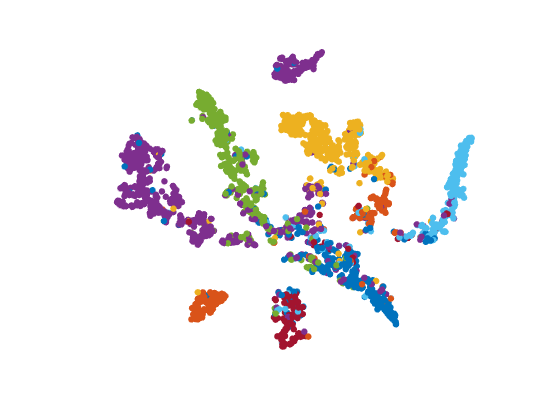}
    }
    \subcaptionbox{GCN on Citeseer}{
        \includegraphics[width=0.23\linewidth]{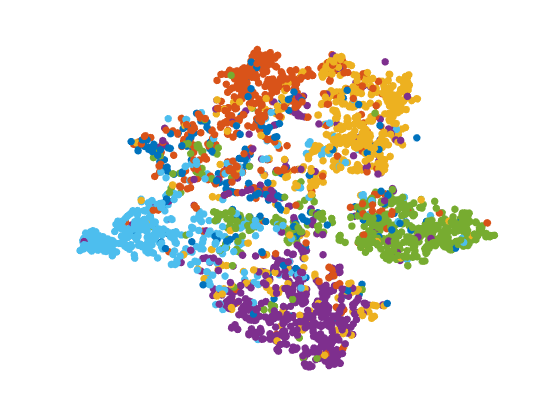}
    }
    \subcaptionbox{SGNN on Citeseer}{
        \includegraphics[width=0.23\linewidth]{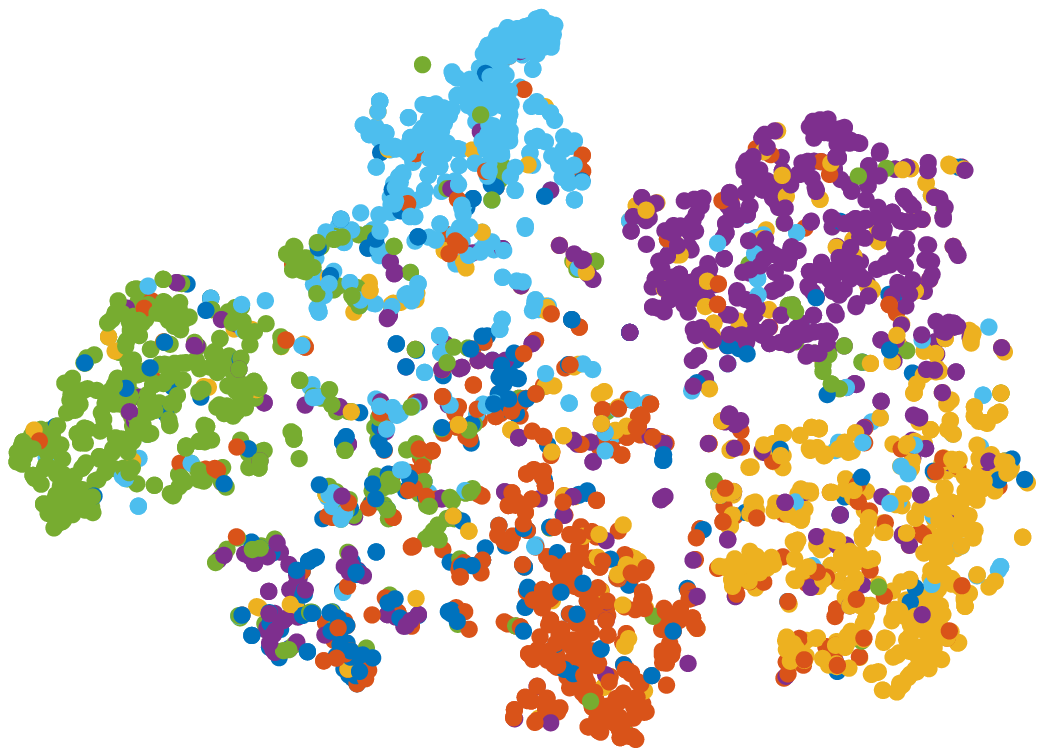}
    }
    
    \caption{Visualization of SGNN comprised of 3 modules and 3-layer GCN on node classification. 
    For SGNN, the output of $\mathcal{M}_3$ is visualized. For GCN, the output of the final GCN-layer is visualized. }
    \label{figure_visualization}
\end{figure*}

\begin{figure*}[t]
    \centering
    \setlength{\fboxrule}{0.1pt}
    \setlength{\fboxsep}{1pt}
    \subcaptionbox{$\bm H_0$: Input of $\mathcal{M}_1$}{
        \includegraphics[width=0.23\linewidth]{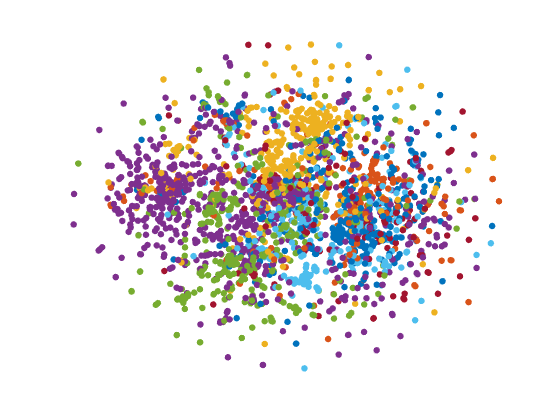}
    }
    \subcaptionbox{$\bm H_1$: From $\mathcal{M}_1$ to $\mathcal{M}_2$}{
        \includegraphics[width=0.23\linewidth]{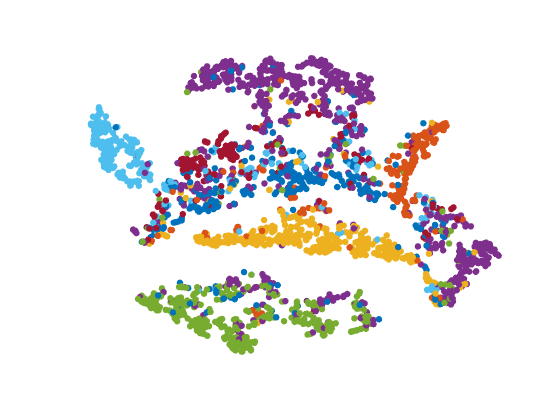}
    }
    \subcaptionbox{$\bm H_2$: From $\mathcal{M}_2$ to $\mathcal{M}_3$}{
        \includegraphics[width=0.23\linewidth]{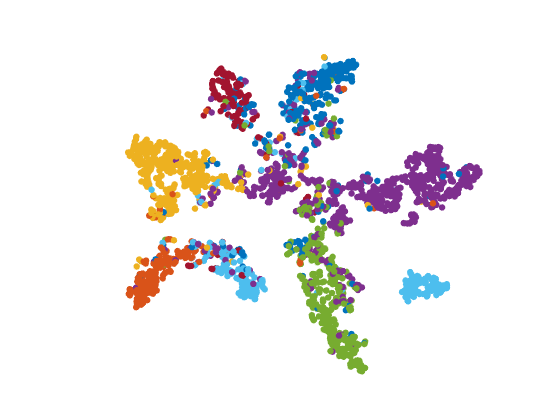}
    }
    \subcaptionbox{$\bm{H}_3$: Output of $\mathcal{M}_3$}{
        \includegraphics[width=0.23\linewidth]{figures/cora-X3.png}
    }

    \subcaptionbox{$\bm H_0$: Input of $\mathcal{M}_1$}{
        \includegraphics[width=0.23\linewidth]{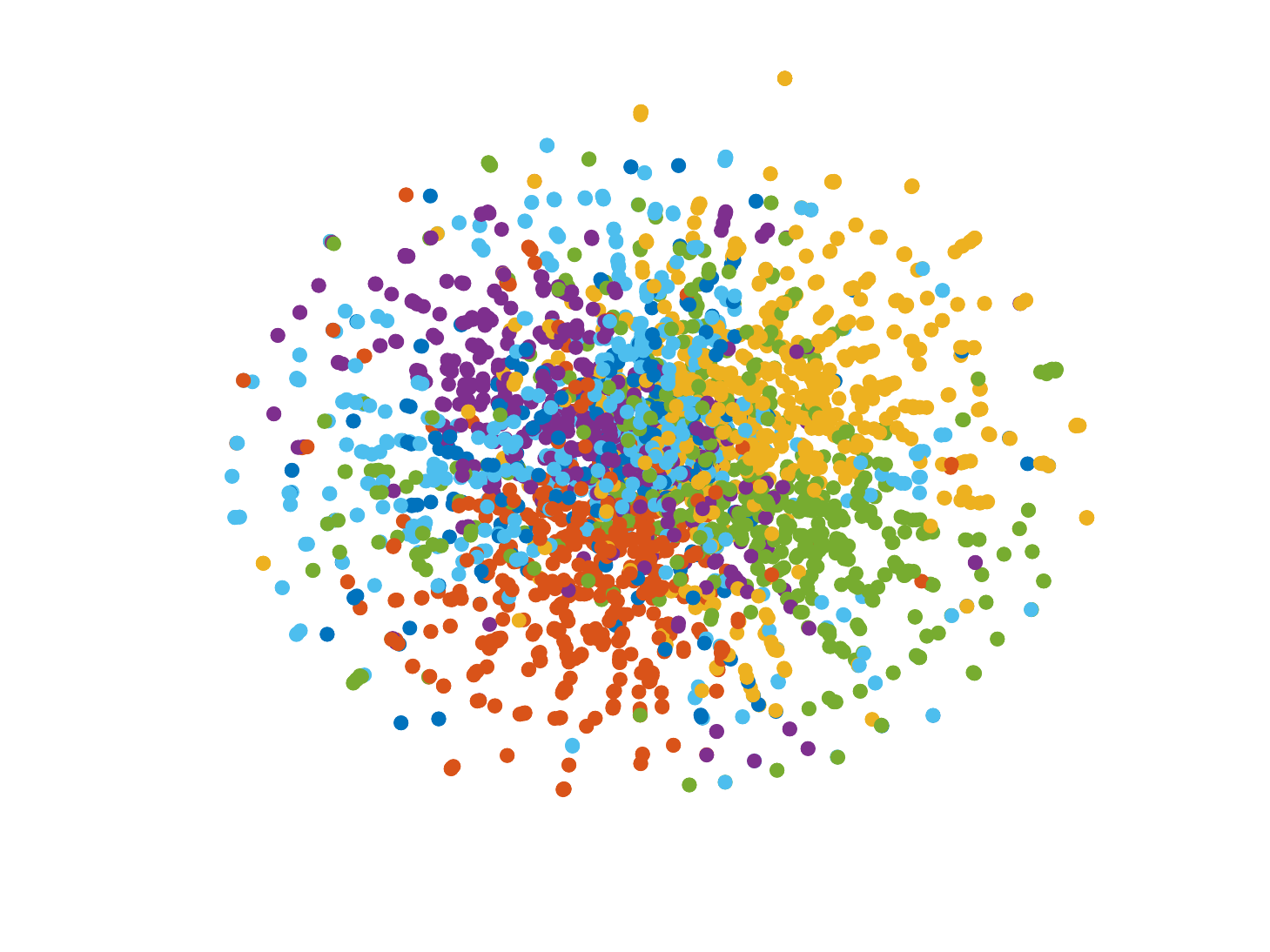}
    }
    \subcaptionbox{$\bm H_1$: From $\mathcal{M}_1$ to $\mathcal{M}_2$}{
        \includegraphics[width=0.23\linewidth]{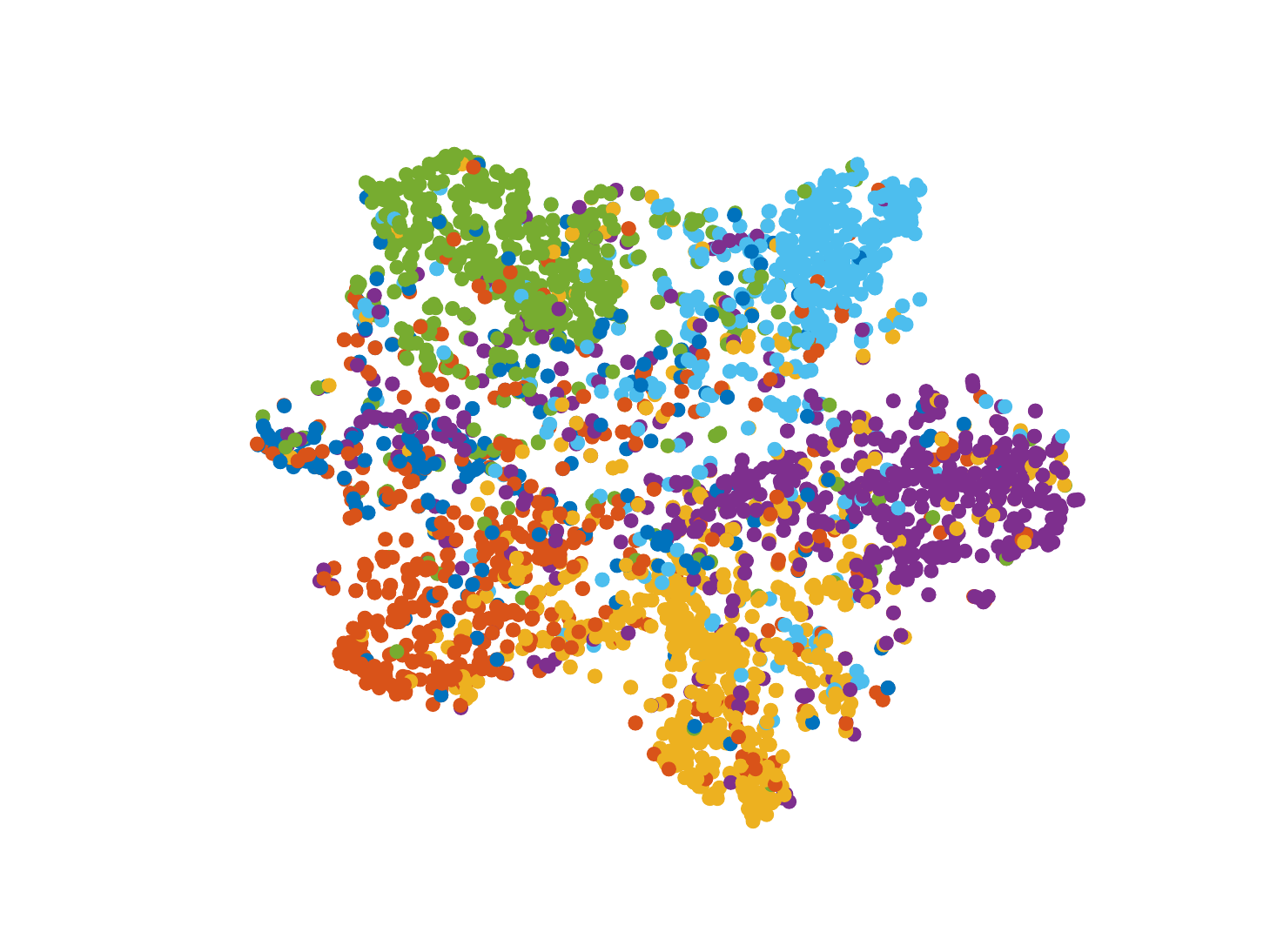}
    }
    \subcaptionbox{$\bm H_2$: From $\mathcal{M}_2$ to $\mathcal{M}_3$}{
        \includegraphics[width=0.23\linewidth]{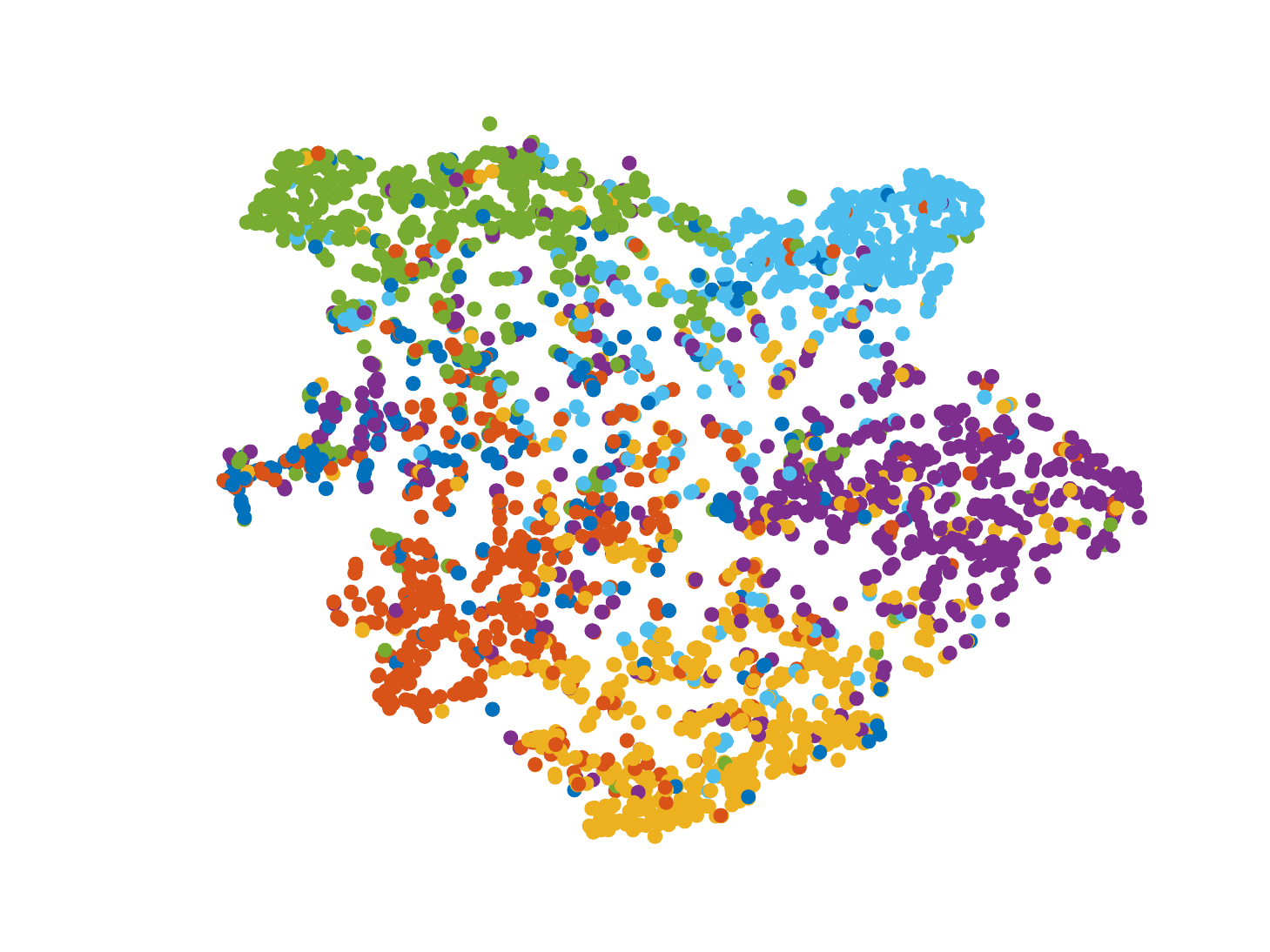}
    }
    \subcaptionbox{$\bm{H}_3$: Output of $\mathcal{M}_3$}{
        \includegraphics[width=0.23\linewidth]{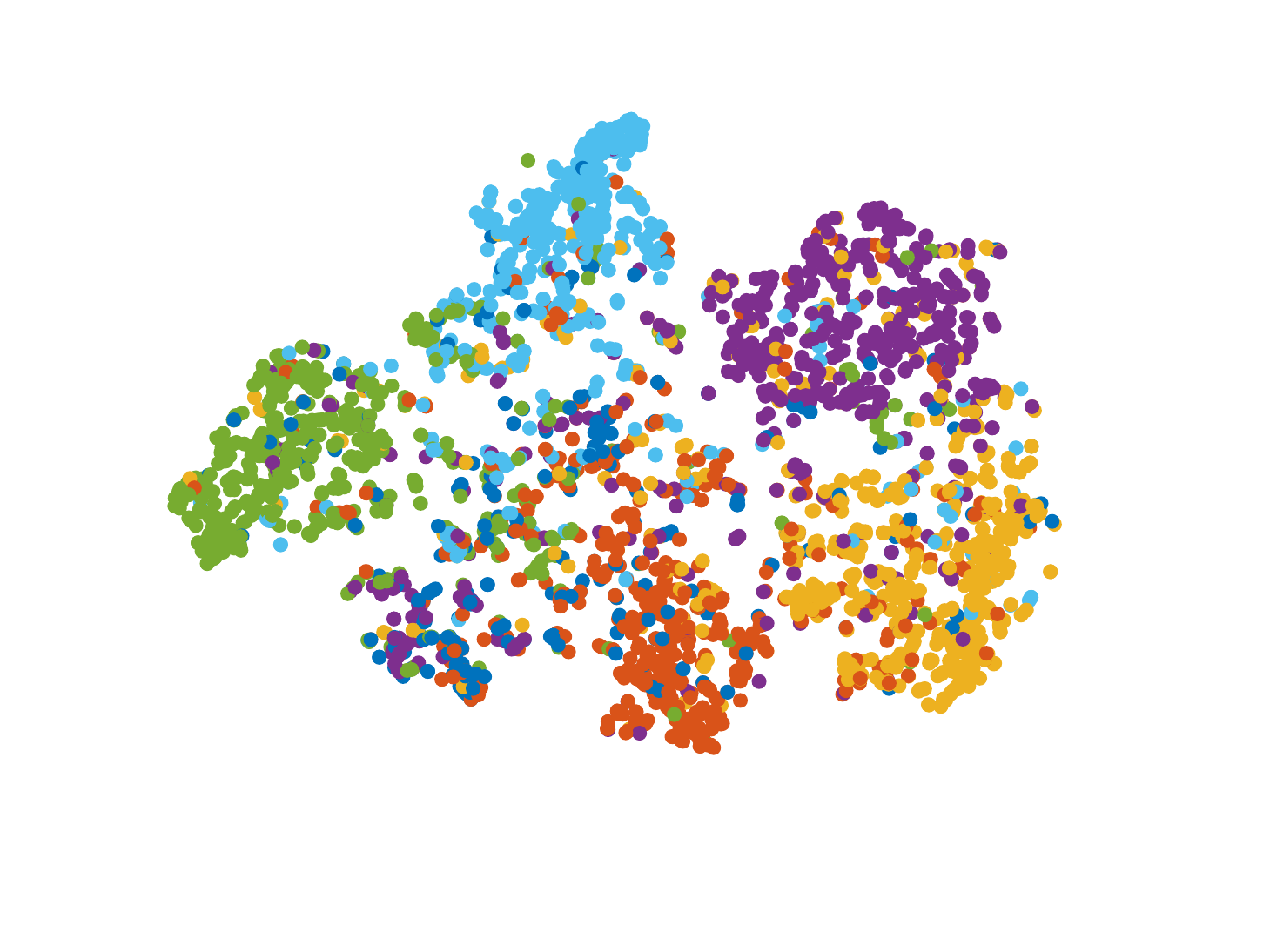}
    }
    \caption{Visualization of a trained SGNN comprised of 3 modules on node classification of Cora and Citeseer.
    The first line shows the visualization of Cora and the bottom line shows the 
    visualization of Citeseer.}
    \label{figure_more_visualization_classification}
\end{figure*}

\subsubsection{Performance}
From Table \ref{table_clustering}, we find that SGNN outperforms in most 
datasets. 
If the released codes could not run on Reddit due to out-of-memory (OOM), 
we put the notation ``N/A'' instead of results. 
In particular, SGNN-BT obtains good improvements on Reddit with high efficiency. 
Specifically speaking, it is about 8\% higher than the well-known GraphSAGE. 
SGNN-FT performs above the average on some datasets. It usually outperforms GraphSAGE but 
fails to exceed SGC. 
Due to the deeper structure caused by multiple modules, 
the performance of SGNN excels the simple GAE. 
It also outperforms SGC due to more non-linearity brought by multiple modules. 
Note that S$^2$GC and SGC are strong competitors, 
while \textbf{SGNN can easily employ them as a base module since they are separable} 
which is shown by SGNN-S$^2$GC. 
It is easy to find that SGNN-S$^2$GC usually achieves similar results compared with S$^2$GC. 
As it is slower than SGNN-BT and the performance improvement is not stable, 
we recommend to use simple SGNN-FT instead of SGNN-S$^2$GC in practice. 
From the table, we find that it is unnecessary to modify S$^2$GC to GAE-S$^2$GC 
since the GAE architecture does not improve the performance of S$^2$GC at all. 
From the ablation experiments, SGNN-BT works better than SGNN-FT, 
which indicates the necessity of the backward training. 

We also investigate how the number of modules $L$ affects the node clustering 
accuracy and the results averaged over 5 runs are reported in 
Table \ref{table_depth}. 
To ensure fairness, we also show the performance of GAE with the same depth 
though deeper GCN and GAE usually return unsatisfied results. 
Note that the neurons of each layers are set as $[256, 128, 64, 32, 16, 16, 16]$, respectively. 
It is not hard to find that SGNN is robust to the depth compared with the conventional GNNs.

\begin{figure*}[t]
    \small
    \centering
    \setlength{\fboxrule}{0.1pt}
    \setlength{\fboxsep}{1pt}
    \subcaptionbox{$\bm H_0$: Input of $\mathcal{M}_1$}{
        \includegraphics[width=0.23\linewidth]{figures/cora-X0.png}
    }
    \subcaptionbox{$\bm H_1$: From $\mathcal{M}_1$ to $\mathcal{M}_2$}{
        \includegraphics[width=0.23\linewidth]{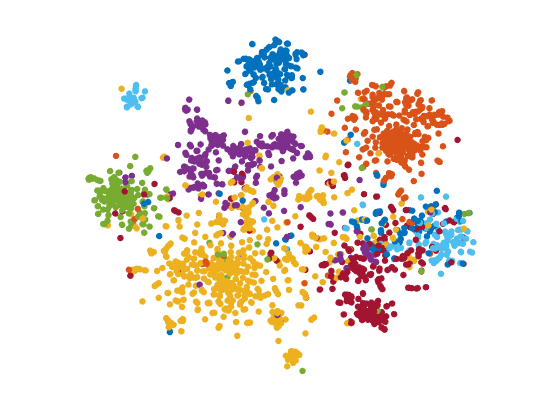}
    }
    \subcaptionbox{$\bm H_2$: From $\mathcal{M}_2$ to $\mathcal{M}_3$}{
        \includegraphics[width=0.23\linewidth]{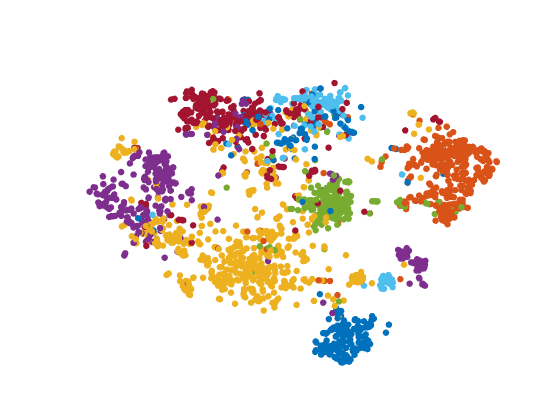}
    }
    \subcaptionbox{$\bm{H}_3$: Output of $\mathcal{M}_3$}{
        \includegraphics[width=0.23\linewidth]{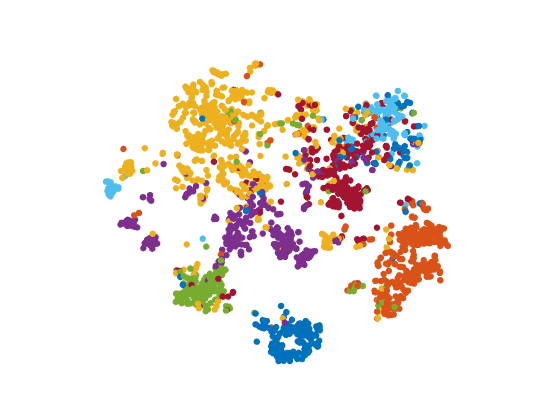}
    }

    \subcaptionbox{$\bm H_0$: Input of $\mathcal{M}_1$}{
        \includegraphics[width=0.23\linewidth]{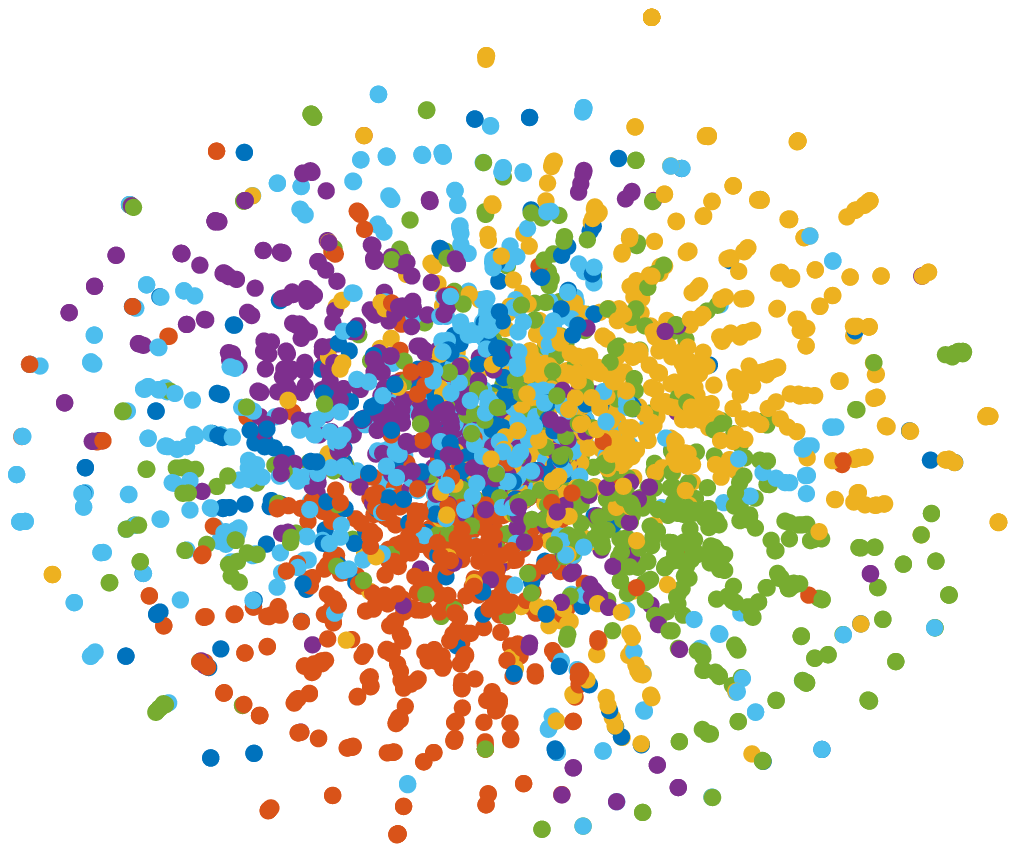}
    }
    \subcaptionbox{$\bm H_1$: From $\mathcal{M}_1$ to $\mathcal{M}_2$}{
        \includegraphics[width=0.23\linewidth]{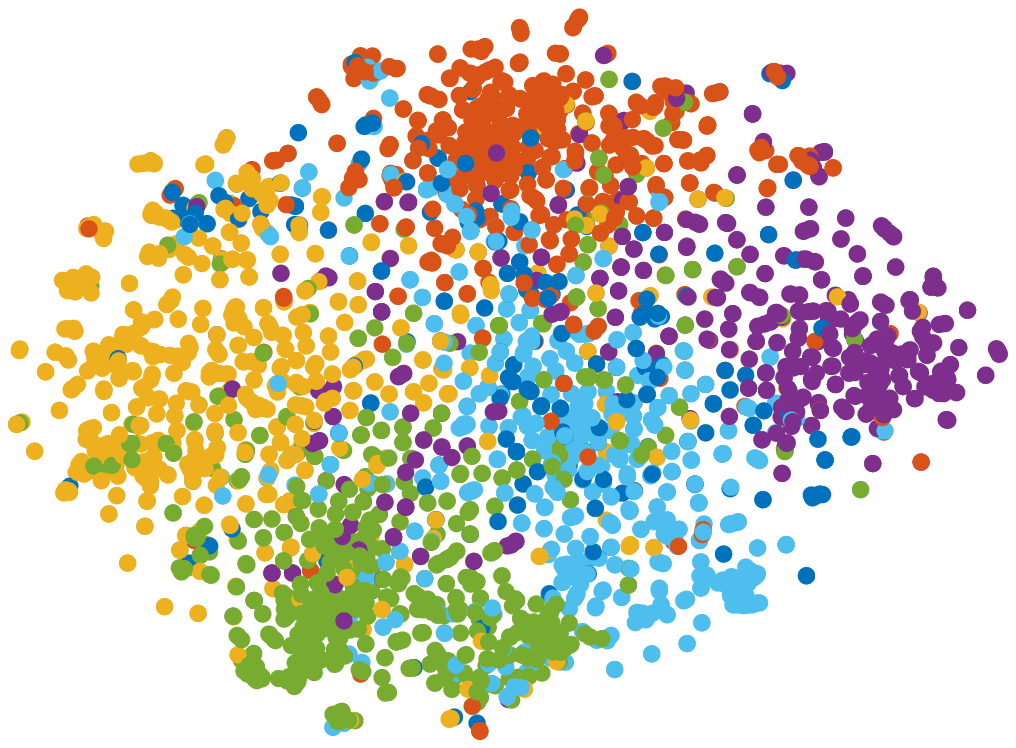}
    }
    \subcaptionbox{$\bm H_2$: From $\mathcal{M}_2$ to $\mathcal{M}_3$}{
        \includegraphics[width=0.23\linewidth]{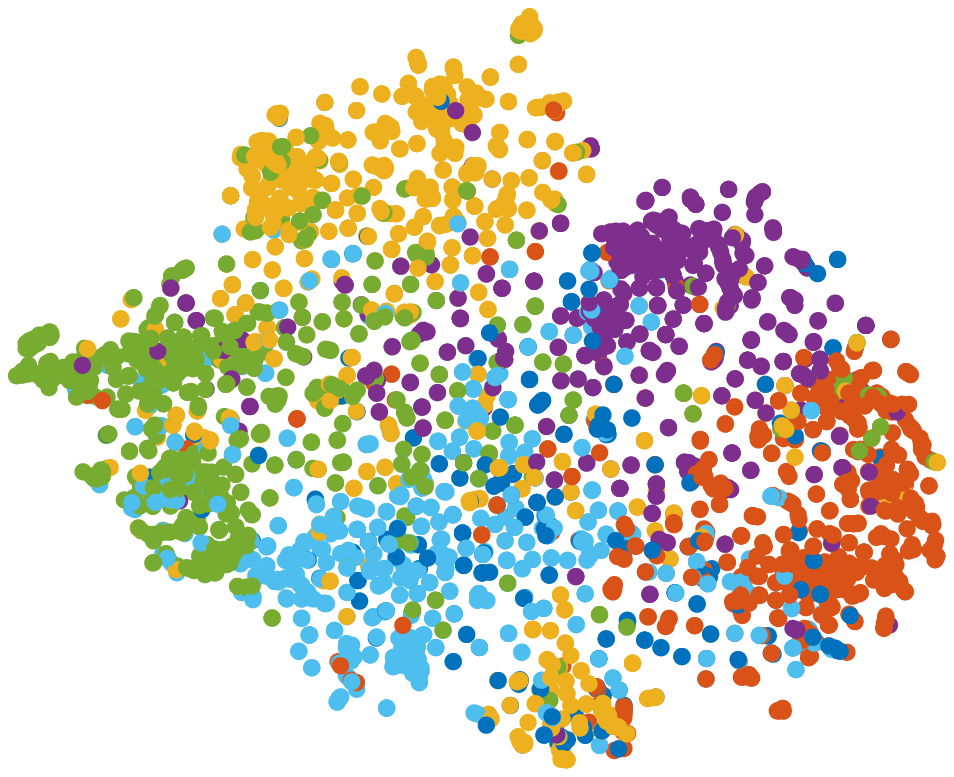}
    }
    \subcaptionbox{$\bm{H}_3$: Output of $\mathcal{M}_3$}{
        \includegraphics[width=0.23\linewidth]{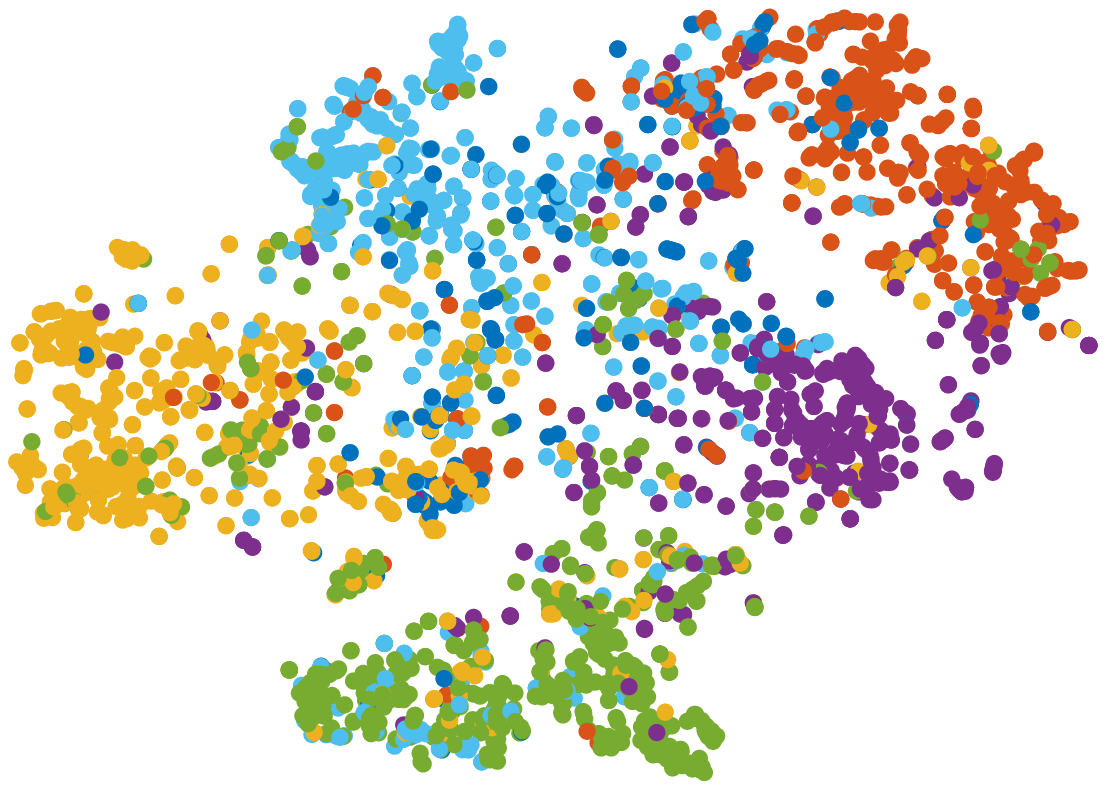}
    }
    \caption{Visualization of a trained SGNN comprised of 3 modules on node clustering of Cora and Citeseer.
    The first line is visualization on Cora and the second line is visualization on Citeseer. }
    \label{figure_more_visualization_clustering}
\end{figure*}

\begin{figure*}[t]
    \small
    \centering
    \setlength{\fboxrule}{0.1pt}
    \setlength{\fboxsep}{1pt}
    \subcaptionbox{$\bm H_1$ after 1 epoch}{
        \includegraphics[width=0.23\linewidth]{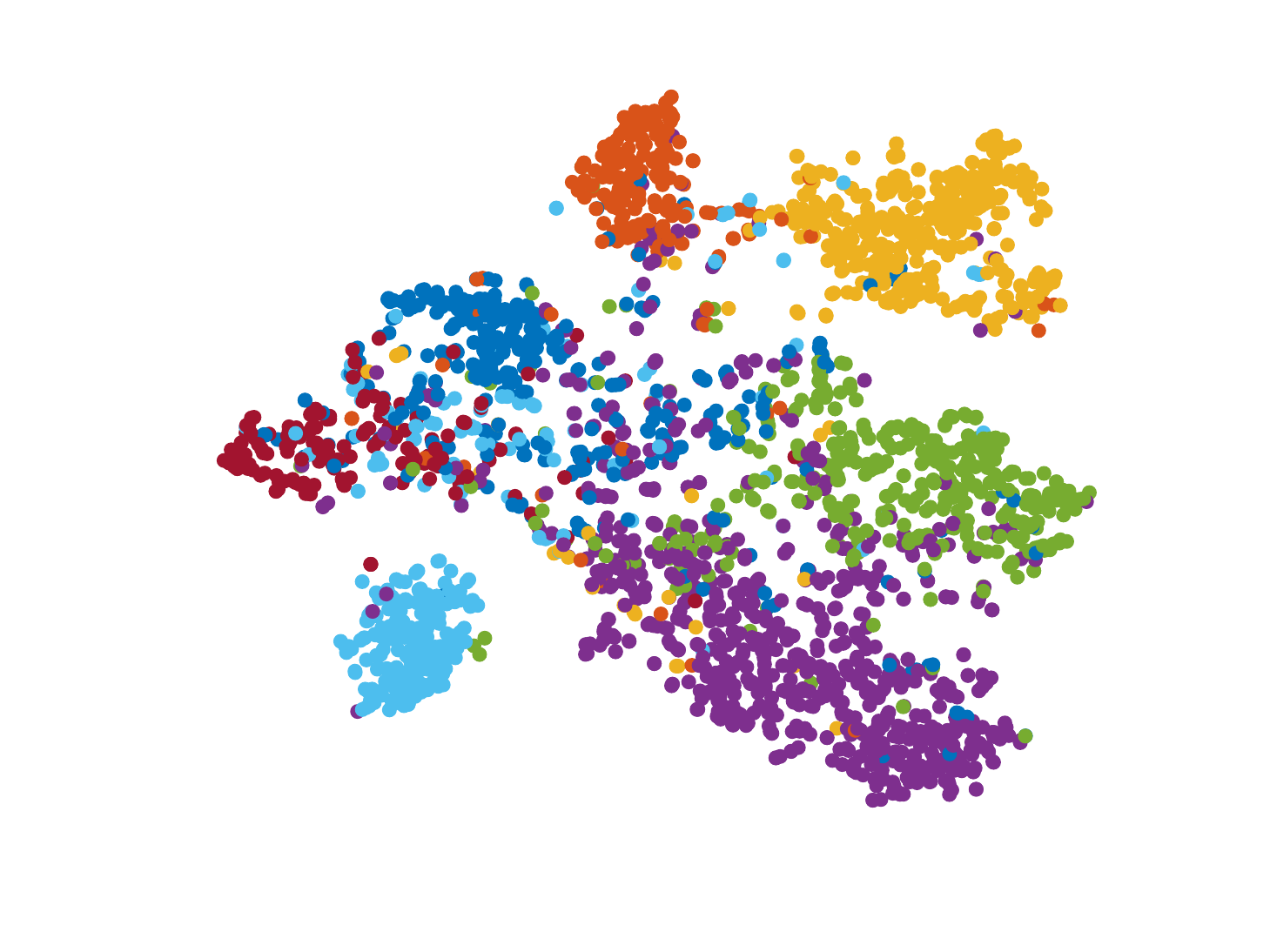}
    }
    \subcaptionbox{$\bm H_1$ after 2 epochs}{
        \includegraphics[width=0.23\linewidth]{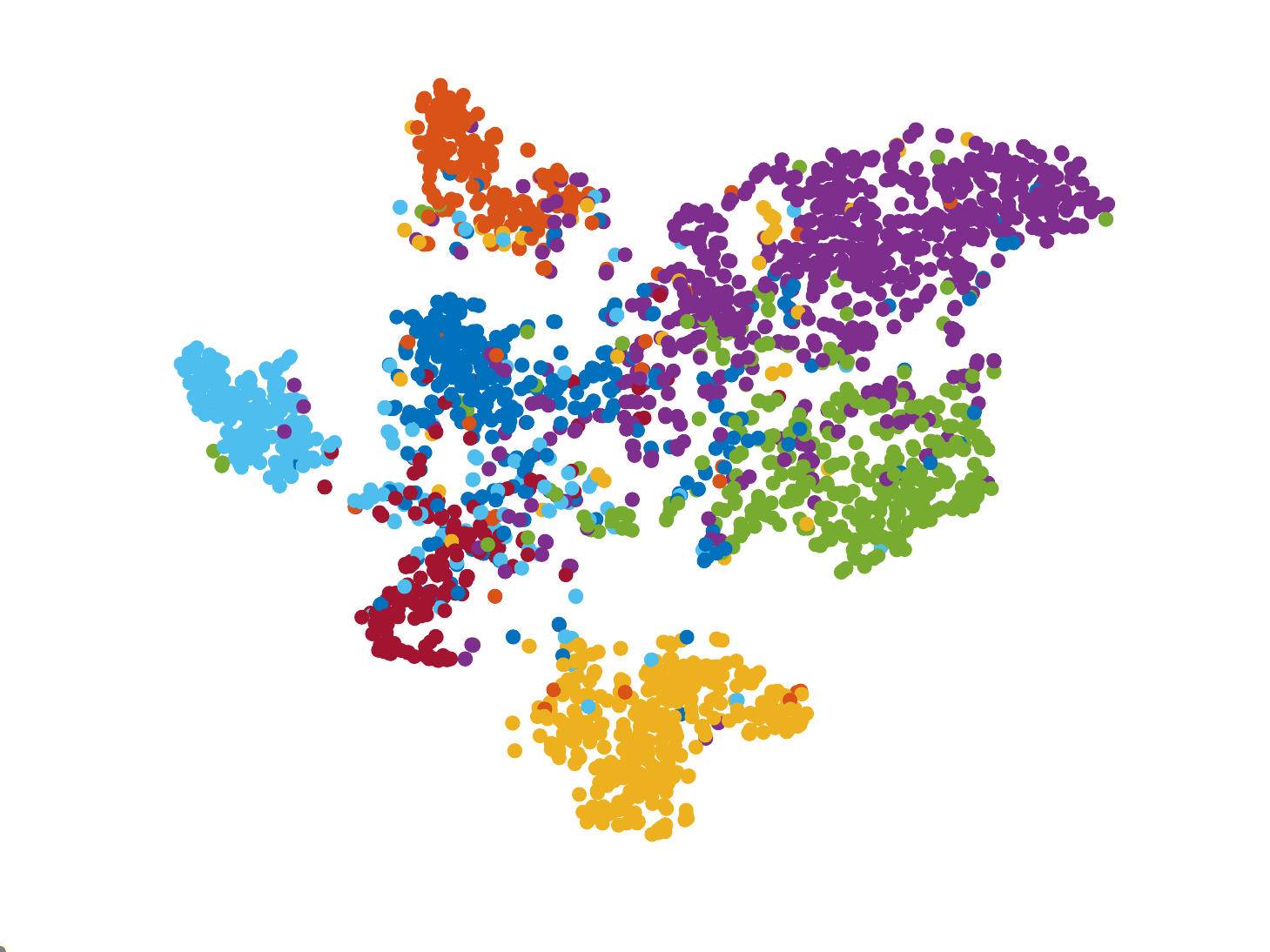}
    }
    \subcaptionbox{$\bm H_1$ after 3 epochs}{
        \includegraphics[width=0.23\linewidth]{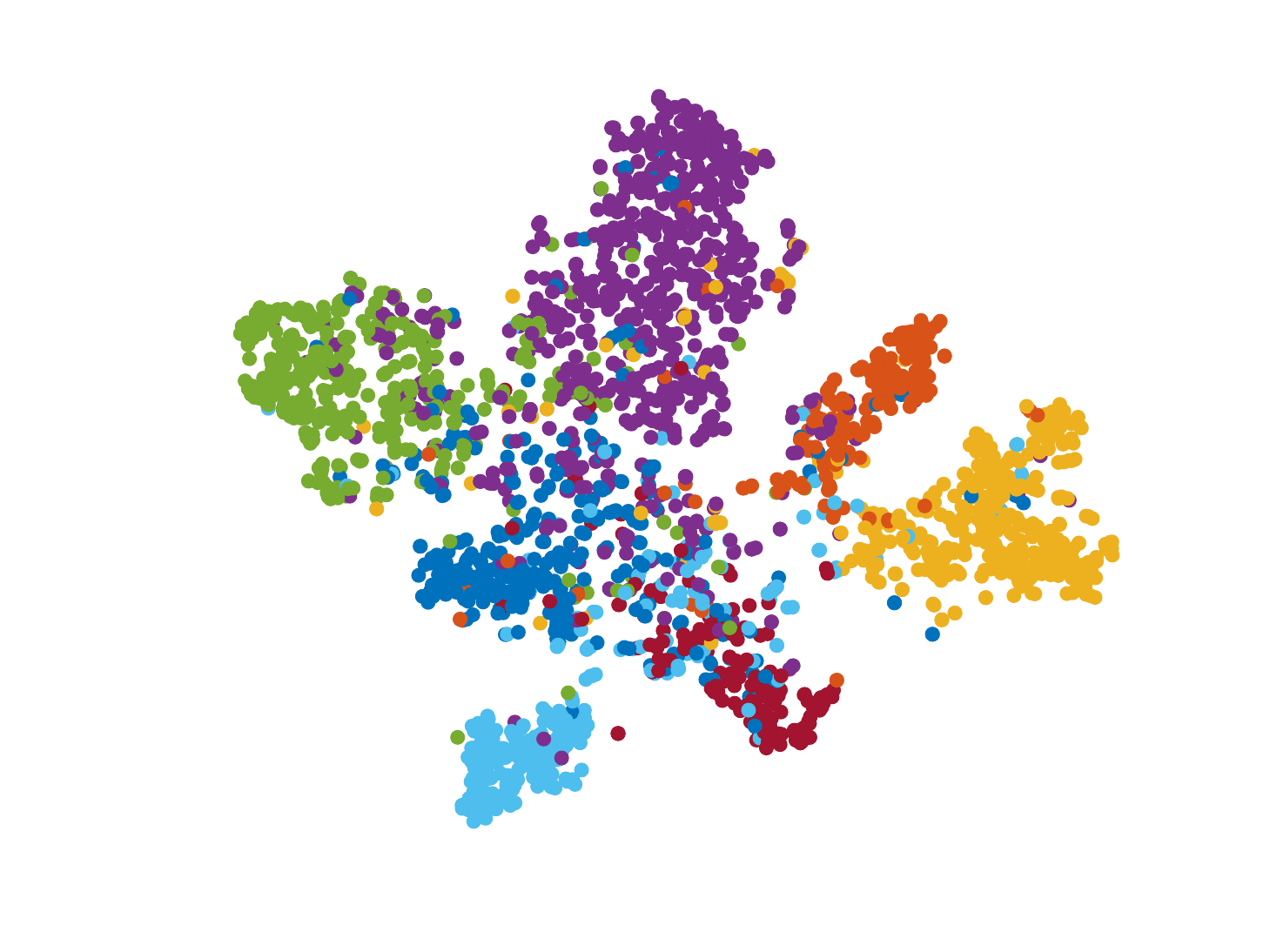}
    }
    \subcaptionbox{$\bm H_1$ after 5 epochs \label{figure_epoch5}}{
        \includegraphics[width=0.23\linewidth]{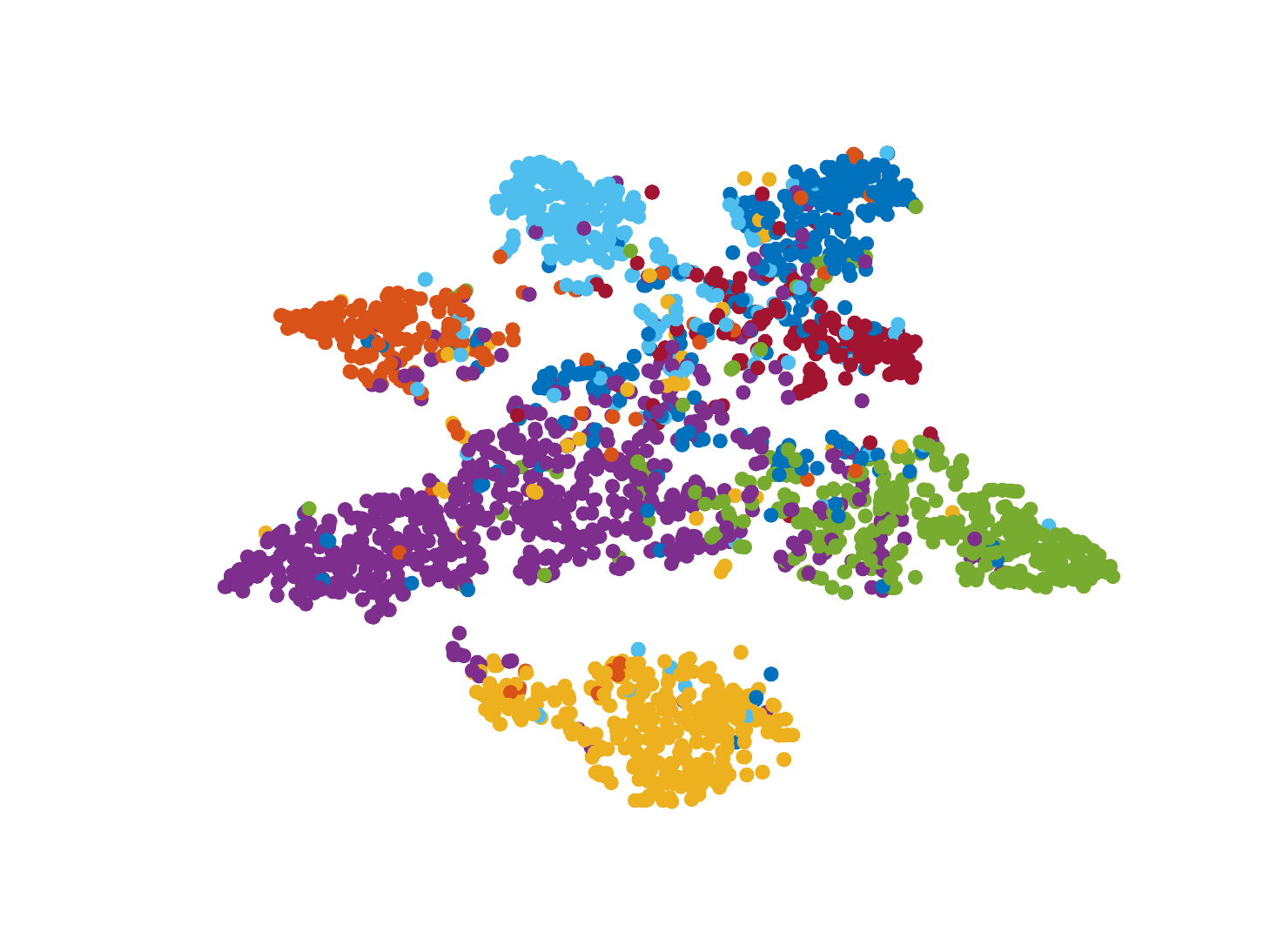}
    }
    \caption{t-SNE Visualization of the output of $\mathcal{M}_1$ from a trained SGNN comprised of 3 modules on node classification of Citeseer.}
    \label{figure_more_visualization_epoch}
\end{figure*}

\begin{figure}[t]
        \small
        \centering
        \setlength{\fboxrule}{0.1pt}
        \setlength{\fboxsep}{1pt}
        \subcaptionbox{\small Cora}{
            \includegraphics[width=0.47\linewidth]{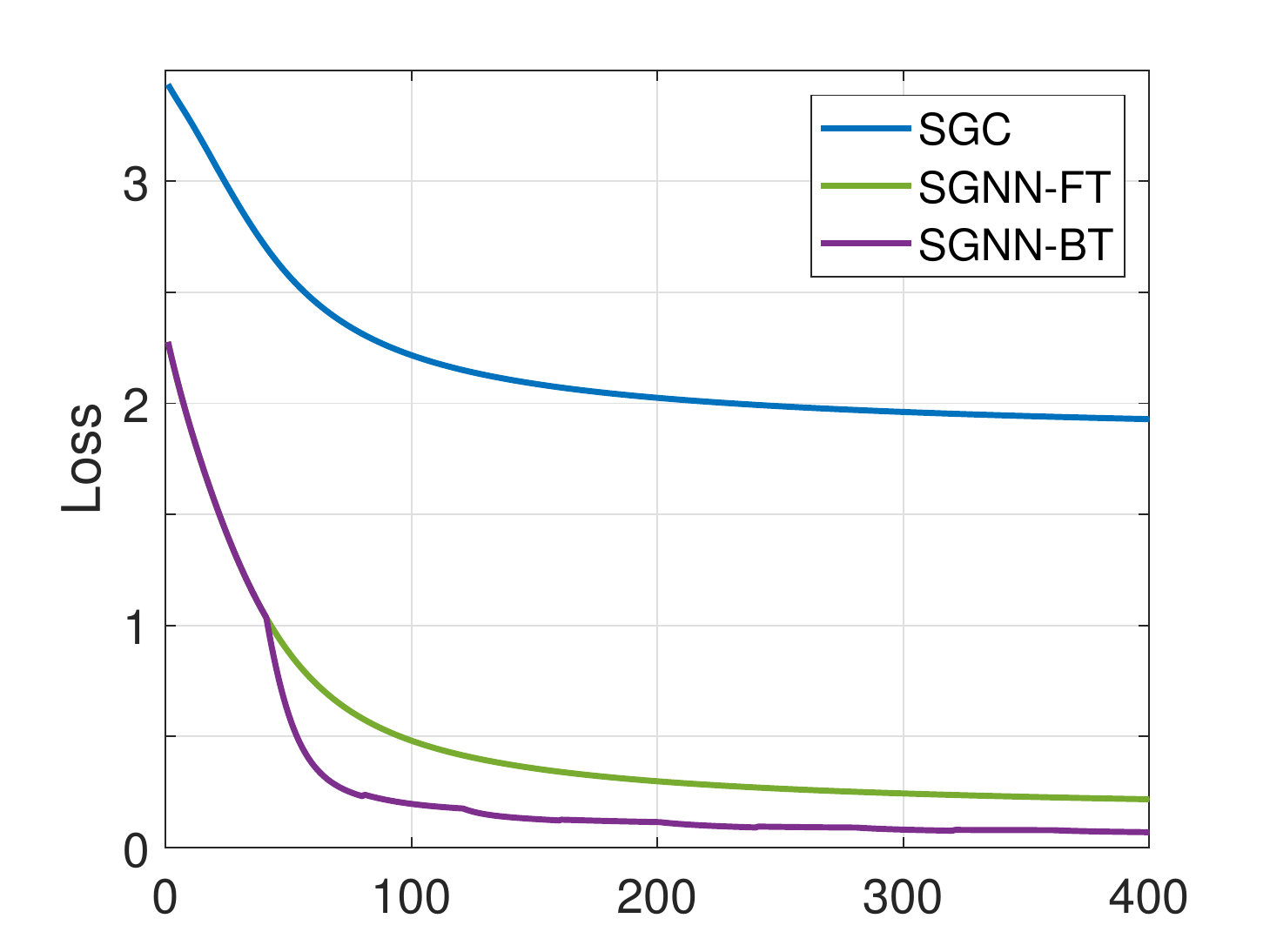}
        }
        \subcaptionbox{\small Pubmed}{
            \includegraphics[width=0.47\linewidth]{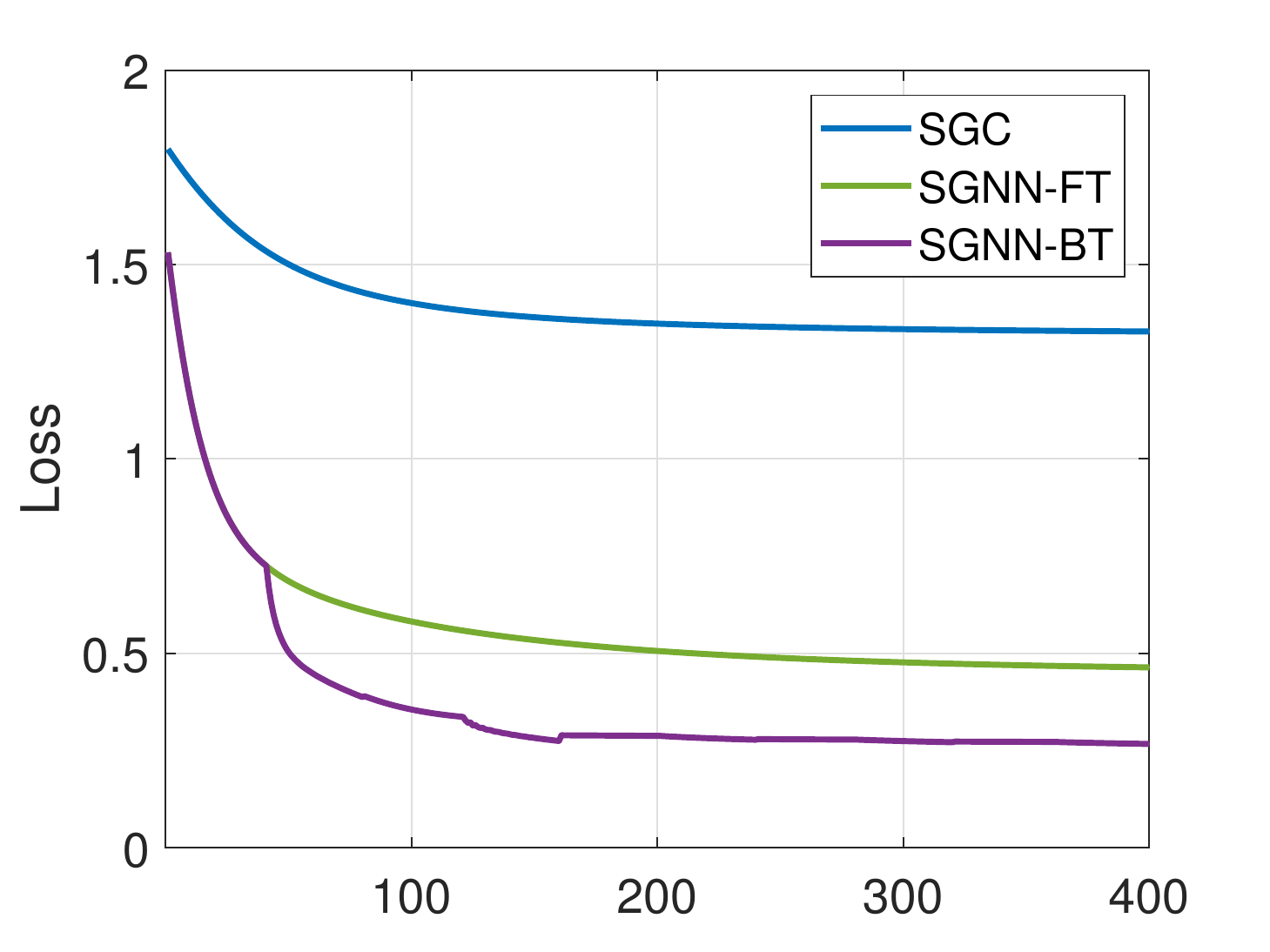}
        }
        \caption{Convergence curve of the \textit{final loss}. 
        The order of SGC is set as $L$. 
        The backward training significantly decreases the final loss and the non-linearity also plays an 
        important role. }
        \label{figure_convergence}
\end{figure}

\subsubsection{Efficiency}
Figure \ref{figure_time} shows the consuming time of several GNNs with higher 
efficiency on Pubmed and Reddit. Instead of neglecting the preprocessing operation, 
we measure the efficiency through a more rational way. 
We record the totally consuming time after loading data into RAM 
and then divide the total number of updating parameters of GNNs. 
The measurement could reflect the real difference of diverse training 
techniques aiming to apply batch-based algorithms to GNN. 
It should be emphasized the reason why SGC is worse than SGNN regarding 
the consuming time. The key point is the different costs of their 
\textit{preprocessing} operation. For an $L$-order SGC, the computation 
cost of $\bm P^L \bm X$ is at least $\mathcal{O}(\|\bm A\|_0 L d)$ while 
SGNN with $L$ first-order modules totally requires $\mathcal{O}(\|\bm A\|_0 \sum_{i=1}^L d_i)$ 
for the same preprocessing operation. 
The metric also provides a fair comparison between SGC and other models 
since the stopping criteria are always different.

\subsection{Node Classification}

\subsubsection{Experimental Setting}
We also conduct experiments of semi-supervised classification on four datasets. 
The split of datasets follows \cite{SGC} which is shown in Table \ref{table_datasets}. 
We compare SGNN against 
GCN \cite{GCN}, GAT \cite{GAT}, DGI \cite{DGI}, APPNP \cite{APPNP}, L2-GCN \cite{L2-GCN} FastGCN \cite{FastGCN}, 
GraphSAGE \cite{GraphSAGE}, Cluster-GCN \cite{Cluster-GCN}, SGC \cite{SGC}, GCNII \cite{GCNII}, 
and S$^2$GC \cite{S2GC}. 
Similarly, we testify SGNN with two different base models, namely SGNN-BT and SGNN-S$^2$GC. 
The experimental settings are same as the experiments of node clustering. 
For GraphSAGE, we use the mean operator by default and some notations are 
added if the extra operators are used. 
On citation networks, 
the learning rate is set as $0.01$, 
while it is $10^{-4}$ on Reddit. 
Since the nodes for training are less than 200 on citation networks, 
we use all training points in each iteration for all methods 
while we sample 256 points as a mini-batch for approaching expected 
features during backward training of SGNN. 
On Reddit, the batch size of all batch-based models is set as 512. 
We do not apply the early stopping criterion used in \cite{GCN} and 
the max iteration follows the setting of SGC.
The embedding dimensions of each module 
are the same as the setting in node clustering.
For the sake of fairness, we report the results obtained by SGNN with two modules 
using first-order operation. 
The forward training loss is defined in Eq. (\ref{loss_GCN}). 
Moreover, 
all compared models share an identical implementation of their mini-batch iterators, 
loss function and neighborhood sampler (when applicable). 
The balance coefficient of $\mathcal{L}_{FT}$ and $\mathcal{L}_{BT}$ is set 
as $1$ by default. 
We report the results averaged over 10 runs on citation datasets and 5 runs on Reddit 
in Table \ref{table_classification_small} and Table \ref{table_classification_reddit}. 
The hyper-parameters are shared for different datasets which are optimized on Cora.

\subsubsection{Performance} 
The results of compared methods in Table \ref{table_classification_small}
are taken from the corresponding papers. When the experimental results are 
missed, we run the publicly released codes and the corresponding records 
are superscripted by $\dag$. 
From Tables \ref{table_classification_small} and \ref{table_classification_reddit}, 
we conclude that SGNN outperforms the models with neighbor sampling 
such as GraphSAGE, FastGCN, and Cluster-GCN on citation networks 
and the performance of SGNN exceeds most models on Reddit. 
On simple citation networks, 
SGNN \textbf{loses the least accuracy compared with other batch-based models}, 
which is close to GCN. 
Owing to the separability of each module, the batch sampling
requires no neighbor sampling and causes no loss of graph information. 
Note that we simply employ the single-layer GCN as the base modules 
in our experiments, while some high-order methods that obtain competitive 
results are also available for SGNN, \textit{e.g.}, SGNN-S$^2$GC. 
As shown by the results of SGNN-S$^2$GC, SGNN can be indeed improved by employing
more complicated separable GNNs as base models.

Although \textbf{some methods achieve preferable results, they either fail to run 
or obtain unsatisfactory results on large-scale datasets}. 
Besides, we also show the comparison of efficiency on node classification task 
in Figure \ref{figure_time}.

\begin{table}[t]
    \centering
    \normalsize
    \renewcommand\arraystretch{1.1}
    \caption{Node Classification Results on Large Datasets}
    \label{table_OGB}
    \begin{tabular}{l c c c c c c c c}
        \hline
        
        \hline
        \multirow{2}{*}{Datasets} & \multicolumn{2}{|c|}{Products} & \multicolumn{2}{c}{Arxiv} \\
        & \multicolumn{1}{|c}{Test Acc} & {Val Acc} & \multicolumn{1}{|c}{Test Acc} & \multicolumn{1}{c}{Val Acc} \\
        \hline
        \hline
        MLP & 61.06 & 75.54 & 55.50 & 57.65 \\
        Softmax & 47.70 & N/A & 52.77 & N/A \\
        GraphSAGE & \underline{78.50} & \textbf{92.24} & 71.49 & \underline{72.77} \\
        Cluster-GCN & \textbf{78.97} & \underline{92.12} & N/A & N/A \\
        \hline
        GCN & 75.64 & 92.00 & \textbf{71.74} & \textbf{73.00} \\
        SGC & 68.87 & N/A & 68.78 & N/A \\
        S$^2$GC & 70.22 & N/A & 70.15 & N/A \\
        SGNN-FT & 68.10 & 86.26 & 65.25 & 64.00 \\
        SGNN-BT & 74.44 & 91.13 & \underline{71.57} & 71.66 \\
        \hline

        \hline
        
    \end{tabular}
\end{table}

\subsubsection{Visualization to Show Impact of the Decoupling}
In Figure \ref{figure_visualization}, we visualize the output of a 3-module SGNN and 
a 3-layer GCN to directly show that \textit{the decoupling would not cause the trivial features}, 
which corresponds to the theoretical conclusion in Section \ref{section_theo}. 
To show the benefit of the non-linearity brought by SGNN 
and the backward training, 
the convergence curves of SGC, SGNN-FT, and SGNN-BT are shown in Figure \ref{figure_convergence}. 
Note that the figure shows the variation of the \textit{final loss}. 
In SGNN, the final loss is the loss of $\mathcal{M}_L$, while it is the unique training loss in SGC. 
SGC with $L$-order graph operation is used. 
From this figure, we can conclude that: 
(1) The non-linearity does lead to a better loss value; 
(2) The backward training significantly decreases the loss. 
In summary, \textit{the decoupling empirically does not 
cause the negative impact. }

\subsection{Visualization} \label{section_visualization}
We also provide more visualizations 
in Figures \ref{figure_more_visualization_classification} and \ref{figure_more_visualization_clustering}. 
We run SGNN with 3 GNN modules and visualize the input and output of 
$\mathcal{M}_1$, $\mathcal{M}_2$, and $\mathcal{M}_3$ through $t$-SNE 
on Cora and Citeseer, for node clustering and node classification. 
The purpose of these two figures is to empirically investigate \textit{whether 
the decoupling would cause the accumulation of residuals and errors}. 
The experimental results support the theoretical results that are provided in Section \ref{section_theo}. 
One may concern the impact of $\eta$ (trade-off coefficient between $\mathcal{L}_{FT}$ and $\mathcal{L}_BT$) 
on the performance. 
We testify SGNN with different $\eta$ from $\{10^{-5}, 10^{-3}, 10^{-1}, 10^{1}, 10^3, 10^{5}\}$ and 
find that \textbf{$\eta=10^3$ usually leads to good results}. 
Accordingly, we only report results SGNN with $\eta=10^3$ in this paper. 
Moreover, we show the impact of $\eta$ to node clustering on Cora and Citeseer 
in Figure \ref{figure_eta}.

Moreover, we show the output of $\mathcal{M}_1$ of different periods in Figure 
\ref{figure_more_visualization_epoch}, in order to show the impact of the backward training. 
From the figure, we find that BT indeed affects the latent features, 
which is particularly apparent in Figure \ref{figure_epoch5}.

\subsection{Experiments on OGB Datasets} \label{section_OGB}

We further show some experiments of node classification on two OGB datasets, OGB-Products and OGB-Arxiv, 
which are downloaded from \url{https://ogb.stanford.edu/docs/nodeprop/}. 
The OGB-Products contains more than 2 million nodes and OGB-Arxiv contains 
more than 150 thousand nodes.  

It should be emphasized that we only use the simple single-layer GCN as the 
base module of SGNN. The performance can be further improved by incorporating 
different models such as GCNII, GIN, \textit{etc}. 
In particular, we only tune hyper-parameters on Arxiv, and we simply report 
results of SGNN with settings from Reddit. 

\textbf{Remark}: One may concern that the proposed SGNN cannot achieve the state-of-the-art results 
like node clustering. It should be emphasized that \textbf{an SGNN with $L$ modules should be regarded as a variant of an
$L$-layer GNN}. So it is more fair to compare SGNN with GCN. 
From Tables \ref{table_classification_small}, \ref{table_classification_reddit}, and \ref{table_OGB}, 
we can find that the performance of SGNN can approach GCN with high efficiency. 
A main reason why SGNN cannot outperforms other models on supervised tasks like node clustering 
is the difficulty of designing proper training losses for middle modules. 
It essentially originates from the black-box property of neural networks, 
\textit{i.e.}, \textbf{what kind of latent features is preferable for deeper layers}. 
From the experimental results, we find that it is not the optimal scheme to simply set 
the final supervised loss as the training loss of middle module. 
Instead, on node clustering, we prove that the greedy strategy would not accumulate the error 
and the experimental results validate the theoretical conclusion. 

\begin{figure}[t]
    \small
    \centering
    \setlength{\fboxrule}{0.1pt}
    \setlength{\fboxsep}{1pt}
    \subcaptionbox{Cora}{
        \includegraphics[width=0.45\linewidth]{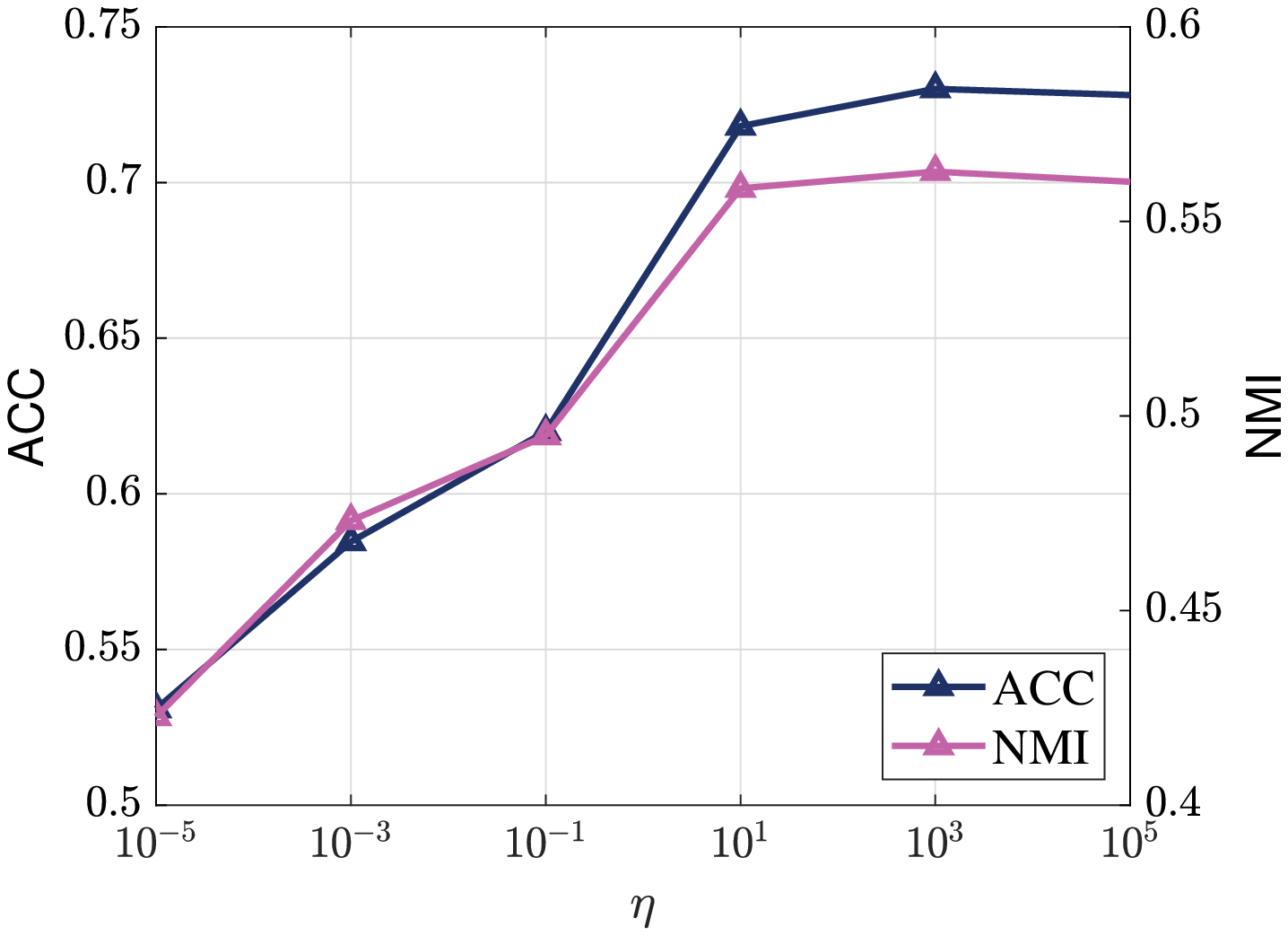}
    }
    \subcaptionbox{Citeseer}{
        \includegraphics[width=0.45\linewidth]{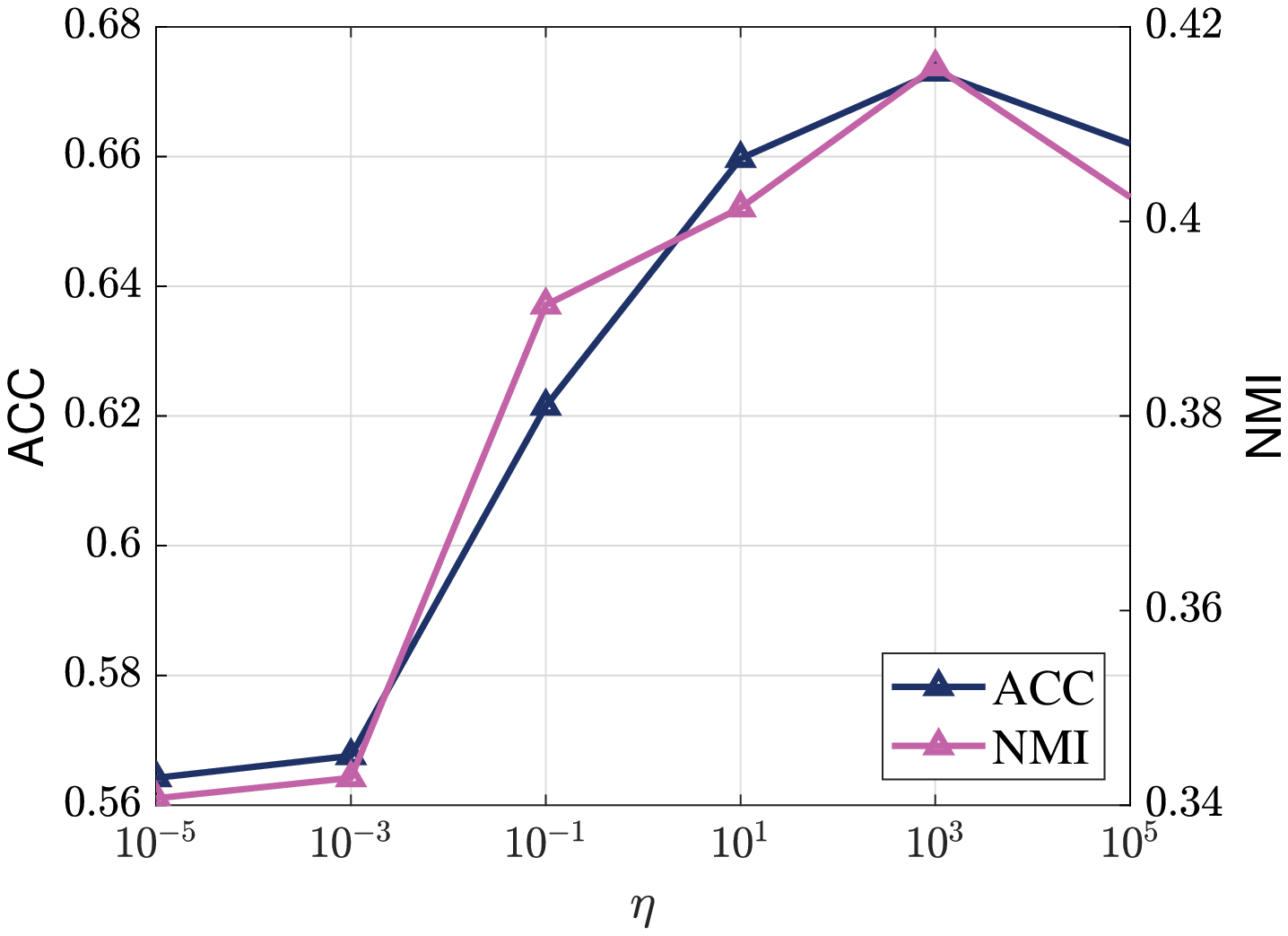}
    }
    \caption{Impact of $\eta$ to node clustering on Cora and Citeseer.}
    \label{figure_eta}
\end{figure}

\section{Conclusion and Future Works}
In this paper, 
we propose the Stacked Graph Neural Networks (SGNN). 
We first decouple a multi-layer GNN into multiple simple GNNs, which is 
formally defined as separable GNNs in our paper to ensure the availability 
of batch-based optimization without loss of graph information. 
The bottleneck of the existing stacked models is that the information delivery 
is only unidirectional, and therefore a backward training mechanism is 
developed to make the former modules perceive the latter ones. 
We also theoretically prove that the residual of linear SGNN would not accumulate 
in most cases for unsupervised graph tasks. 
The theoretical and experimental results show that the proposed 
framework is more than an efficient method and  
it may deserve further investigation in the future. 
The theoretical analysis focuses on linear SGNN and the generalization bound 
is also not investigated in this paper. Therefore, they will be the core of our 
future work. 
Moreover, since $\mathcal{L}_{FT}$ could be any losses and SGNN fails to achieve state-of-the-art results, 
how to set the most appropriate training loss, especially on supervised tasks, for each module will be also a crucial topic in our future works. 
It may help us understand how neural networks work.




\section{Proofs}

\subsection{Lemma for Proofs} \label{appendix_lemma}
\begin{lemma} \label{lemma_commute}
    For two given symmetric matrices $\bm A$ and $\bm B$, 
    $\bm A$ and $\bm B$ share the same eigenspace if and only if 
    $\bm A$ and $\bm B$ commute. 
\end{lemma}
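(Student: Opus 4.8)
The plan is to read the statement in the standard way: $\bm A$ and $\bm B$ \emph{share the same eigenspace} means they are simultaneously orthogonally diagonalizable, i.e.\ there is a single orthogonal matrix $\bm V$ with $\bm V^T \bm A \bm V$ and $\bm V^T \bm B \bm V$ both diagonal (equivalently, $\mathbb{R}^n$ admits an orthonormal basis of common eigenvectors). With this reading the forward implication is immediate: writing $\bm A = \bm V \bm \Lambda_A \bm V^T$ and $\bm B = \bm V \bm \Lambda_B \bm V^T$, we get $\bm A \bm B = \bm V \bm \Lambda_A \bm \Lambda_B \bm V^T = \bm V \bm \Lambda_B \bm \Lambda_A \bm V^T = \bm B \bm A$, since diagonal matrices commute. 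I would dispatch this in one line.

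For the converse — the substantive direction — I would argue through invariant subspaces. Suppose $\bm A \bm B = \bm B \bm A$. Let $\lambda_1, \dots, \lambda_m$ be the distinct eigenvalues of $\bm A$ and $V_i = \ker(\bm A - \lambda_i \bm I)$ the associated eigenspaces; since $\bm A$ is symmetric, $\mathbb{R}^n = V_1 \oplus \cdots \oplus V_m$ is an orthogonal decomposition. The key step is that each $V_i$ is $\bm B$-invariant: if $\bm A \bm x = \lambda_i \bm x$ then $\bm A(\bm B \bm x) = \bm B(\bm A \bm x) = \lambda_i (\bm B \bm x)$, so $\bm B \bm x \in V_i$. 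Then $\bm B$ restricted to $V_i$ is a symmetric operator on $V_i$ (symmetry of the restriction is inherited from symmetry of $\bm B$ together with orthogonality of the decomposition), so by the spectral theorem it has an orthonormal eigenbasis of $V_i$. Concatenating these bases over $i = 1, \dots, m$ yields an orthonormal basis of $\mathbb{R}^n$ whose vectors are eigenvectors of $\bm A$ (each lies in some $V_i$) and of $\bm B$ simultaneously; the orthogonal matrix $\bm V$ built from this basis diagonalizes both.

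The step I expect to require the most care is pinning down what "share the same eigenspace" should mean, since when $\bm A$ (or $\bm B$) has a repeated eigenvalue its eigenspaces are not one-dimensional and the individual eigendecompositions of $\bm A$ and $\bm B$ need not coincide — for instance $\bm A = \bm I$ commutes with every symmetric $\bm B$. The precise formulation the argument establishes is the existence of a common orthonormal eigenbasis, equivalently a common set of mutually orthogonal subspaces that simultaneously refines the eigenspace decompositions of $\bm A$ and of $\bm B$; I would state this convention explicitly just before the proof. The remaining details — that the restriction of a symmetric matrix to an invariant subspace is again symmetric, and that orthogonality across the $V_i$ makes the concatenated basis orthonormal — are routine and I would handle each in a sentence. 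This lemma is exactly the mechanism behind Assumption \ref{assumption_commute}, so it is worth recording cleanly before the proofs of Theorems \ref{theo_GAE} and \ref{theo_assumption_fail}.
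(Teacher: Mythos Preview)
Your proposal is correct and follows essentially the same route as the paper: the forward direction is identical, and for the converse the paper likewise shows that $\bm B$ preserves each eigenspace of $\bm A$ (phrased there as $\bm U^T \bm B \bm U$ being block-diagonal), then diagonalizes $\bm B$ on each block and assembles the resulting orthonormal bases into a common diagonalizing $\bm V$. Your subspace/restriction language and the paper's block-matrix language are two presentations of the same argument; your explicit remark on what ``share the same eigenspace'' must mean is a welcome clarification the paper leaves implicit.
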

\begin{proof}
    First, if $\bm A$ and $\bm B$ share the same eigenspace, then there exists 
    $\bm P$ such that 
    \begin{equation}
        \bm A = \bm P \bm \Lambda_A \bm P^T, \bm B = \bm P \bm \Lambda_B \bm P^T .
    \end{equation}
    Accordingly, we have 
    $\bm A \bm B = \bm P \bm \Lambda_A \bm \Lambda_B \bm P^T = \bm {B A}$. 

    Then, we turn to prove the converse. If $\bm A$ and $\bm B$ commute, 
    suppose that $\bm A \bm u = \lambda \bm u$ and then 
    \begin{equation}
        \bm{ABu} = \bm{BAu} = \lambda \bm{Bu}. 
    \end{equation}
    Apply eigenvalue decomposition, we have $\bm A = \bm U \bm \Lambda \bm U^T$. 
    Note that if $\bm{A} \bm u_1 = \lambda_1 \bm u_1$, 
    $\bm{A} \bm u_2 = \lambda_2 \bm u_2$, and $\lambda_1 \neq \lambda_2$, 
    then $\bm u_2^T \bm B \bm u_1 = 0$ since $\bm B \bm u_1$ is also an eigenvector 
    associated with $\lambda_1$. 
    Therefore, $\bm U^T \bm{B U}$ is a block-diagonal matrix, \textit{i.e.}, 
    \begin{equation}
        \bm U^T \bm B \bm U = 
        \left [
        \begin{array}{c c c c}
            \bm C_1 & & & \\
            & \bm C_2 & & \\
            & & \ddots & \\
            & & & \bm C_k \\
        \end{array}
        \right ] .
    \end{equation}
    Apply eigendecomposition to $\bm C_i$, 
    \begin{equation}
        \bm C_i = \bm U_i \bm \Lambda_i \bm U_i^T. 
    \end{equation}
    Denote 
    \begin{equation}
        \bm T = 
        \left [
        \begin{array}{c c c c}
            \bm U_1 & & & \\
            & \bm U_2 & & \\
            & & \ddots & \\
            & & & \bm U_k \\
        \end{array} ,
        \right ] 
    \end{equation}
    and $\bm V = \bm U \bm T$, which leads to 
    \begin{equation}
        \bm T^T \bm U^T \bm B \bm U \bm T = \bm \Lambda_B ~~ \textrm{and} ~~ 
        \bm T^T \bm U^T \bm A \bm U \bm T = \bm \Lambda_A
    \end{equation}
    where $\bm V^T \bm{V} = \bm I$. Hence, the lemma is proved. 
\end{proof}

\subsection{Proof of Theorem \ref{theo_GAE}} \label{appendix_proof_1}
\begin{proof}
    Use the notation $\ell(\cdot, \cdot, \cdot)$ as the reconstruction loss 
    \begin{equation}
    \ell(\bm P, \bm X, \bm W) = \|\bm P - \bm{P X W} \bm W^T \bm X^T \bm P^T\|. 
    \end{equation}
    According to the conditions, we define 
    \begin{equation}
        \bm E = \bm P - \bm X \bm X^T \Rightarrow \|\bm E\| = \varepsilon. 
    \end{equation}
    Apply SVD, we can factorize $\bm X$ as 
    \begin{equation}
        \bm X = \bm U_o \bm \Sigma_o \bm V_o^T + \bm U_e \bm \Sigma_e \bm V_e^T, 
    \end{equation}
    where $e = d-o$. Clearly, we have $\bm V_o^T \bm V_e = \bm 0$ and thus 
    \begin{equation}
        \begin{split}
            \bm P = \bm X \bm X^T + \bm E = \bm U_k \bm \Sigma_k^2 \bm U_k^T + \bm U_e \bm \Sigma_e^2 \bm U_e^T + \bm E . 
        \end{split}
    \end{equation}
    Therefore, $\bm H \bm H^T$ can be written as 
    \begin{equation}
        \bm P (\bm U_o \bm \Sigma_o \bm V_o^T + \bm U_e \bm \Sigma_e \bm V_e^T) \bm W \bm W^T (\bm V_o \bm \Sigma_o \bm U_o^T + \bm V_e \bm \Sigma_e \bm U_e^T) \bm P .
    \end{equation}
    Let $\bm W_0$ be a valid solution as 
    \begin{equation}
        \bm W_0 = 
        \begin{cases}
            \bm W_0  & s.t.~ \bm V_k^T \bm W_0 = \bm \Sigma_k^{-2} ~~ {\rm if ~ rank}(\bm X) = k \\
            [\bm W_r; \bm 0]  & s.t. ~ \bm V_r^T \bm W_r = \bm \Sigma_r^{-2} ~~ {\rm if ~ rank}(\bm X) = r \leq k
        \end{cases}
        . 
    \end{equation}
    By the above definition, $\bm V_e^T \bm W = \bm 0$.
    Therefore, with ${\rm rank}(\bm X) > k$, 
    \begin{align}
        \notag & \|\bm H \bm H^T - \bm P\| \\
        \notag = & \| \bm U_k \bm \Sigma_k^3 \bm V_k \bm W_0 \bm W_0^T \bm V_k \bm \Sigma_k^3 \bm U_k^T \\ 
        \notag & ~ + \bm E \bm U_k \bm \Sigma_k \bm V_k \bm W_0 \bm W_0^T \bm V_k \bm \Sigma_k^3 \bm U_k^T \\
        \notag & ~ + \bm U_k \bm \Sigma_k^3 \bm V_k \bm W_0 \bm W_0^T \bm V_k \bm \Sigma_k \bm U_k^T \bm E \\ 
        \notag & ~ + \bm E \bm U_k \bm \Sigma_k \bm V_k \bm W_0 \bm W_0^T \bm V_k \bm \Sigma_k \bm U_k^T \bm E - \bm P\| \\
        \notag = & \|\bm U_k \bm \Sigma_k^2 \bm U_k^T + \bm E \bm U_k \bm U_k^T + \bm U_k \bm U_k^T \bm E + \bm E \bm U_k \bm \Sigma_k^{-2} \bm U_k^T \bm E \\ 
        \notag & ~ - \bm U_k \bm \Sigma_k^2 \bm U_k^T - \bm U_e \bm \Sigma_e^2 \bm U_e^T - \bm E \| \\
        \notag = & \| \bm E \bm U_k \bm U_k^T + \bm U_k \bm U_k^T \bm E + \bm E \bm U_k \bm \Sigma_k^{-2} \bm U_k^T \bm E \\ 
        \notag & ~ - \bm U_e \bm \Sigma_e^2 \bm U_e^T - \bm E\| \\
        \notag \leq & \| \bm E \bm U_k \bm U_k^T + \bm U_k \bm U_k^T \bm E - \bm E\| + \|\bm E \bm U_k \bm \Sigma_k^{-2} \bm U_k^T \bm E\| \\ 
        \label{eq_raw_bound_full_rank} & ~ + \|\bm U_e \bm \Sigma_e^2 \bm U_e^T\| . 
    \end{align}
    Similarly, if ${\rm rank}(\bm X) = r \leq k$, 
    \begin{align}
        \notag & \|\bm H \bm H^T - \bm P\| \\
        = & \| \bm E \bm U_r \bm U_r^T + \bm U_r \bm U_r^T \bm E + \bm E \bm U_r \bm \Sigma_r^{-2} \bm U_r^T \bm E - \bm E\| \\
        \leq & \| \bm E \bm U_r \bm U_r^T + \bm U_r \bm U_r^T \bm E - \bm E\| + \|\bm E \bm U_r \bm \Sigma_r^{-2} \bm U_r^T \bm E\| . 
    \end{align}
    Now we focus on the general case, ${\rm rank}(\bm X) > k$ and the conclusion 
    can be easily extended into the low-rank case. Note that 
    \begin{equation}
        \| \bm E (\bm U_k \bm U_k^T - \frac{1}{2} \bm I)\| = \|(\bm U_k \bm U_k^T - \frac{1}{2} \bm I) \bm E\| , 
    \end{equation}
    and the first term can be written as 
    \begin{align*}
        & \| \bm E \bm U_k \bm U_k^T + \bm U_k \bm U_k^T \bm E - \bm E\|^2 \\
        = & \| \bm E (\bm U_k \bm U_k^T - \frac{1}{2} \bm I ) + (\bm U_k \bm U_k^T - \frac{1}{2} \bm I) \bm E \|^2 \\
        = & \|\bm E (\bm U_k \bm U_k^T - \frac{1}{2} \bm I)\|^2 + \|(\bm U_k \bm U_k^T - \frac{1}{2} \bm I) \bm E\|^2 \\ 
        & + 2 \langle \bm E (\bm U_k \bm U_k^T - \frac{1}{2} \bm I), (\bm U_k \bm U_k^T - \frac{1}{2} \bm I) \bm E \rangle \\
        = & 2 \|\bm E (\bm U_k \bm U_k^T - \frac{1}{2} \bm I)\|^2 + 2 s \|\bm E (\bm U_k \bm U_k^T - \frac{1}{2} \bm I)\|^2 \\
        = & 2(1+s) \|\bm E (\bm U_k \bm U_k^T - \frac{1}{2} \bm I)\|^2 ,
    \end{align*}
    where 
    \begin{equation}
        s = \frac{\langle \bm E (\bm U_k \bm U_k^T - \frac{1}{2} \bm I), (\bm U_k \bm U_k^T - \frac{1}{2} \bm I) \bm E \rangle}{\|\bm E (\bm U_k \bm U_k^T - \frac{1}{2} \bm I)\|^2} .
    \end{equation}
    Due to that 
    \begin{align*}
        & \| \bm E (\bm U_k \bm U_k^T - \frac{1}{2} \bm I)\|^2 = {\rm tr}(\bm E^2 (\bm U_k \bm U_k^T - \frac{1}{2} \bm I)) \\
        & = {\rm tr}(\bm E^2 (\bm U_k \bm U_k^T \bm U_k \bm U_k^T - 2 \times \frac{1}{2} \bm U_k \bm U_k^T + \frac{1}{4} \bm I)) \\
        & = \frac{1}{4} {\rm tr}(\bm E^2) = \frac{1}{4} \varepsilon^2 , 
    \end{align*}
    we have 
    \begin{equation}
        \begin{split}
        &\| \bm E \bm U_k \bm U_k^T + \bm U_k \bm U_k^T \bm E - \bm E\| \\ 
        = & \sqrt{2(1+s)} \|\bm E (\bm U_k \bm U_k^T - \frac{1}{2} \bm I)\| = \sqrt{\frac{1+s}{2}} \varepsilon . 
        \end{split}
    \end{equation}
    Let $\bm Q = (\bm U_k \bm U_k^T - \bm I / 2)$ and 
    $s$ can be reformulated as 
    \begin{equation}
        s = \cos (\bm E \bm Q, \bm Q \bm E) , 
    \end{equation}
    and we have the following definition
    \begin{equation}
        \theta_* = \arccos(s) = \theta(\bm E \bm Q, \bm Q \bm E) . 
    \end{equation}
    According to Lemma \ref{lemma_commute}, 
    Assumption \ref{assumption_commute} indicates that 
    \begin{equation}
        \bm E (\bm U_k \bm U_k^T - \frac{1}{2} \bm I) = (\bm U_k \bm U_k^T - \frac{1}{2} \bm I) \bm E 
        \Rightarrow s < 1. 
    \end{equation}
    And therefore, $\sqrt{(1 + s )/2} \in [0, 1)$. 
    Let $\delta = 1 - \sqrt{(1 + s)/2} = 1 - \cos (\theta_* / 2) > 0$ and the above equation can be reformulated as 
    \begin{equation}
        \| \bm E \bm U_k \bm U_k^T + \bm U_k \bm U_k^T \bm E - \bm E\| = (1 - \delta) \varepsilon .
    \end{equation}
    The second term can be formulated as 
    \begin{align}
        \|\bm E \bm U_k \bm \Sigma_k^{-2} \bm U_k^T \bm E\| \leq & \|\bm U_k \bm \Sigma_k^{-2} \bm U_k^T\| \varepsilon^2 \\
        = & \varepsilon^2 \sqrt{{\rm tr}(\bm U_k \bm \Sigma_k^{-4} \bm U_k^T)} \\ 
        = & \varepsilon^2 \sqrt{{\rm tr}(\bm \Sigma^{-4} \bm U_k^T \bm U_k)} \\
        = & \varepsilon^2 \sqrt{\sum_{i=1}^k \frac{1}{\sigma_i^{4}}} \leq \sqrt{r}\frac{\varepsilon^2}{\sigma_{k}^2} ,
    \end{align}
    while the third term is  
    \begin{equation}
        \|\bm U_e \bm \Sigma_e^2 \bm U_e^T\| = \|\bm \Sigma_e^2\| = (\sum_{i=k+1}^n \sigma_i^4)^{1/2} \leq \sqrt{n-k} \sigma_*^2 . 
    \end{equation}
    To sum up, the error of $\mathcal{M}$ is bounded as 
    \begin{align} \label{eq_bound_full_rank}
        & \|\bm H \bm H^T - \bm P\| \leq (1-\delta)\varepsilon + \sqrt{k} \frac{\varepsilon^2}{\sigma_{k}^2} + \sqrt{e} \sigma_*^2 .
    \end{align}
    If 
    \begin{equation}
        \sqrt{k}\frac{\varepsilon}{\sigma_{k}^2} \leq \frac{\delta}{2} 
        \Rightarrow \varepsilon \leq \frac{\delta \sigma_{k}^2}{2 \sqrt{k}} 
    \end{equation}
    and 
    \begin{equation}
        \sqrt{n-k} \sigma_*^2 \leq \frac{\delta}{2} \Rightarrow \sigma_* \leq \sqrt{\frac{\delta}{2 \sqrt{n-k}}} , 
    \end{equation}
    then $\|\bm H \bm H^T - \bm P\| \leq \varepsilon$. 
    In other words, when $\varepsilon \leq \mathcal{O}(\delta)$ and $\sigma_* \leq \mathcal{O}(\sqrt{\delta})$, 
    the error will be bounded by $\varepsilon$. 

    For the case that ${\rm rank}(\bm X) = r < k$, it is not hard to verify that 
    \begin{align}
        \notag & \|\bm H \bm H^T - \bm P\| \\
        \leq & \| \bm E \bm U_r \bm U_r^T + \bm U_r \bm U_r^T \bm E - \bm E\| + \|\bm E \bm U_r \bm \Sigma_r^{-2} \bm U_r^T \bm E\| \\ 
        \leq & (1-\delta) \varepsilon + \sqrt{r} \frac{\varepsilon^2}{\sigma_r^2}. 
    \end{align}
    As 
    \begin{equation}
        \sqrt{r}\frac{\varepsilon}{\sigma_{r}^2} \leq \frac{\delta}{2} 
        \Rightarrow \varepsilon \leq \frac{\delta \sigma_{r}^2}{2 \sqrt{r}} = \mathcal{O}(\delta) 
    \end{equation}
    and $\sigma_* = 0 \leq \mathcal{O}(\sqrt{\delta})$, we get $\|\bm H \bm H^T - \bm P\| \leq \varepsilon$. 
    Hence, we have 
    \begin{equation}
        \min_{\bm W} \ell(\bm P, \bm X, \bm W) \leq \ell(\bm P, \bm X, \bm W_0) \leq \varepsilon, 
    \end{equation}
    and the theorem is proved. 
\end{proof}

\subsection{Proof of Theorem \ref{theo_assumption_fail}} \label{appendix_proof_2}

\begin{proof}
    According to Ineq. (\ref{eq_raw_bound_full_rank}), 
    \begin{equation}
        \begin{split}
        \|\bm H \bm H^T - \bm P\| \leq \| & \bm E \bm U_k \bm U_k^T + \bm U_k \bm U_k^T \bm E - \bm E\| \\ 
        & + \|\bm E \bm U_k \bm \Sigma_k^{-2} \bm U_k^T \bm E\| + \|\bm U_e \bm \Sigma_e^2 \bm U_e^T\| .
        \end{split}
    \end{equation}
    Suppose that $\bm E = \bm U \bm \Lambda \bm U^T$ so that $\bm P = \bm U (\bm S + \bm \Lambda) \bm U^T$ 
    where $\bm S = \bm \Sigma \bm \Sigma^T = {\rm diag}(\bm \Sigma_k^2; \bm 0)$. So, we have
    \begin{align*}
        \bm P & = \bm U (\bm S + \bm \Lambda) \bm U^T \\
        & = [\bm U_r, \bm U_e] 
        \left [
        \begin{array}{c c}
            \bm \Sigma_r^2 + \bm \Lambda_r & \bm 0 \\
            \bm 0 & \bm \Sigma_e^2 + \bm \Lambda_e
        \end{array}
        \right ]
        \left [
        \begin{array}{c}
            \bm U_r^T \\
            \bm U_e^T 
        \end{array}
        \right ] \\
        & = \bm U_k (\bm \Sigma_k^2 + \bm \Lambda_k) \bm U_k^T + \bm U_e (\bm \Sigma_e^2 + \bm \Lambda_e) \bm U_e^T . 
    \end{align*}
    Let $\bm V_k \bm W = (\bm \Sigma_k^3 + \bm \Lambda_k \bm \Sigma_k)^\dag (\bm \Sigma_k^2 + \bm \Lambda_k)^{1/2}$ and we have 
    \begin{align*}
            & \|\bm P - \bm H \bm H^T\| \\
            = & \|\bm U (\bm S + \bm \Lambda) \bm \Sigma \bm V^T \bm W \bm W^T \bm V \bm \Sigma^T (\bm S + \bm \Lambda) \bm U^T - \bm P\| \\
            = & \|\bm U_k (\bm \Sigma_k^3 + \bm \Lambda_k \bm \Sigma_k) \bm V_k^T \bm W \bm W^T \bm V_k (\bm \Sigma_k^3 + \bm \Lambda_k \bm \Sigma_k) \bm U_k^T - \bm P\| \\
            = & \|\bm U_k \hat{\bm I} (\bm \Sigma_k^2 + \bm \Lambda_k) \hat{\bm I} \bm U_k^T - \bm U_k (\bm \Sigma_k^2 + \bm \Lambda_k) \bm U_k^T \\ 
            & - \bm U_e (\bm \Sigma_e^2 + \bm \Lambda_e) \bm U_e^T \| \\
            = & \|\bm U_e (\bm \Sigma_e^2 + \bm \Lambda_e) \bm U_e^T\|
            \leq \|\bm \Sigma_e^2\| + \|\bm \Lambda_e\| \\
            \leq &  \varepsilon + \mathcal{O}(\sigma_*^2) ,
    \end{align*}
    where $\hat{\bm I} = (\bm \Sigma_k^3 + \bm \Lambda_k \bm \Sigma_k) (\bm \Sigma_k^3 + \bm \Lambda_k \bm \Sigma_k)^{\dag} = \bm \Sigma_k (\bm \Sigma_k^2 + \bm \Lambda_k) (\bm \Sigma_k^2 + \bm \Lambda_k)^\dag \bm \Sigma_k^{-1}$. 
    Clearly, $\hat{\bm I} = \mathbbm{1} \{\bm \Sigma_r^2 + \bm \Lambda_r \neq 0\}$. Therefore, 
    Hence, the theorem is proved. 
\end{proof}

\begin{corollary}
    If Assumption \ref{assumption_commute} does not hold and ${\rm rank}(\bm X) \leq k$, 
    then there exists $\bm W \in \mathbb{R}^{d \times k}$ so that $\|\bm P - \bm H \bm H\| \leq \varepsilon$.  
\end{corollary}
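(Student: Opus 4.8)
The plan is to specialize the construction used in the proof of Theorem \ref{theo_assumption_fail} and to notice that, when ${\rm rank}(\bm X) \le k$, the single term that was estimated there by $\mathcal{O}(\sigma_*^2)$ is identically zero. First I would invoke Lemma \ref{lemma_commute}: since Assumption \ref{assumption_commute} fails, $\bm X \bm X^T$ and $\bm E = \bm P - \bm X \bm X^T$ commute, hence admit a common orthonormal eigenbasis, so $\bm X \bm X^T = \bm U \bm S \bm U^T$ and $\bm E = \bm U \bm \Lambda \bm U^T$ with $\|\bm\Lambda\| = \|\bm E\| = \varepsilon$. Writing $r = {\rm rank}(\bm X) \le k$ and ordering the eigenbasis so that $\bm S = {\rm diag}(\bm\Sigma_r^2;\bm 0)$ with $\bm\Sigma_r \succ \bm 0$, the SVD of $\bm X$ takes the form $\bm X = \bm U_r \bm\Sigma_r \bm V_r^T$, and $\bm P = \bm U_r(\bm\Sigma_r^2 + \bm\Lambda_r)\bm U_r^T + \bm U_e \bm\Lambda_e \bm U_e^T$, where $\bm U_e$ collects the remaining $n-r$ eigenvectors (so that $\bm\Sigma_e = \bm 0$, i.e. $\sigma_* = \sigma_{r+1}(\bm X) = 0$).

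Next I would run the same computation as in Theorem \ref{theo_assumption_fail}, with $\bm U_r$ playing the role of $\bm U_k$. Using $\bm U^T \bm U_r = [\bm I_r;\bm 0]$ one gets $\bm P \bm X = \bm U_r(\bm\Sigma_r^3 + \bm\Lambda_r \bm\Sigma_r)\bm V_r^T$; choosing $\bm W \in \mathbb{R}^{d\times k}$ with $\bm V_r^T \bm W = (\bm\Sigma_r^3 + \bm\Lambda_r \bm\Sigma_r)^\dagger (\bm\Sigma_r^2 + \bm\Lambda_r)^{1/2} [\,\bm I_r \ \bm 0\,]$ (padding to width $k$, which is possible since $r \le k$) and recalling that all these factors are diagonal with $\bm\Sigma_r$ invertible, the matrix $(\bm\Sigma_r^3 + \bm\Lambda_r \bm\Sigma_r)(\bm\Sigma_r^3 + \bm\Lambda_r \bm\Sigma_r)^\dagger$ is the orthogonal projection onto ${\rm supp}(\bm\Sigma_r^2 + \bm\Lambda_r)$, so it fixes $(\bm\Sigma_r^2 + \bm\Lambda_r)^{1/2}$ and one obtains $\bm H \bm H^T = \bm U_r(\bm\Sigma_r^2 + \bm\Lambda_r)\bm U_r^T$. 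Subtracting this from $\bm P$, the $\bm U_r$-block cancels exactly and $\bm P - \bm H \bm H^T = \bm U_e \bm\Lambda_e \bm U_e^T$ — which is precisely the $\bm\Sigma_e = \bm 0$ instance of the bound in Theorem \ref{theo_assumption_fail} — whence $\|\bm P - \bm H \bm H^T\| = \|\bm\Lambda_e\| \le \|\bm\Lambda\| = \varepsilon$, as claimed.

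The index bookkeeping and the verification that the chosen $\bm W$ produces the stated $\bm H \bm H^T$ are routine. The only point that needs a little care is the well-definedness of $(\bm\Sigma_r^2 + \bm\Lambda_r)^{1/2}$, for which I would reuse the same justification as in the proof of Theorem \ref{theo_assumption_fail} (it amounts to the relevant block of $\bm P$ being positive semidefinite). I do not expect any genuine obstacle: once $\sigma_* = \sigma_{r+1}(\bm X) = 0$ is observed, the corollary is just the $\mathcal{O}(\sigma_*^2) = 0$ specialization of the preceding theorem.
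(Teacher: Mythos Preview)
Your proposal is correct and follows essentially the same route as the paper: simultaneous diagonalization of $\bm X\bm X^T$ and $\bm E$ via Lemma \ref{lemma_commute}, the same choice of $\bm W$ through $(\bm\Sigma_r^3+\bm\Lambda_r\bm\Sigma_r)^\dagger(\bm\Sigma_r^2+\bm\Lambda_r)^{1/2}$, and the same cancellation leaving $\|\bm U_e\bm\Lambda_e\bm U_e^T\|\le\varepsilon$. Your write-up is in fact slightly more careful than the paper's (you make explicit the padding to width $k$, the role of the projector $(\bm\Sigma_r^3+\bm\Lambda_r\bm\Sigma_r)(\bm\Sigma_r^3+\bm\Lambda_r\bm\Sigma_r)^\dagger$, and the PSD issue for the square root).
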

\begin{proof}
    Suppose that $\bm E = \bm U \bm \Lambda \bm U^T$ so that $\bm P = \bm U (\bm S + \bm \Lambda) \bm U^T$ 
    where $\bm S = \bm \Sigma \bm \Sigma^T = {\rm diag}(\bm \Sigma_r^2; \bm 0)$. 
    Then we have 
    \begin{equation}
        \begin{split}
        \bm P &= \bm U (\bm S + \bm \Lambda) \bm U^T = [\bm U_r, \bm U_e] 
        \left [
        \begin{array}{c c}
            \bm \Sigma_r^2 + \bm \Lambda_r & \bm 0 \\
            \bm 0 & \bm \Lambda_e
        \end{array}
        \right ]
        \left [
        \begin{array}{c}
            \bm U_r^T \\
            \bm U_e^T 
        \end{array}
        \right ] \\
        & = \bm U_r (\bm \Sigma_r^2 + \bm \Lambda_r) \bm U_r^T + \bm U_e \bm \Lambda_e \bm U_e^T .
        \end{split}
    \end{equation}
    Let $\bm W = [\bm W_r; \bm 0]$ subjected to $\bm V_r^T \bm W_r = (\bm \Sigma_r^3 + \bm \Lambda_r \bm \Sigma_r)^{\dag} (\bm \Sigma^2 + \bm \Lambda_r)^{1/2}$. 
    Then 
    \begin{align*}
        & \|\bm H \bm H^T - \bm P\| \\
        = & \|\bm U (\bm S + \bm \Lambda) \bm \Sigma \bm V^T \bm W \bm W^T \bm V \bm \Sigma^T (\bm S + \bm \Lambda) \bm U^T - \bm P\| \\
        = & \|\bm U_r (\bm \Sigma_r^3 + \bm \Lambda_r \bm \Sigma_r) \bm V_r^T \bm W \bm W^T \bm V_r (\bm \Sigma_r^3 + \bm \Lambda_r \bm \Sigma_r) \bm U_r^T - \bm P\| \\
        = & \|\bm U_r \hat{\bm I} (\bm \Sigma_r^2 + \bm \Lambda_r) \hat{\bm I} \bm U_r^T - \bm U_r (\bm \Sigma_r^2 + \bm \Lambda_r) \bm U_r^T - \bm U_e \bm \Lambda_e \bm U_e^T \| ,
    \end{align*}
    where $\hat{\bm I} = (\bm \Sigma_r^3 + \bm \Lambda_r \bm \Sigma_r) (\bm \Sigma_r^3 + \bm \Lambda_r \bm \Sigma_r)^{\dag} = \bm \Sigma_r (\bm \Sigma_r^2 + \bm \Lambda_r) (\bm \Sigma_r^2 + \bm \Lambda_r)^\dag \bm \Sigma_r^{-1}$. 
    Clearly, $\hat{\bm I} = \mathbbm{1} \{\bm \Sigma_r^2 + \bm \Lambda_r \neq 0\}$. Therefore, 
    \begin{equation}
        \|\bm H \bm H^T - \bm P\| = \| \bm U_e \bm \Lambda_e \bm U_e^T\| = \|\bm \Lambda_e\| \leq \varepsilon. 
    \end{equation}
    Hence, the corollary is proved. 
\end{proof}

\bibliographystyle{IEEEtran}
\bibliography{SGNN.bib}

\begin{IEEEbiography}[{\includegraphics[width=1in,height=1.25in,clip,keepaspectratio]{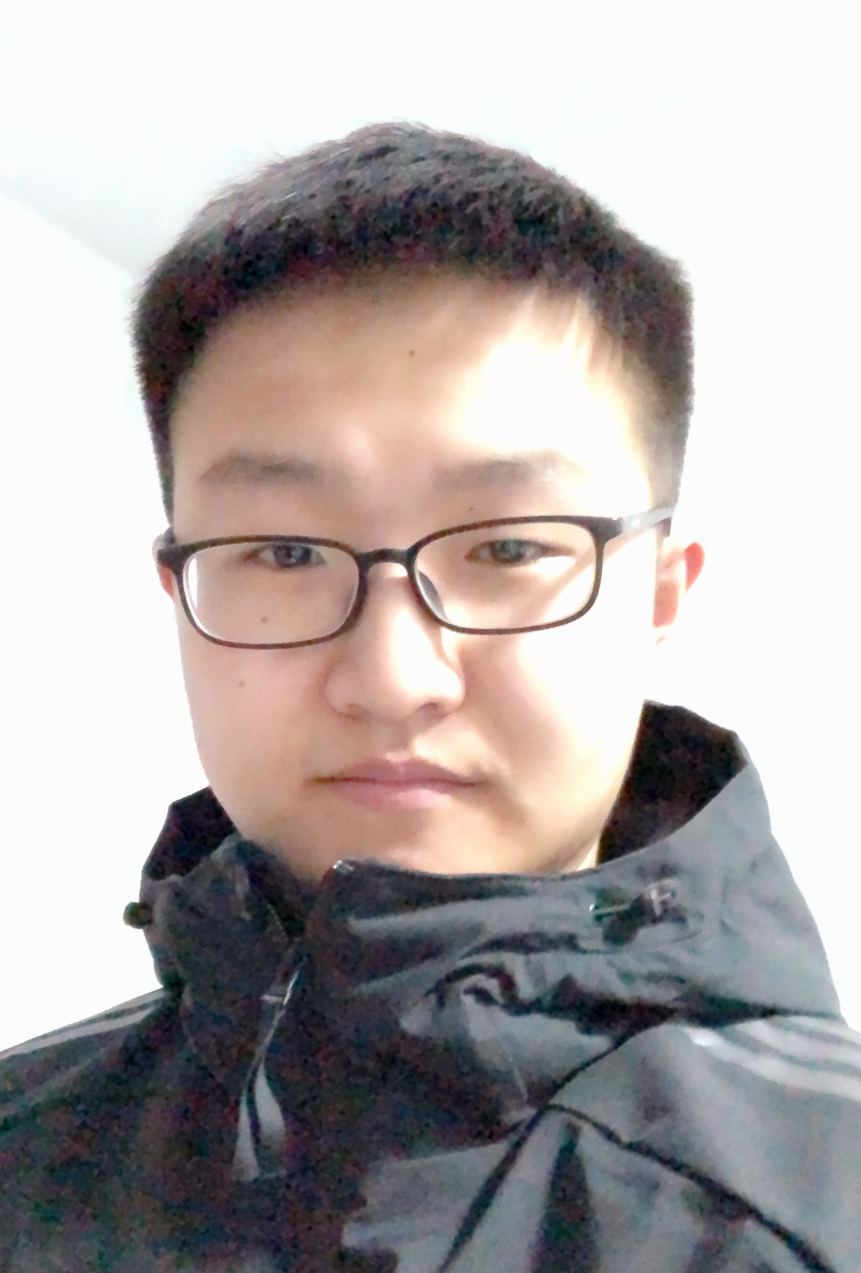}}]{Hongyuan Zhang}
    received the B.E. degree in software engineering from Xidian Unibersity, Xi'an, China in 2019 
    and received the Ph.D. degree from the School of Computer Science and the School of Artificial Intelligence, Optics and Electronics (iOPEN), Northwestern Polytechnical University, Xi'an, China in 2024. 
\end{IEEEbiography}

\begin{IEEEbiography}[{\includegraphics[width=1in,height=1.25in,clip,keepaspectratio]{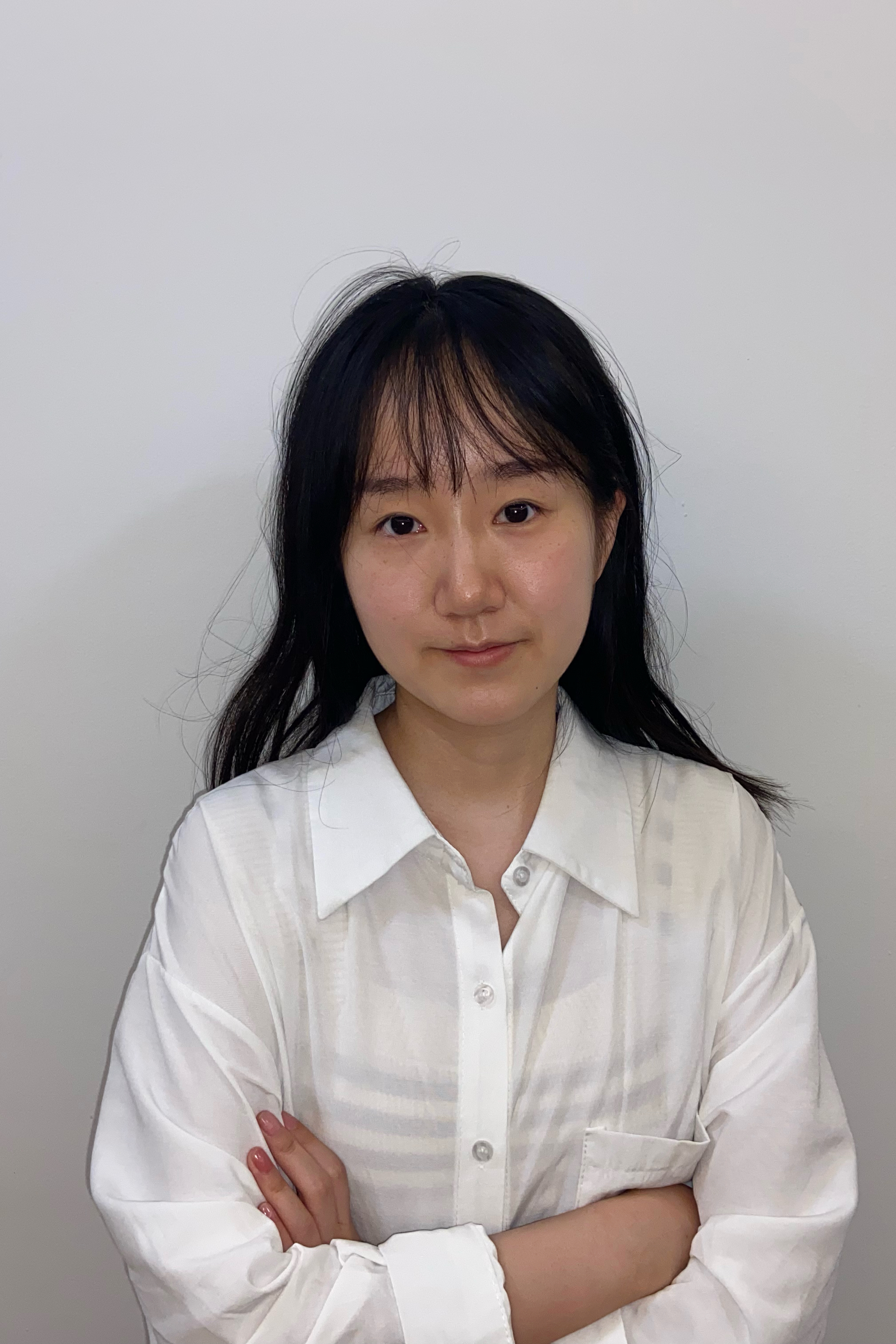}}]{Yanan Zhu}
    received the B.E. degree in computer science and technology from Nanjing University of Aeronautics and Astronautics, Nanjing, China, in 2021 
    and received the master's degree with School of Computer Science and School of Artificial Intelligence, Optics and Electronics (iOPEN), Northwestern Polytechnical University, Xi'an, China in 2024.
\end{IEEEbiography}

\begin{IEEEbiographynophoto}{Xuelong Li} (M'02-SM'07-F'12) 
    is the Chief Technology Officer (CTO) and Chief Scientist of the China Telecom. 
\end{IEEEbiographynophoto}

\end{document}